\documentclass{article}
\usepackage{iclr2027_conference,times}
\usepackage[utf8]{inputenc}
\usepackage[T1]{fontenc}
\usepackage{url}
\usepackage{microtype}
\usepackage{graphicx}
\usepackage{fontawesome5}
\graphicspath{{figures/}}
\usepackage{booktabs}
\usepackage{bbm}
\usepackage{threeparttable}
\usepackage{amsmath,amssymb,mathtools,amsthm}
\usepackage{xcolor}
\usepackage{enumitem}
\usepackage{multirow}
\usepackage{array}
\usepackage{etoolbox}
\usepackage{arxiv-layout}
\usepackage{hyperref}
\usepackage[capitalize,noabbrev]{cleveref}

\theoremstyle{plain}
\newtheorem{theorem}{Theorem}[section]
\newtheorem{proposition}[theorem]{Proposition}
\newtheorem{lemma}[theorem]{Lemma}
\newtheorem{corollary}[theorem]{Corollary}
\theoremstyle{definition}
\newtheorem{definition}[theorem]{Definition}
\newtheorem{assumption}[theorem]{Assumption}
\theoremstyle{remark}
\newtheorem{remark}[theorem]{Remark}

\newcommand{\R}{\mathbb{R}}
\newcommand{\1}{\mathbbm{1}}
\newcommand{\E}{\mathbb{E}}
\newcommand{\Rad}{\mathfrak{R}}
\newcommand{\Var}{\operatorname{Var}}
\newcommand{\Cov}{\operatorname{Cov}}
\newcommand{\tr}{\operatorname{tr}}
\newcommand{\Risk}{\mathcal{R}}
\newcommand{\MSCE}{\operatorname{MSCE}}
\newcommand{\HET}{\operatorname{HET}}
\newcommand{\Vproc}{\mathcal V_{\mathrm{proc}}}
\DeclareMathOperator*{\argmin}{arg\,min}

\makeatletter
\patchcmd{\@maketitle}
  {\begin{tabular}[t]{l}\bf\rule{\z@}{24pt}\@author\end{tabular}}
  {\centerline{\begin{tabular}[t]{c}\bf\rule{\z@}{24pt}\@author\end{tabular}}}
  {}{\PackageWarning{main}{Could not center the ICLR author block}}
\makeatother

\title{\Large How Synthetic Labels Improve Conformal Prediction\\[-0.1ex]
A Perspective on Conditional Coverage}
\author{Qianyi Chen \qquad Bo Li\\
School of Economics and Management, Tsinghua University}

\iclrfinalcopy
\hypersetup{
  pdftitle={How Synthetic Labels Improve Conformal Prediction: A Perspective on Conditional Coverage},
  pdfauthor={Qianyi Chen and Bo Li},
  colorlinks=true,
  linkcolor=blue!55!black,
  citecolor=green!40!black,
  urlcolor=blue!65!black
}
\begin{document}
\maketitle
\lhead{}

\begin{abstract}
Conformal prediction provides distribution-free finite-sample marginal coverage, but post-hoc calibration data may be too scarce to learn how uncertainty varies across inputs. Meanwhile, abundant covariates can often be labeled cheaply by domain models or general-purpose language models. We study whether these synthetic labels can improve conditional coverage when only a small trusted sample is available. Building on score-quantile regression, we introduce prediction-powered quantile learning: a synthetic-labeled pool estimates pinball risk, paired trusted and synthetic outcomes correct its bias, and an independent trusted split performs final conformalization. Profiling pinball risk over scalar corrections reveals that population conditional-coverage error is its functional gradient; the corresponding Hessian removes global shifts and weights remaining shape error by boundary density. Composing this geometry with prediction-powered learning yields a three-resource expansion and a benefit--cost rule for synthetic power. Across eight regression benchmarks, synthetic-powered quantile learning substantially improves downstream conditional coverage while preserving marginal validity and producing more compact prediction sets. A human-rating study finds similar gains from external LLM labels and exposes a quality--quantity--cost tradeoff.
Code and data are available at \mbox{\small\href{https://github.com/Cqyiiii/synthetic-label-conformal}{\raisebox{-0.1ex}{\faGithub}}\hspace{0.3em}\url{https://github.com/Cqyiiii/synthetic-label-conformal}}.
\end{abstract}

\section{Introduction}
High-stakes machine learning requires reliable uncertainty estimates, not only accurate point predictions. Conformal prediction (CP) is attractive because it is distribution free, model agnostic, and gives finite-sample guarantees that a prediction set contains the future outcome at a user-specified rate~\cite{vovk2005algorithmic,shafer2008tutorial}.

The canonical split-conformal guarantee is nevertheless marginal: it averages coverage over the test population. A procedure may therefore under-cover difficult inputs while compensating with overly wide sets on easy ones. This issue is especially important under heteroscedasticity, when predictive uncertainty varies substantially across the feature space. Although exact distribution-free feature-conditional coverage is impossible in general with finite samples~\cite{vovk2012conditional,lei2014distribution,foygel2021limits}, improving the conditional coverage behavior of conformal procedures remains a central practical objective.

Rectified Conformal Prediction (RCP)~\cite{plassier2025rectifying} addresses this problem by learning a covariate-dependent quantile of a fixed nonconformity score, then conformalizing its residual with a scalar correction. The quantile adapts the threshold to local difficulty, while the correction restores finite-sample marginal coverage. Because pinball-loss excess risk controls score-quantile error, conditional-coverage improvement becomes a quantile-learning problem rather than a full conditional-distribution problem~\cite{plassier2025rectifying,chen2025colorful}.

The main obstacle is data scarcity. CP is usually applied post hoc to an already trained predictor, so the original training data may be unavailable and the only new trusted outcomes may form a modest calibration reservoir. RCP must divide this reservoir between learning a heterogeneous score threshold and applying the final conformal correction. With only tens or hundreds of trusted scores, the threshold can be unstable even when marginal coverage is reliable.

Classical semi-supervised learning offers additional covariates but no additional outcomes. In general, the marginal distribution $P_X$ does not identify the conditional law $P_{Y\mid X}$: two populations may share exactly the same $P_X$ while having arbitrarily different response mechanisms. Unlabeled covariates can therefore help only under additional structural assumptions, such as smoothness, low-density separation or cluster structure, and manifold assumptions~\cite{chapelle2006ssl}. Rather than infer $P_{Y\mid X}$ from $P_X$ alone, we study a richer prediction-powered regime. We observe a small sample with trusted outcomes and a much larger sample of covariates from the same target population; an independently trained auxiliary labeler---for example, an existing domain model or a general-purpose LLM---provides synthetic outcomes at these covariates. Such labels may be biased, but they carry outcome-like information that unlabeled covariates alone do not provide.

This regime is increasingly common. Computational models can assign surrogate properties to millions of candidate molecules before expensive laboratory assays, and automated inspection systems can score sensor records before destructive verification. Recent work uses unlabeled covariates or generated scores to refine conformal sets when trusted data are scarce~\cite{seedat2023improving,zhou2025semi,bashari2025syntheticpowered}. Our question is different: once an independent trusted split guarantees marginal validity, can synthetic labels improve conditional coverage?

We address this question by augmenting the quantile-learning component of conformal prediction. Inspired by prediction-powered inference~\cite{angelopoulos2023prediction,angelopoulos2023ppi++}, we construct a bias-corrected empirical pinball risk: a large synthetic-labeled sample estimates an auxiliary risk, while paired trusted and synthetic outcomes correct its bias. We apply the idea to conditional outcome and nonconformity-score quantiles, and compare it with supervised, semi-supervised, transport, pretraining, and positive-augmentation baselines. Every method reserves an untouched trusted split for final calibration.

The final correction is not merely a validity device; it changes the value of information learned upstream. We formalize this through the profiled pinball risk obtained after optimizing over the scalar correction. Its functional gradient is the population-calibrated coverage error, and its Hessian is the density-weighted operator that removes global threshold shifts. Composing this geometry with the learner's influence function identifies the true--synthetic covariance that can affect deployment. The resulting expansion separates trusted quantile learning, synthetic-pool estimation, and final calibration, and yields an explicit power rule balancing covariance benefit against synthetic noise.

Our experiments complete this learning-to-deployment story. Across eight classical regression benchmarks and 30 common splits, synthetic-powered quantile learning substantially improves downstream conditional-coverage diagnostics and consistently produces more compact prediction sets. Intermediate pinball and post-conformal analyses connect better quantile estimation to final coverage, while a matched permutation control separates input-specific information from generic stabilization. A realistic HelpSteer2 study further shows that ratings from general-purpose LLMs can improve multivariate conditional coverage under a practical quality--quantity--cost tradeoff.

\paragraph{Contributions.}
\begin{itemize}[leftmargin=*,itemsep=2pt,topsep=3pt]
    \item We formulate a setting with abundant covariates, few trusted labels, and an auxiliary labeler, and propose prediction-powered quantile learning for conditional coverage.
    \item Profiling over the final scalar correction identifies calibrated coverage error as a functional gradient and the density-weighted post-conformal map as its Hessian. Combining this geometry with prediction-powered influence yields a three-resource expansion and a benefit--cost power rule.
    \item Across eight regression datasets and HelpSteer2, we link synthetic-powered quantile learning to deployed coverage and separate genuine pairing from generic stabilization.
\end{itemize}

\subsection{Related Works}\label{sec:related_works}
\paragraph{Conditional coverage and adaptive conformal prediction.}
Split CP guarantees marginal coverage, while exact distribution-free feature-conditional validity is generally unattainable in finite samples~\cite{vovk2012conditional,lei2014distribution,foygel2021limits}. Existing work therefore studies group-conditional guarantees, localized coverage, training-conditional guarantees, and held-out tests of conditional miscoverage~\cite{jung2023batch,ding2023class,kiyani2024conformal,gibbs2025conformal,bian2023training,hore2024local,cauchois2021knowing,braun2025conditionalcoveragediagnosticsconformal}. Another line improves conditional behavior by changing the score or estimating a richer conditional distribution~\cite{romano2019conformalized,lei2018distribution,xie2024boosted,izbicki2022cd,plassier2025probabilistic,colombo2024normalizing}. Most closely related, RCP estimates a conditional score quantile and conformalizes its residual~\cite{plassier2025rectifying}. Colorful Pinball emphasizes boundary density~\cite{chen2025colorful}; we characterize how that density shapes post-conformal error.

\paragraph{Semi-supervised and synthetic-powered conformal prediction.}
Additional covariates alone do not identify $P_{Y\mid X}$: the same marginal $P_X$ is compatible with arbitrarily different conditional response laws. Classical semi-supervised learning therefore relies on extra structure, such as smoothness, low-density separation or cluster assumptions, and manifold geometry~\cite{chapelle2006ssl}. Semi-supervised conformal methods use unlabeled covariates for interval refinement, reweighting, or score matching~\cite{seedat2023improving,zhou2025semi}. Synthetic-Powered Predictive Inference (SPPI) instead combines trusted and generated scores through a rank-based transporter and targets efficient calibration when trusted calibration data are scarce~\cite{bashari2025syntheticpowered}. We evaluate these approaches, but ask a different question: how can outcome-like synthetic labels improve conditional coverage when an independent trusted split is reserved for final conformalization?

\paragraph{Prediction-powered learning and inference.}
Prediction-powered inference combines a small trusted sample with abundant model predictions through a bias-corrected control variate~\cite{angelopoulos2023prediction,angelopoulos2023ppi++,ji2025predictions}. Related work studies prediction-powered optimization, semi-supervised quantile estimation, calibrated control variates, and multiple prediction sources~\cite{shoham2025predictionpowered}. We use the same correction to learn conditional quantiles, but the distinctive challenge is downstream: only learning error that survives the final correction can improve conditional coverage.

\section{Preliminaries and Problem Setting}\label{sec:preliminaries}
\paragraph{Split conformal prediction.}
Let $(X,Y)\sim P$ be a future test pair. We condition on a predictive model trained independently of all samples introduced below. A fixed nonconformity score $S(x,y)\in\R$ measures how incompatible a candidate outcome $y$ is with input $x$; for regression, a standard choice is $S(x,y)=|y-\widehat\mu(x)|$ or its multivariate $\ell_\infty$ analogue. Given an independent calibration sample $\{(X_r^c,Y_r^c)\}_{r=1}^m$, write $S_r^c=S(X_r^c,Y_r^c)$, set $\tau=1-\alpha$, and let $k=\lceil(m+1)\tau\rceil$. Split CP uses the $k$th order statistic $S_{(k)}^c$ and returns
\begin{equation}\label{eq:split_cp_set}
\mathcal C_{\rm split}(x)=\{y:S(x,y)\le S_{(k)}^c\}.
\end{equation}
Exchangeability of the calibration and test scores gives
\begin{equation}\label{eq:split_validity}
\Pr\{Y\in\mathcal C_{\rm split}(X)\}\ge \frac{k}{m+1}\ge\tau,
\end{equation}
With continuous scores, equality holds at $k/(m+1)$. We set the threshold to $+\infty$ when $k=m+1$.

\paragraph{Conditional coverage and RCP.}
For a fitted prediction set $\mathcal C_{\mathcal D}$, define its conditional coverage curve by $\pi_{\mathcal D}(x)=\Pr\{Y\in\mathcal C_{\mathcal D}(X)\mid X=x,\mathcal D\}$. Marginal coverage controls only $\E_X\pi_{\mathcal D}(X)$. We measure conditional miscalibration through
\begin{equation}\label{eq:msce_definition}
\MSCE_{\mathcal D}=\E_X[(\pi_{\mathcal D}(X)-\tau)^2],
\qquad
\HET_{\mathcal D}=\Var_X\{\pi_{\mathcal D}(X)\}.
\end{equation}
RCP~\cite{plassier2025rectifying} replaces the global threshold in \eqref{eq:split_cp_set} by a learned function $q(x)$. On a separate trusted calibration sample, it forms residuals $R_r^c=S_r^c-q(X_r^c)$ and lets $\widehat\gamma(q)$ be their $k$th order statistic. The resulting set is
\begin{equation}\label{eq:final_set}
\mathcal C_q(x)=\{y:S(x,y)\le q(x)+\widehat\gamma(q)\}.
\end{equation}
Here $q$ provides local adaptation and $\widehat\gamma(q)$ restores the usual marginal guarantee.

\paragraph{Learning the conditional score quantile.}
Let $F_x(s)=\Pr(S\le s\mid X=x)$ and define $q^\star(x)=F_x^{-1}(\tau)$. The oracle set $\{y:S(x,y)\le q^\star(x)\}$ has conditional coverage $\tau$ whenever $F_x$ is continuous at $q^\star(x)$. We learn $q^\star$ through the pinball loss $\rho_\tau(u)=u\{\tau-\1(u<0)\}$, with $\ell_q(x,s)=\rho_\tau\{s-q(x)\}$ and population risk $\Risk(q)=\E\ell_q(X,S)$.

Our analysis begins with the risk-to-coverage bridge established for RCP~\cite{plassier2025rectifying}.
\begin{proposition}[Pinball excess risk controls pre-conformal coverage error]\label{prop:risk_coverage_bridge}
Assume $F_x(q^\star(x))=\tau$ and that $F_x$ is $L_F$-Lipschitz uniformly in $x$. Then every measurable $q$ satisfies
\begin{equation}\label{eq:risk_coverage_bridge}
\E\big[(F_X\{q(X)\}-\tau)^2\big]
\le 2L_F\{\Risk(q)-\Risk(q^\star)\}.
\end{equation}
\end{proposition}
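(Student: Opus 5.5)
The plan is to reduce the claim to a pointwise inequality in $x$ and then integrate. Conditioning on $X=x$ and writing $a=q^\star(x)$, $b=q(x)$, the conditional excess risk is
\[
\Delta(x)=\E\big[\rho_\tau(S-b)-\rho_\tau(S-a)\mid X=x\big].
\]
By the standard Knight-type identity for the pinball loss,
\[
\rho_\tau(u-v)-\rho_\tau(u)=-v\{\tau-\1(u<0)\}+\int_0^v\{\1(u\le t)-\1(u\le 0)\}\,dt .
\]
We apply it with $u=S-a$ and $v=b-a$, and take conditional expectations. The linear term has mean $-v\{\tau-F_x(a)\}=0$ because $F_x(q^\star(x))=\tau$; an atom at $a$ is harmless here, since $F_x$ is Lipschitz and hence continuous. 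This leaves
\[
\Delta(x)=\int_a^b\{F_x(t)-F_x(a)\}\,dt=\int_a^b\{F_x(t)-\tau\}\,dt,
\]
which is valid for either ordering of $a$ and $b$ because of the orientation of the integral.

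The key step is a lower bound $\Delta(x)\ge\{F_x(b)-\tau\}^2/(2L_F)$. Suppose $b>a$, and write $\delta=F_x(b)-\tau\ge0$. The Lipschitz property gives
\[
F_x(t)\ge F_x(b)-L_F(b-t)
\]
on $[a,b]$. Monotonicity also gives $F_x(t)-\tau\ge0$ there. Hence the integrand is at least $\max\{0,\delta-L_F(b-t)\}$, and integrating over $t\in[a,b]$ yields at least $\delta^2/(2L_F)$. Here we use that Lipschitz continuity forces $b-a\ge\delta/L_F$, so the whole triangle fits inside the interval. The case $b<a$ is symmetric, using $F_x(t)\le F_x(b)+L_F(t-b)$ and $\int_b^a\{\tau-F_x(t)\}\,dt$. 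If $b=a$, both sides vanish.

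Finally, we multiply by $2L_F$ and take expectations over $X$ to get
\[
\E\big[(F_X\{q(X)\}-\tau)^2\big]\le 2L_F\,\E\,\Delta(X)=2L_F\{\Risk(q)-\Risk(q^\star)\}.
\]
Exchanging conditional expectations requires $\Risk(q^\star)<\infty$ and well-defined differences. If $\Risk(q)=\infty$ the bound is trivial, and otherwise we work directly with the difference of losses, which is bounded by $|q-q^\star|$.

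The main obstacle is the pointwise triangle bound, specifically confirming that Lipschitz control at the endpoint $b$ rather than at $a$ is the right way to go. Controlling at $b$ is what gives a bound in terms of $F_x(q(x))$ and avoids any dependence on a density lower bound. The remaining steps are routine.
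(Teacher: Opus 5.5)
Your proof is correct and follows essentially the same route as the paper: the integral identity $r_x(t)-r_x(q^\star(x))=\int_{q^\star(x)}^t\{F_x(u)-\tau\}\,du$, then the triangle lower bound from Lipschitz control at the endpoint $q(x)$ combined with monotonicity, then integration over $P_X$. You also make explicit several details the paper leaves implicit: you derive the identity via Knight's identity, which holds off the null event $S=q^\star(X)$. You check that $b-a\ge\delta/L_F$, so the triangle fits inside the interval. And you verify integrability of the loss difference.
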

A proof is given in Appendix~\ref{app:proof_risk_coverage}. The proposition controls coverage error before the final scalar correction; Section~\ref{sec:post_conformal_geometry} derives how that correction changes the relevant error.

\paragraph{Synthetic-supervision setting.}\label{sec:synthetic_setting}
The base predictor, auxiliary labeler, and preprocessing maps are fitted independently of the samples below and are then treated as fixed. We observe a small trusted sample $\mathcal D_L=\{(X_i,Y_i)\}_{i=1}^n$ and a much larger covariate-only sample $\mathcal D_U^X=\{\widetilde X_j\}_{j=1}^N$, where $N\gg n$. The covariates in both samples are independent draws from the same target distribution $P_X$, but true outcomes are observed only in $\mathcal D_L$.

The auxiliary labeler supplies $Y_i^\dagger$ at each trusted covariate and $\widetilde Y_j$ at each unlabeled covariate. We write $S_i=S(X_i,Y_i)$ for the trusted score, $S_i^\dagger=S(X_i,Y_i^\dagger)$ for the paired synthetic score, and $\widetilde S_j=S(\widetilde X_j,\widetilde Y_j)$ for the synthetic-pool score. The same labeler is applied in both places, so $(X,S^\dagger)\overset d=(\widetilde X,\widetilde S)$, although this common synthetic law need not equal the trusted law of $(X,S)$. Finally, an independent trusted sample $\mathcal D_C=\{(X_r^c,Y_r^c)\}_{r=1}^m$ is reserved exclusively for the scalar correction in \eqref{eq:final_set}. Thus trusted learning data identify the target risk, the large synthetic pool estimates an auxiliary risk, and untouched trusted calibration data restore finite-sample coverage.

\section{Methodology and Theory}\label{sec:3}
\paragraph{Roadmap.}
We proceed in three steps. First, we define prediction-powered quantile learning and show that an untouched calibration split preserves exact marginal validity. Second, we profile pinball risk over the final scalar correction, revealing which threshold errors remain visible after conformalization. Third, we pass prediction-powered learning fluctuations through this geometry to obtain the three-resource expansion and its synthetic-power rule. Appendix~\ref{app:additional_results} states the regularity and secondary results, while Appendix~\ref{app:parametric_proofs} supplies the learner construction.

\subsection{Prediction-powered quantile learning}
Let $P_n$ denote the empirical average over $\mathcal D_L$ and $\widetilde P_N$ the empirical average over the synthetic pool. We fit the conditional score quantile with
\begin{equation}\label{eq:ppi_obj_main}
\widehat\Risk_\lambda(q)
=P_n\ell_q(X,S)
+\lambda\{\widetilde P_N\ell_q(\widetilde X,\widetilde S)-P_n\ell_q(X,S^\dagger)\}.
\end{equation}
The choice $\lambda=0$ recovers supervised learning, while $\lambda=1$ gives unit PPI. Because paired and pool synthetic scores share a law, $\E\widehat\Risk_\lambda(q)=\Risk(q)$ for every $q$ and $\lambda$, even under a biased labeler. The subtraction therefore removes the population bias of the pool term.

For comparison, positive augmentation omits the paired subtraction and may bias the population threshold, although untouched calibration still preserves marginal validity for the final prediction set.

Independent final calibration separates marginal validity from the quality of the learned threshold. We first formalize this protection, then decompose the conditional error that remains.
\begin{proposition}[Exact validity under adaptive synthetic learning]\label{prop:adaptive_validity}
Let $\widehat q$ and any power or tuning rule be measurable functions of $(\mathcal D_L,\mathcal D_U^X)$ and randomness independent of $\mathcal D_C$ and the test pair. If \eqref{eq:final_set} uses the exact $k$th calibration order statistic, then
\begin{equation}\label{eq:adaptive_validity}
\Pr\{Y\in\mathcal C_{\widehat q}(X)\mid\mathcal D_L,\mathcal D_U^X\}
\ge k/(m+1)\ge\tau.
\end{equation}
Under continuous residuals, equality holds at $k/(m+1)$.
\end{proposition}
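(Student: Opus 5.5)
The plan is to condition on the learning data and reduce to the standard split-conformal argument behind \eqref{eq:split_validity}. Let $\mathcal F=\sigma(\mathcal D_L,\mathcal D_U^X,\xi)$, where $\xi$ collects any auxiliary randomness used by the learner or the power/tuning rule. By hypothesis, $\widehat q$ is $\mathcal F$-measurable, and $\mathcal F$ is independent of $(\mathcal D_C,(X,Y))$. Conditional on $\mathcal F$, the function $\widehat q$ is therefore a fixed measurable map. The calibration pairs and the test pair remain i.i.d.\ from $P$; they are independent of $\mathcal F$ and of the auxiliary labeler, since the labeler is fitted independently and $\mathcal D_C$ is never passed through it.

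Next, define the residual scores $R_r^c=S(X_r^c,Y_r^c)-\widehat q(X_r^c)$ for $r=1,\dots,m$ and $R_{m+1}=S(X,Y)-\widehat q(X)$. Conditional on $\mathcal F$, these $m+1$ variables are i.i.d., hence exchangeable, because they are the same fixed function applied to i.i.d.\ pairs. The event $Y\in\mathcal C_{\widehat q}(X)$ is exactly $R_{m+1}\le\widehat\gamma(\widehat q)=R^c_{(k)}$, with the convention $R^c_{(m+1)}=+\infty$ handled separately, where coverage is trivially $1$. The standard rank argument then applies. On the event $R_{m+1}>R^c_{(k)}$, the test residual has rank strictly greater than $k$ among all $m+1$ values, counting ties conservatively. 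Exchangeability implies that the probability that $R_{m+1}$ lies among the $m+1-k$ largest values is at most $(m+1-k)/(m+1)$. This gives conditional coverage at least $k/(m+1)$, and $k=\lceil(m+1)\tau\rceil$ gives $k/(m+1)\ge\tau$. Since the conditional bound holds for almost every realization of $\mathcal F$, integrating out $\xi$ yields \eqref{eq:adaptive_validity} conditional on $(\mathcal D_L,\mathcal D_U^X)$.

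For the equality statement, assume the residuals are almost surely distinct conditional on $\mathcal F$; continuity of the residual law ensures this. The rank of $R_{m+1}$ is then uniform on $\{1,\dots,m+1\}$, so $\Pr\{R_{m+1}\le R^c_{(k)}\mid\mathcal F\}=k/(m+1)$ exactly.

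The main obstacle is the conditioning step, specifically justifying that data-adaptive choices of $\lambda$ or of tuning parameters leave exchangeability intact. Measurability with respect to $\mathcal F$ and independence from $\mathcal D_C$ together give this directly. The point to state carefully is that nothing computed from $\mathcal D_C$ or the test point feeds back into $\widehat q$. Once that is established, the remainder is the classical argument.
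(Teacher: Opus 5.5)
Your proposal is correct and follows essentially the same route as the paper's proof. Both condition on $(\mathcal D_L,\mathcal D_U^X)$ and the auxiliary randomness so that $\widehat q$ (and any selected power) is fixed, then use exchangeability of the $m+1$ residuals for the conservative rank bound, uniform ranks under continuity for equality, and the infinite-threshold convention for $k=m+1$. Your explicit step of integrating out the auxiliary randomness to return to conditioning on $(\mathcal D_L,\mathcal D_U^X)$ alone is a small clarification the paper leaves implicit; just avoid the name $\xi$ for that randomness, since the paper reserves it for the influence score.
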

Thus synthetic learning and adaptive power selection may change conditional efficiency without invalidating marginal coverage, provided the final calibration sample remains untouched. Validity alone, however, does not describe a fitted procedure's conditional behavior.

\begin{proposition}[Realized conditional-error decomposition]\label{prop:msce_exact}
For a fitted procedure, the mean coverage $U_{\mathcal D}=\E_X\pi_{\mathcal D}(X)$ satisfies
\begin{equation}\label{eq:msce_exact_decomp}
\MSCE_{\mathcal D}=\HET_{\mathcal D}+(U_{\mathcal D}-\tau)^2.
\end{equation}
If the marginal residual CDF is continuous, then conditional on $\widehat q$,
\begin{equation}\label{eq:beta_setup}
U_{\mathcal D}\stackrel d=U_{(k)}\sim\operatorname{Beta}(k,m+1-k),
\qquad
\E[(U_{(k)}-\tau)^2]=:\sigma_{m,\tau}^2.
\end{equation}
Averaging over fitted procedures also gives
\begin{equation}\label{eq:realized_ensemble_decomp}
\E_{\mathcal D}\MSCE_{\mathcal D}
=\E_X[(\E_{\mathcal D}\pi_{\mathcal D}(X)-\tau)^2]
+\E_X\Var_{\mathcal D}\{\pi_{\mathcal D}(X)\}.
\end{equation}
\end{proposition}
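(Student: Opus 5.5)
The proposition has three parts, and I will prove them separately. The first two are elementary. The only real content sits in the Beta claim, which is a probability-integral-transform argument applied conditionally on $\widehat q$.

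For the first identity \eqref{eq:msce_exact_decomp}, I treat $\mathcal D$ as fixed and write $\pi_{\mathcal D}(X)-\tau=\{\pi_{\mathcal D}(X)-U_{\mathcal D}\}+\{U_{\mathcal D}-\tau\}$. Squaring and taking $\E_X$ gives $\Var_X\{\pi_{\mathcal D}(X)\}+(U_{\mathcal D}-\tau)^2$. The cross term vanishes because $\E_X\{\pi_{\mathcal D}(X)-U_{\mathcal D}\}=0$ by the definition of $U_{\mathcal D}$. This is just the bias–variance identity for the random variable $\pi_{\mathcal D}(X)$ under $P_X$.

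For the ensemble identity \eqref{eq:realized_ensemble_decomp}, I use Fubini, which is justified since $\pi_{\mathcal D}\in[0,1]$. This gives $\E_{\mathcal D}\MSCE_{\mathcal D}=\E_X\E_{\mathcal D}[(\pi_{\mathcal D}(X)-\tau)^2]$. For each fixed $x$, I apply the same bias–variance split in the $\mathcal D$ direction, centring at $\bar\pi(x)=\E_{\mathcal D}\pi_{\mathcal D}(x)$. This yields $(\bar\pi(x)-\tau)^2+\Var_{\mathcal D}\{\pi_{\mathcal D}(x)\}$, and integrating over $x$ finishes the argument. Implicitly $\mathcal D$ is independent of the test $X$, so the order of integration is legitimate.

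For the Beta law, I condition on $\widehat q$ (equivalently on $(\mathcal D_L,\mathcal D_U^X)$), which is independent of $\mathcal D_C$ and the test pair. The residual $R=S(X,Y)-\widehat q(X)$ then has a fixed marginal CDF $G$, and $G$ is continuous by assumption. The calibration residuals $R_r^c$ are i.i.d.\ with law $G$.

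The event $\{Y\in\mathcal C_{\widehat q}(X)\}$ is exactly $\{R\le \widehat\gamma\}$ with $\widehat\gamma=R^c_{(k)}$. Since the test pair is independent of $\mathcal D_C$, averaging $\pi_{\mathcal D}(X)$ over $X$ gives $U_{\mathcal D}=\Pr(R\le \widehat\gamma\mid \mathcal D)=G(R^c_{(k)})$.

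By continuity of $G$, the variables $V_r=G(R_r^c)$ are i.i.d.\ uniform on $[0,1]$. Because $G$ is monotone, $G(R^c_{(k)})=V_{(k)}$; ties in $G$ on flat pieces occur with probability zero, and in any case do not alter the order statistic's value. The $k$th uniform order statistic out of $m$ is $\operatorname{Beta}(k,m+1-k)$. Hence $U_{\mathcal D}\stackrel d=U_{(k)}$, and $\sigma^2_{m,\tau}$ is simply named as its mean squared deviation from $\tau$.

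I need to handle the convention $k=m+1$, where the threshold is $+\infty$. In that case $U_{\mathcal D}=1$, which is consistent with the degenerate Beta law, so I will note it separately.

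The only delicate step is making the conditioning precise: $\pi_{\mathcal D}$ is defined given all of $\mathcal D$, including $\mathcal D_C$. The map $U_{\mathcal D}=G(\widehat\gamma)$ therefore requires integrating out only the test pair, with $G$ depending on $\widehat q$ alone. I will state this explicitly.
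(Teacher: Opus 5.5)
Your proposal is correct and follows the paper's proof. It uses the bias--variance identity in $X$ for \eqref{eq:msce_exact_decomp}, the pointwise bias--variance identity in $\mathcal D$ integrated over $P_X$ for \eqref{eq:realized_ensemble_decomp}, and, for the Beta law, the probability integral transform applied to the i.i.d.\ calibration residuals under the continuous residual CDF given $\widehat q$; your extra care about Tonelli, monotone maps commuting with order statistics, and the $k=m+1$ convention only makes explicit what the paper leaves implicit.
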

Together, the identities separate covariate heterogeneity from finite-calibration offset and require evaluation within each fitted run before aggregation across repetitions.

\subsection{What survives final conformalization?}\label{sec:post_conformal_geometry}
Final conformalization treats threshold level and shape differently: it cancels every constant shift exactly at finite sample, so only covariate-dependent shape can alter the deployed set.
\begin{proposition}[Global shift invariance]\label{prop:shift_invariance}
For every measurable $q$ and constant $c$,
\begin{equation}\label{eq:shift_invariance}
\widehat\gamma(q+c)=\widehat\gamma(q)-c,
\qquad
\mathcal C_{q+c}(x)=\mathcal C_q(x).
\end{equation}
\end{proposition}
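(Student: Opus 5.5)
The plan is to compute the calibration residuals of the shifted threshold directly and then use the fact that order statistics commute with adding a constant. Fix a measurable $q$ and a constant $c\in\R$. On the untouched calibration sample $\mathcal D_C$, the residuals attached to $q+c$ are
\[
R_r^c(q+c)=S_r^c-\{q(X_r^c)+c\}=R_r^c(q)-c,\qquad r=1,\dots,m .
\]
So the residual vector for $q+c$ is the residual vector for $q$, shifted by $-c$ in every coordinate.

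The first step is to note that the map $t\mapsto t-c$ is strictly increasing. It therefore preserves the ordering of the residuals, ties included, and sends the $k$th order statistic to the $k$th order statistic. This gives $\widehat\gamma(q+c)=\widehat\gamma(q)-c$ whenever $k\le m$. In the boundary case $k=m+1$ the threshold is $+\infty$ by the convention stated after \eqref{eq:split_validity}. There I would read the identity as $+\infty=+\infty-c$: both corrections are infinite, and both sets equal the whole outcome space. I would state this edge case explicitly rather than leave it implicit.

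The second step substitutes into \eqref{eq:final_set}. For every $x$,
\[
q(x)+c+\widehat\gamma(q+c)=q(x)+c+\widehat\gamma(q)-c=q(x)+\widehat\gamma(q).
\]
The two sets $\{y:S(x,y)\le q(x)+c+\widehat\gamma(q+c)\}$ and $\mathcal C_q(x)$ are therefore defined by the same inequality, so $\mathcal C_{q+c}(x)=\mathcal C_q(x)$ pointwise in $x$ and for every realization of $\mathcal D_C$.

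No probabilistic argument is needed, since the identity holds deterministically for each sample. The only point requiring care, and it is minor, is confirming that the order-statistic identity survives ties and the $+\infty$ convention. Both cases are handled by monotonicity of the translation and by the conventions already fixed in Section~\ref{sec:preliminaries}.
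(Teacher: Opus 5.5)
Your proposal is correct and matches the paper's proof: the residuals for $q+c$ are those for $q$ shifted by $-c$, so the $k$th order statistic shifts by $-c$, and the threshold $q(x)+\widehat\gamma(q)$, hence the set, is unchanged. Your explicit handling of ties and of the $k=m+1$ convention is a small addition that the paper leaves implicit.
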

Thus deployment depends on the shape of $q$, not its global level. We now make this quotient structure explicit at the population level. Define the profiled correction and risk
\begin{equation}\label{eq:profiled_risk_definition}
\gamma_q\in\argmin_{\gamma\in\R}\Risk(q+\gamma),
\qquad
\overline\Risk(q)=\Risk(q+\gamma_q),
\qquad
\pi_q(x)=F_x\{q(x)+\gamma_q\}.
\end{equation}
When the residual CDF is continuous, the first-order condition gives $\E_X\pi_q(X)=\tau$.

\begin{proposition}[Profiled risk and post-conformal coverage]\label{prop:profiled_risk}
Assume that $\gamma_q$ is locally unique and differentiation may pass through expectation. For every constant $c$ and admissible direction $h$,
\begin{align}
\overline\Risk(q+c)&=\overline\Risk(q),\qquad
D\overline\Risk(q)[h]=\E_X[\{\pi_q(X)-\tau\}h(X)],\label{eq:profiled_gradient}\\
\|\nabla\overline\Risk(q)\|_{L_2(P_X)}^2
&=\E_X[\{\pi_q(X)-\tau\}^2].\label{eq:profiled_msce_identity}
\end{align}
If $f_q(x)=f_{S\mid X}\{q(x)+\gamma_q\mid x\}$ exists and $\bar f_q=\E f_q(X)>0$, then
\begin{equation}\label{eq:profiled_hessian}
(\mathcal H_qh)(x)=f_q(x)\left[h(x)-\frac{\E\{f_q(X)h(X)\}}{\bar f_q}\right],
\qquad \mathcal H_q=\nabla^2\overline\Risk(q).
\end{equation}
\end{proposition}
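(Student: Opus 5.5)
The argument is an envelope-theorem calculation built on the first variation of the pinball risk. Since $\partial_u\E[\rho_\tau(S-u)\mid X=x]=F_x(u)-\tau$ wherever $F_x$ is continuous, passing the derivative through the expectation gives the directional derivative
\begin{equation*}
D\Risk(g)[h]=\E_X[\{F_X(g(X))-\tau\}h(X)]
\end{equation*}
for any threshold $g$ and admissible direction $h$.

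\textbf{Shift invariance.} The identity $\overline\Risk(q+c)=\overline\Risk(q)$ follows directly from the definition. The map $\gamma\mapsto\Risk(q+c+\gamma)$ is the map $\gamma'\mapsto\Risk(q+\gamma')$ reparametrized by $\gamma'=\gamma+c$. Hence $\gamma_{q+c}=\gamma_q-c$ under the assumed local uniqueness, and the two minimal values coincide. The same reparametrization shows $\pi_{q+c}=\pi_q$.

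\textbf{Gradient.} Write $\overline\Risk(q+th)=\Risk(q+th+\gamma_{q+th})$ and differentiate at $t=0$ with the chain rule. This gives $D\Risk(q+\gamma_q)[h]+\frac{d}{dt}\gamma_{q+th}\cdot D\Risk(q+\gamma_q)[1]$. The second term vanishes by the first-order condition $\E_X\pi_q(X)=\tau$, i.e.\ $D\Risk(q+\gamma_q)[1]=0$. This is the envelope argument. The remaining term is $\E_X[\{\pi_q(X)-\tau\}h(X)]$, as claimed.

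To make the step legitimate, I need $t\mapsto\gamma_{q+th}$ to be differentiable. Local uniqueness together with the implicit function theorem applied to $G(t,\gamma)=\E_X F_X(q+th+\gamma)-\tau$ gives this, with $\partial_\gamma G=\bar f_q>0$ in the density case. Without densities, I would instead use a one-sided envelope (Danskin) argument, which only needs $\gamma_q$ to be the unique minimizer.

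\textbf{Norm identity.} The gradient formula identifies the Riesz representer in $L_2(P_X)$ as $\nabla\overline\Risk(q)=\pi_q-\tau$. Equation~\eqref{eq:profiled_msce_identity} then follows by taking the squared norm.

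\textbf{Hessian.} Differentiate the gradient map $q\mapsto\pi_q-\tau$ in direction $h$:
\begin{equation*}
\frac{d}{dt}F_x\{q(x)+th(x)+\gamma_{q+th}\}\Big|_{t=0}=f_q(x)\{h(x)+\dot\gamma\},
\end{equation*}
where $\dot\gamma=\frac{d}{dt}\gamma_{q+th}\big|_{t=0}$. Differentiating the constraint $\E_X F_X(q+th+\gamma_{q+th})=\tau$ gives $\E[f_q(X)\{h(X)+\dot\gamma\}]=0$, so $\dot\gamma=-\E\{f_qh\}/\bar f_q$. Substituting yields \eqref{eq:profiled_hessian}.

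As a consistency check, $\mathcal H_q1=0$, which matches shift invariance. Also $\langle h,\mathcal H_qh\rangle=\E[f_q(h-\bar h_f)^2]$ with $\bar h_f=\E\{f_qh\}/\bar f_q$, so the operator is self-adjoint and positive semidefinite.

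\textbf{Main obstacle.} The hardest step is justifying the differentiability of the profiled correction $\gamma_{q+th}$ and the interchange of differentiation with expectation. I will handle this by invoking the stated assumptions (local uniqueness, differentiation through expectation, $\bar f_q>0$) through the implicit function theorem, rather than deriving them from primitive conditions.
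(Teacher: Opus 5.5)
Your proposal is correct and takes essentially the same approach as the paper. Like the paper, you reparametrize the scalar infimum for shift invariance, use the first-order condition $\E_X\pi_q(X)=\tau$ to remove the $\dot\gamma$ term (envelope argument), then differentiate the calibration constraint to obtain $\dot\gamma=-\E(f_qh)/\bar f_q$ and substitute into the derivative of $\pi_q-\tau$; your positive-semidefiniteness check $\E[f_q(h-\bar h_f)^2]$ is the paper's $\inf_{c}\E[f_q(h+c)^2]$ representation.
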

Thus population-calibrated conditional error is the functional gradient of a shift-invariant risk, and the density-weighted post-conformal operator is its curvature. Appendix~\ref{app:profiled_proof_details} records the derivative of $\gamma_q$, positive semidefiniteness, and the quotient-space quadratic form. Applying Proposition~\ref{prop:risk_coverage_bridge} to $q+\gamma_q$ gives
\begin{equation}\label{eq:profiled_risk_bridge}
\|\nabla\overline\Risk(q)\|_2^2
\le 2L_F\{\overline\Risk(q)-\Risk(q^\star)\}.
\end{equation}
The inequality controls magnitudes, not rankings: profiled risk and conditional-coverage error can order the same candidates differently.

For a target level $u$, let $\Gamma(q,u)$ solve $\E_XF_X\{q(X)+\Gamma(q,u)\}=u$ and set $\mathcal P(q,u)(x)=F_x\{q(x)+\Gamma(q,u)\}$. At $q^\star$, the profiled Hessian gives the local deployment map.
\begin{corollary}[Local density-weighted projection]\label{prop:functional_projection}
Set $f_\star(x)=f_{S\mid X}\{q^\star(x)\mid x\}$ and $\bar f_\star=\E f_\star(X)>0$. Under the smoothness conditions of Appendix~\ref{app:profiled_proof_details},
\begin{align}
\mathcal P(q^\star+e,\tau+v)-\tau
&=\mathcal Le+\frac{f_\star}{\bar f_\star}v+r_{e,v},
&\|r_{e,v}\|_2&\le C(\|e\|_\infty+|v|)^2,\label{eq:functional_coverage_expansion}\\
(\mathcal Le)(x)
&=f_\star(x)\left[e(x)-\frac{\E\{f_\star(X)e(X)\}}{\bar f_\star}\right].\label{eq:functional_operator}
\end{align}
\end{corollary}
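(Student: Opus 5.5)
The plan is a two-variable Taylor expansion of the implicitly defined shift $\Gamma(q,u)$, followed by an expansion of $F_x$ around $q^\star(x)$. Throughout, I take the smoothness conditions to be the following: $F_x$ is twice differentiable in $s$ near $q^\star(x)$; $f_{S\mid X}$ and $\partial_s f_{S\mid X}$ are bounded uniformly in $x$; and $\bar f_\star>0$.

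First, at the base point, $F_x\{q^\star(x)\}=\tau$ for every $x$, so $\Gamma(q^\star,\tau)=0$. Define
\[
G(e,u,\gamma)=\E_X F_X\{q^\star(X)+e(X)+\gamma\}-u .
\]
Its derivative in $\gamma$ at $(0,\tau,0)$ is $\bar f_\star>0$. The implicit function theorem on $L_\infty\times\R\times\R$ then gives, for $\|e\|_\infty+|v|$ small, a unique solution $\Gamma(q^\star+e,\tau+v)$. This solution is differentiable, with
\[
\Gamma(q^\star+e,\tau+v)=\frac{v-\E\{f_\star(X)e(X)\}}{\bar f_\star}+\rho_{e,v},
\qquad |\rho_{e,v}|\le C(\|e\|_\infty+|v|)^2 .
\]
This matches the derivative of $\gamma_q$ recorded in Appendix~\ref{app:profiled_proof_details}, specialized to $u=\tau$.

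To get the quadratic bound on $\rho_{e,v}$, I will write $G=0$ using a second-order Taylor expansion of $F_x$ with uniformly bounded $\partial_s f$. This yields
\[
\bar f_\star\,\Gamma+\E\{f_\star e\}-v=O\bigl((\|e\|_\infty+|\Gamma|)^2\bigr).
\]
A preliminary bound $|\Gamma|\le C(\|e\|_\infty+|v|)$ comes from monotonicity of $F_x$ and the lower bound on $\bar f_\star$. Feeding it back into the display closes the bound on $\rho_{e,v}$.

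Second, I will expand pointwise:
\[
\mathcal P(q^\star+e,\tau+v)(x)-\tau=F_x\{q^\star(x)+e(x)+\Gamma\}-F_x\{q^\star(x)\}=f_\star(x)\{e(x)+\Gamma\}+O\bigl((e(x)+\Gamma)^2\bigr).
\]
Substituting the expansion of $\Gamma$ gives
\[
f_\star(x)\Bigl[e(x)-\frac{\E\{f_\star e\}}{\bar f_\star}\Bigr]+\frac{f_\star(x)}{\bar f_\star}\,v+f_\star(x)\,\rho_{e,v}+O\bigl((\|e\|_\infty+|v|)^2\bigr).
\]
This is exactly $\mathcal Le+(f_\star/\bar f_\star)v$ plus a remainder that is bounded uniformly in $x$. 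Hence its $L_2(P_X)$ norm is at most $C(\|e\|_\infty+|v|)^2$.

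As a consistency check, $\mathcal L=\mathcal H_{q^\star}$ from Proposition~\ref{prop:profiled_risk}, since $\gamma_{q^\star}=0$ and $f_{q^\star}=f_\star$. This reflects that the post-conformal coverage map is the gradient of $\overline\Risk$, so its linearization is the profiled Hessian.

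The main obstacle is the uniform control of the implicit remainder $\rho_{e,v}$. I will keep it elementary: use the mean value theorem in $\gamma$ with $\inf$ of the averaged density near zero bounded below by $\bar f_\star/2$, which holds for small perturbations by continuity of $f$. This avoids functional-analytic issues in $L_\infty$, and the constant $C$ then depends only on $\sup|\partial_s f|$, $\sup f$, and $\bar f_\star$.
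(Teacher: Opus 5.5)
Your proposal is correct and follows essentially the same route as the paper: a second-order Taylor expansion of the marginal calibration constraint, a preliminary linear bound $|\Gamma|\lesssim\|e\|_\infty+|v|$, solving for $\Gamma$ up to a quadratic remainder, and then a pointwise Taylor expansion of $F_x$ at $q^\star(x)$ using the uniform density-derivative bound. Your monotonicity argument makes explicit what the paper covers by appealing to local uniqueness, and your remainder is uniform in $x$, which is stronger than the $L_2(P_X)$ bound the statement requires.
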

The density converts threshold error into coverage error, while the scalar correction removes its density-weighted global component. Thus $\mathcal L=\mathcal H_{q^\star}$ acts on threshold shape modulo constant shifts.

To connect this function-space derivative to a learned model, let $q_0=q_{\theta_0}$ be an interior stationary point of the trusted pinball risk, and let $\gamma_0$ satisfy $\E_XF_X\{q_0(X)+\gamma_0\}=\tau$. Write $\pi_0(x)=F_x\{q_0(x)+\gamma_0\}$, $d(x)=\nabla_\theta q_\theta(x)|_{\theta_0}$, $J=\nabla_\theta^2\Risk(q_\theta)|_{\theta_0}$, and
\begin{equation}\label{eq:projected_geometry}
f_c(x)=f_{S\mid X}\{q_0(x)+\gamma_0\mid x\},
\qquad
d_c(x)=d(x)-\frac{\E\{f_c(X)d(X)\}}{\E f_c(X)}.
\end{equation}
When the tangent contains an unpenalized intercept, stationarity already implies $\gamma_0=0$; we retain it to cover restricted or penalized learners.

\begin{definition}[Post-conformal influence map]\label{def:post_conformal_map}
For a learning-score perturbation $v$, define
\begin{equation}\label{eq:post_conformal_map}
(\mathsf Bv)(x)=f_c(x)d_c(x)^\top J^{-1}v,
\qquad
G_c=J^{-1}\E[f_c(X)^2d_c(X)d_c(X)^\top]J^{-1}.
\end{equation}
Then $\|\mathsf Bv\|_{L_2(P_X)}^2=v^\top G_cv$.
\end{definition}
The inverse Hessian maps a score perturbation to a threshold change, while $f_cd_c$ retains only the component visible after conformalization; their composition $\mathsf Bv$ is the deployed coverage perturbation.

\subsection{When do synthetic labels help?}\label{sec:when_help}
At the learning boundary define
\begin{equation}\label{eq:influence_scores}
\xi=\{\1(S\le q_0(X))-\tau\}d(X),
\qquad
\xi^\dagger=\{\1(S^\dagger\le q_0(X))-\tau\}d(X),
\end{equation}
with pool analogue $\widetilde\xi$. The deployment-scale constants are
\begin{equation}\label{eq:abc_defs}
a=\tr\{G_c\Var(\xi)\},\qquad
b=\tr\{G_c\Cov(\xi,\xi^\dagger)\},\qquad
c=\tr\{G_c\Var(\xi^\dagger)\},
\end{equation}
and $C_{\rm cal}=\E[(f_c(X)/\E f_c(X))^2]$. Thus $a$, $b$, and $c$ respectively quantify trusted deployment variance, usable true--synthetic covariance, and the synthetic variance cost.

Appendix~\ref{app:additional_results}, Assumption~\ref{ass:deployment_qmd}, states the fixed-dimensional conditions: stationary curvature, a uniform asymptotic-linear learner, a smooth $L_2(P_X)$ deployment map, independent samples, and proportional rates. Appendix~\ref{app:parametric_proofs} constructs a one-step estimator.

Define
\begin{equation}\label{eq:power_quadratic}
L(\lambda)=\frac{a-2b\lambda+(1+\kappa)c\lambda^2}{n},
\qquad
\sigma_{m,\tau}^2=\E[(U_{(k)}-\tau)^2].
\end{equation}
\begin{theorem}[Three-resource post-conformal expansion]\label{thm:realized_msce_expansion}
Under Assumption~\ref{ass:deployment_qmd}, the following expansions hold uniformly over the fixed admissible power set:
\begin{align}
\E_{\mathcal D}\|\pi_{\mathcal D,\lambda}-\pi_0\|_2^2
&=L(\lambda)+C_{\rm cal}\sigma_{m,\tau}^2+o(n^{-1}+N^{-1}+m^{-1}),\label{eq:stationary_stoch_expansion}\\
\Vproc(\lambda)
&=L(\lambda)+C_{\rm cal}\Var(U_{(k)})+o(n^{-1}+N^{-1}+m^{-1}).\label{eq:stationary_variance_expansion}
\end{align}
\end{theorem}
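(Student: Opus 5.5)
The plan is to write the deployed coverage curve as $\pi_{\mathcal D,\lambda}(x)=F_x\{\widehat q_\lambda(x)+\widehat\gamma(\widehat q_\lambda)\}$ and linearize it around $\pi_0$. The linearization uses Corollary~\ref{prop:functional_projection}, applied at $q_0$ rather than $q^\star$ through the smooth $L_2(P_X)$ deployment map in Assumption~\ref{ass:deployment_qmd}. Two perturbations feed into it. The threshold error is $e=\widehat q_\lambda-q_0\approx d^\top(\widehat\theta_\lambda-\theta_0)$. The calibration level error is $v=U_{(k)}-\tau$, because conditional on $\widehat q_\lambda$ the realized mean coverage is $U_{(k)}$ (Proposition~\ref{prop:msce_exact}). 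By Proposition~\ref{prop:shift_invariance}, the correction $\widehat\gamma$ equals $\Gamma(\widehat q_\lambda,U_{\mathcal D})$. This gives
\[
\pi_{\mathcal D,\lambda}-\pi_0=\mathsf B\,(\theta\text{-score})+\frac{f_c}{\E f_c(X)}\,(U_{(k)}-\tau)+r,
\]
where $\mathsf B$ is the map of Definition~\ref{def:post_conformal_map}. The operator $\mathcal L$ at $q_0$ acts on $d^\top\Delta$ as $f_c d_c^\top\Delta$.

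The next step inserts the asymptotically linear learner expansion from the assumption (the one-step estimator of Appendix~\ref{app:parametric_proofs}):
\[
\widehat\theta_\lambda-\theta_0=-J^{-1}\Big[(P_n-P)\xi+\lambda\{(\widetilde P_N-P)\widetilde\xi-(P_n-P)\xi^\dagger\}\Big]+o_P(n^{-1/2}+N^{-1/2}).
\]
The gradient of the pinball loss is $-\{\tau-\1(S<q)\}d$, which matches $\xi$ up to sign. The leading threshold term is then $\mathsf B V_\lambda$, with $V_\lambda$ the bracket above. Its squared norm is $\|\mathsf BV_\lambda\|_2^2=V_\lambda^\top G_cV_\lambda$. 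Taking expectations, and using independence of $\mathcal D_L$ and the pool together with $\xi^\dagger\overset d=\widetilde\xi$, gives
\[
\E\,V_\lambda^\top G_cV_\lambda=\frac{a-2\lambda b+\lambda^2c}{n}+\frac{\lambda^2c}{N}.
\]
Under the proportional rate $n/N\to\kappa$ this is $L(\lambda)$.

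For the cross terms, $\mathcal D_C$ is independent of $(\mathcal D_L,\mathcal D_U^X)$, and conditional on $\widehat q_\lambda$ the variable $U_{(k)}$ is exactly Beta distributed. The cross term between $\mathsf BV_\lambda$ and $(f_c/\E f_c)(U_{(k)}-\tau)$ therefore factorizes. It equals $\E\langle \mathsf BV_\lambda,f_c/\E f_c\rangle\cdot\E(U_{(k)}-\tau)$. The first factor is $o(n^{-1/2})$ since $\E V_\lambda=0$ to leading order, and the second is $O(m^{-1})$, so the product is negligible. The calibration term contributes $\E[(f_c/\E f_c)^2]\,\E(U_{(k)}-\tau)^2=C_{\rm cal}\sigma^2_{m,\tau}$. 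This proves \eqref{eq:stationary_stoch_expansion}.

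For \eqref{eq:stationary_variance_expansion} I would read $\Vproc(\lambda)$ as $\E_X\Var_{\mathcal D}\{\pi_{\mathcal D,\lambda}(X)\}$, in line with \eqref{eq:realized_ensemble_decomp}. The same expansion applies after subtracting the mean. The learner term has mean $o(n^{-1/2})$, so its contribution is unchanged. The calibration term's mean $\E U_{(k)}-\tau=k/(m+1)-\tau$ is removed, which replaces $\sigma^2_{m,\tau}$ by $\Var(U_{(k)})$. Uniformity over the finite (or compact) admissible power set follows because every constant is continuous, in fact polynomial, in $\lambda$.

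\textbf{Main obstacle.} The hard step is showing the remainders are $o(n^{-1}+N^{-1}+m^{-1})$ in $L_2$ expectation, not merely in probability. This covers the learner's linearization error, the quadratic remainder $C(\|e\|_\infty+|v|)^2$, and their cross products with the leading terms. My first approach would be to use the uniform asymptotic linearity and the QMD-type deployment smoothness in Assumption~\ref{ass:deployment_qmd}, together with the boundedness of coverage curves in $[0,1]$. These give uniform integrability of $n\|\pi_{\mathcal D,\lambda}-\pi_0\|_2^2$, after which convergence in probability can be upgraded to convergence of second moments. Cauchy--Schwarz then handles the cross terms between remainders and leading terms.
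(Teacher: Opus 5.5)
Your proposal is correct and follows essentially the same route as the paper. You expand $\pi_{\mathcal D,\lambda}-\pi_0$ through the deployment map at $(q_0,\tau)$ as $f_cd_c^\top(\widehat\theta_\lambda-\theta_0)+(f_c/\E f_c)(U_{(k)}-\tau)+r$, insert the learner expansion (equivalently Lemma~\ref{lem:ppi_bahadur}), use independence of $\mathcal D_C$ for the cross term, center to pass from $\sigma_{m,\tau}^2$ to $\Var(U_{(k)})$, and control remainders by fourth-moment localization plus boundedness of coverage. The only differences are small slips:
\begin{itemize}
\item $\widehat\gamma=\Gamma(\widehat q_\lambda,U_{\mathcal D})$ holds by the definition of $\Gamma$; shift invariance is not needed.
\item $\xi$ is exactly, not ``up to sign'', the pinball gradient.
\item Uniformity in $\lambda$ comes from the $\sup_\lambda$ remainder controls in Assumption~\ref{ass:deployment_qmd}, not from the polynomial form of $L(\lambda)$.
\end{itemize}
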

The expansion separates three resources. Trusted learning contributes $a/n$; paired covariance supplies the benefit $-2b\lambda/n$; paired and pool estimation incur $(1+\kappa)c\lambda^2/n$. Final calibration contributes a separate order-$1/m$ term that no synthetic pool can remove.

The theorem controls stochastic error around the calibrated stationary target $\pi_0$. Under exact score-quantile specification this target equals the desired constant coverage curve, so the same expansion becomes a statement about total MSCE.
\begin{corollary}[Exact score-quantile specification]\label{cor:exact_specification}
If $q_0=q^\star$, then $\pi_0\equiv\tau$ and
\begin{align}
\E\MSCE_{\mathcal D}
&=L(\lambda)+C_{\rm cal}\sigma_{m,\tau}^2+o(n^{-1}+N^{-1}+m^{-1}),\label{eq:exact_msce}\\
\E\HET_{\mathcal D}
&=L(\lambda)+(C_{\rm cal}-1)\sigma_{m,\tau}^2+o(n^{-1}+N^{-1}+m^{-1}),\label{eq:exact_het}
\end{align}
where $C_{\rm cal}-1=\Var\{f_\star(X)\}/\bar f_\star^2$.
\end{corollary}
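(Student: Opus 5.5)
The plan is to reduce Corollary~\ref{cor:exact_specification} to Theorem~\ref{thm:realized_msce_expansion} and Proposition~\ref{prop:msce_exact}.

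\textbf{Step 1: identify $\pi_0$.} If $q_0=q^\star$, then $F_x\{q^\star(x)\}=\tau$ for every $x$, so $\E_XF_X\{q_0(X)+0\}=\tau$. Continuity and strict monotonicity of the residual CDF near the boundary (part of Assumption~\ref{ass:deployment_qmd}) make $\gamma_0$ unique, so $\gamma_0=0$ and $\pi_0=F_x\{q^\star(x)\}\equiv\tau$. It follows that $\|\pi_{\mathcal D,\lambda}-\pi_0\|_2^2=\E_X[(\pi_{\mathcal D,\lambda}(X)-\tau)^2]=\MSCE_{\mathcal D}$. Taking expectations and applying \eqref{eq:stationary_stoch_expansion} gives \eqref{eq:exact_msce} directly. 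In this case $f_c=f_\star$, so $C_{\rm cal}=\E[f_\star^2]/\bar f_\star^2=1+\Var\{f_\star(X)\}/\bar f_\star^2$, which is the stated identity for $C_{\rm cal}-1$.

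\textbf{Step 2: the HET expansion.} By \eqref{eq:msce_exact_decomp}, $\HET_{\mathcal D}=\MSCE_{\mathcal D}-(U_{\mathcal D}-\tau)^2$. Conditional on $\widehat q$, Proposition~\ref{prop:msce_exact} gives $U_{\mathcal D}\overset d=U_{(k)}$, provided the marginal residual CDF is continuous. That continuity holds under the density assumptions. Hence $\E(U_{\mathcal D}-\tau)^2=\sigma_{m,\tau}^2$ exactly, for every $\widehat q$ and therefore unconditionally. This exactness does not depend on the learned $\lambda$, because $\mathcal D_C$ is independent of the learning data. Subtracting from \eqref{eq:exact_msce} yields \eqref{eq:exact_het} with coefficient $C_{\rm cal}-1$.

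\textbf{Main obstacle.} The delicate point is uniqueness of $\gamma_0$ together with the match between $\pi_0$ and the theorem's target. One must check that the calibrated stationary point used in Theorem~\ref{thm:realized_msce_expansion} really is $q^\star$ with zero shift. This is especially relevant for penalized or restricted learners, where stationarity need not force $\gamma_0=0$ on its own.

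My approach is to define $\gamma_0$ through the coverage equation rather than through stationarity. The uniqueness argument above then pins $\gamma_0=0$ whenever $\bar f_\star>0$.

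The remainder terms are inherited unchanged, because $\sigma_{m,\tau}^2=O(m^{-1})$ appears exactly and is not approximated.
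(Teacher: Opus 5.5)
Your proposal is correct and follows the paper's own argument. With $q_0=q^\star$ you get $\gamma_0=0$, $\pi_0\equiv\tau$ and $f_c=f_\star$, so the left side of Eq.~\eqref{eq:stationary_stoch_expansion} is exactly $\E\MSCE_{\mathcal D}$ and $C_{\rm cal}=1+\Var\{f_\star(X)\}/\bar f_\star^2$; subtracting the exact calibration offset $\E(U_{\mathcal D}-\tau)^2=\sigma_{m,\tau}^2$ from Proposition~\ref{prop:msce_exact} then gives the HET expansion, and your explicit uniqueness argument for $\gamma_0$ (via $\bar f_\star>0$) supplies the identification $f_c=f_\star$ that the paper leaves implicit.
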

If the boundary density is constant across inputs, finite calibration produces a common coverage offset but no first-order heterogeneity. Density variation converts the same probability-scale error into different coverage changes across the covariate space.

The covariance benefit $b$ has two statistically distinct sources.
\begin{proposition}[Two sources of useful synthetic information]\label{prop:mechanism_decomposition}
For $\mu(X)=\E(\xi\mid X)$ and $\mu^\dagger(X)=\E(\xi^\dagger\mid X)$,
\begin{equation}\label{eq:two_mechanisms}
b=
\underbrace{\tr\!\left[G_c\E\{\Cov(\xi,\xi^\dagger\mid X)\}\right]}_{b_{\rm res}}
+
\underbrace{\tr\!\left[G_c\Cov\{\mu(X),\mu^\dagger(X)\}\right]}_{b_{\rm align}}.
\end{equation}
Exact specification removes $b_{\rm align}$, whereas $S^\dagger\perp S\mid X$ removes $b_{\rm res}$.
\end{proposition}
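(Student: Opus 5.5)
The decomposition \eqref{eq:two_mechanisms} is the law of total covariance applied to the vector pair $(\xi,\xi^\dagger)$, followed by the linear map $M\mapsto\tr(G_cM)$. We condition on $X$ and write
\begin{equation*}
\Cov(\xi,\xi^\dagger)=\E\{\Cov(\xi,\xi^\dagger\mid X)\}+\Cov\{\mu(X),\mu^\dagger(X)\}.
\end{equation*}
This is exactly the matrix identity $\E[\xi\xi^{\dagger\top}]-\E\xi\,\E\xi^{\dagger\top}$, obtained by inserting $\E[\xi\xi^{\dagger\top}\mid X]$ and $\mu(X)\mu^\dagger(X)^\top$. It only requires square integrability. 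That holds because the indicator factors are bounded by $\max(\tau,1-\tau)\le1$ and Assumption~\ref{ass:deployment_qmd} gives $\E\|d(X)\|^2<\infty$. Since $G_c$ is a fixed deterministic matrix and trace is linear, multiplying by $G_c$ and taking the trace yields $b=b_{\rm res}+b_{\rm align}$ directly from the definition of $b$ in \eqref{eq:abc_defs}.

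For the two removal statements we use the product structure of the influence scores. Because $d(X)$ is $X$-measurable, $\mu(X)=\{F_X(q_0(X))-\tau\}d(X)$ and $\mu^\dagger(X)=\{F^\dagger_X(q_0(X))-\tau\}d(X)$, where $F^\dagger_x$ is the conditional CDF of $S^\dagger$ given $X=x$. Likewise,
\begin{equation*}
\Cov(\xi,\xi^\dagger\mid X)=\Cov\{\1(S\le q_0(X)),\1(S^\dagger\le q_0(X))\mid X\}\,d(X)d(X)^\top.
\end{equation*}

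\textbf{Conditional independence.} If $S^\dagger\perp S\mid X$, the two indicators are conditionally independent given $X$. Their conditional covariance is then zero, so $b_{\rm res}=0$.

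\textbf{Exact specification.} If $q_0=q^\star$, then $F_X(q_0(X))=\tau$ almost surely by the standing assumption of Proposition~\ref{prop:risk_coverage_bridge}. Hence $\mu(X)\equiv0$ and $\Cov\{\mu(X),\mu^\dagger(X)\}=0$, which gives $b_{\rm align}=0$. No property of the synthetic labeler is needed here, since a constant vector has zero covariance with anything.

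The only step needing care is the exact-specification claim. It relies on $F_x$ being continuous at $q^\star(x)$, so that $F_x(q^\star(x))=\tau$ holds exactly rather than only as an inequality. It also relies on $q_0=q^\star$ holding $P_X$-almost surely, with $d$ evaluated at the same $\theta_0$. I would state both explicitly, citing the continuity assumption in the preliminaries. Everything else is bookkeeping with the trace.
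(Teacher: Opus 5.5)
Your proposal is correct and matches the paper's proof. Like the paper, you apply the law of total covariance to $(\xi,\xi^\dagger)$ and then the linear map $M\mapsto\tr(G_cM)$; exact specification gives $\mu(X)=\{F_X(q^\star(X))-\tau\}d(X)=0$, which removes $b_{\rm align}$, and conditional independence of the two boundary indicators removes $b_{\rm res}$. Factoring out $d(X)d(X)^\top$ and checking square integrability only make explicit what the paper leaves implicit; that integrability is actually supplied by the fourth-moment envelopes in Assumption~\ref{ass:stationary_regular}, not Assumption~\ref{ass:deployment_qmd}.
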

Residual coupling is within-$X$ information about whether trusted and synthetic scores fall on the same side of the relevant boundary. Alignment is between-$X$ information about where a restricted learner is systematically too high or too low. A frozen deterministic teacher is $X$-only conditional on its training data, so real-data gains are compatible primarily with the alignment channel; residual coupling is isolated in simulation.

\begin{corollary}[No first-order PPI gain under exact independent synthetic scores]\label{cor:no_gain}
Under exact specification, if $S^\dagger\perp S\mid X$, then $b=0$ and $\lambda=0$ is a first-order optimum over nonnegative powers, uniquely so when $c>0$.
\end{corollary}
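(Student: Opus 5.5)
The plan is to combine the two-source decomposition of Proposition~\ref{prop:mechanism_decomposition} with the quadratic $L(\lambda)$ of Theorem~\ref{thm:realized_msce_expansion}. First we show $b=0$. Under exact specification $q_0=q^\star$, and $F_x(q^\star(x))=\tau$, so $\mu(X)=\E(\xi\mid X)=\{F_X(q^\star(X))-\tau\}d(X)=0$ almost surely. Hence $\Cov\{\mu(X),\mu^\dagger(X)\}=0$ and $b_{\rm align}=0$, whatever $\mu^\dagger$ is. This holds even though the synthetic law is biased, since only the trusted factor needs to be centered.

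Next, conditional independence $S^\dagger\perp S\mid X$ gives $b_{\rm res}=0$. Conditional on $X$, the indicators $\1(S\le q_0(X))$ and $\1(S^\dagger\le q_0(X))$ are independent, and $d(X)$ is $X$-measurable. Therefore
\[
\Cov(\xi,\xi^\dagger\mid X)=d(X)d(X)^\top\Cov\{\1(S\le q_0(X)),\1(S^\dagger\le q_0(X))\mid X\}=0 .
\]
Summing the two pieces in \eqref{eq:two_mechanisms} gives $b=0$.

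With $b=0$, the first-order term is $L(\lambda)=\{a+(1+\kappa)c\lambda^2\}/n$. We have $c=\tr\{G_c\Var(\xi^\dagger)\}\ge0$, because it is the trace of a product of two positive semidefinite matrices: $G_c$ is a Gram-type matrix by Definition~\ref{def:post_conformal_map}. We also have $\kappa\ge0$, since it is a ratio of sample sizes. So $L(\lambda)\ge L(0)$ for every $\lambda\ge0$, and $\lambda=0$ minimizes the first-order part of \eqref{eq:stationary_stoch_expansion} and \eqref{eq:exact_msce}.

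For uniqueness, when $c>0$ the coefficient $(1+\kappa)c>0$, so $L(\lambda)>L(0)$ for every $\lambda>0$. We also observe that the calibration term $C_{\rm cal}\sigma_{m,\tau}^2$ does not depend on $\lambda$, so it cannot change the ranking.

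The main obstacle is interpreting ``first-order optimum'' precisely. The $o(n^{-1}+N^{-1}+m^{-1})$ remainder is uniform over the admissible power set, so the claim should be read as optimality of the leading term $L(\lambda)$. Strict inequality then holds only up to that remainder for each fixed $\lambda>0$.

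A secondary check is that $G_c$ is positive semidefinite, which makes $c\ge0$. This follows because $G_c=J^{-1}MJ^{-1}$ with $M=\E[f_c^2d_cd_c^\top]\succeq0$ and $J$ symmetric.
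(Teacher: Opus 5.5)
Your proposal is correct and follows essentially the same route as the paper's proof: the law-of-total-covariance split $b=b_{\rm res}+b_{\rm align}$, exact specification giving $\mu\equiv0$ and hence $b_{\rm align}=0$, conditional independence giving $b_{\rm res}=0$, and minimization of $L(\lambda)=\{a+(1+\kappa)c\lambda^2\}/n$ over $\lambda\ge0$. Your extra checks fill in details the paper leaves implicit: that $c\ge0$ because $G_c$ is positive semidefinite, that uniqueness needs $c>0$, and that the calibration term does not depend on $\lambda$.
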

This is a boundary for the matched bias-robust correction in Eq.~\eqref{eq:ppi_obj_main}, not a claim that independently generated synthetic data are intrinsically useless. If their full joint law is externally certified to equal the target law, direct pooling can increase effective sample size; Appendix~\ref{app:additional_results}, Proposition~\ref{prop:pooling_robustness}, quantifies the corresponding robustness--efficiency tradeoff.

\begin{corollary}[Benefit--cost rule for synthetic power]\label{cor:optimal_power}
For $c>0$ and admissible interval $\Lambda$,
\begin{equation}\label{eq:optimal_lambda}
\lambda^\star=\Pi_\Lambda\left\{\frac{b}{(1+\kappa)c}\right\}.
\end{equation}
When clipping is inactive,
\begin{equation}\label{eq:lambda_regret}
L(\lambda)-L(\lambda^\star)=\frac{(1+\kappa)c}{n}(\lambda-\lambda^\star)^2.
\end{equation}
Unit power improves on supervision if and only if $2b>(1+\kappa)c$.
\end{corollary}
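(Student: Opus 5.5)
The corollary concerns only the quadratic $L(\lambda)=\{a-2b\lambda+(1+\kappa)c\lambda^2\}/n$ from \eqref{eq:power_quadratic}. By Theorem~\ref{thm:realized_msce_expansion}, this is the only $\lambda$-dependent part of the leading-order post-conformal error: the calibration term $C_{\rm cal}\sigma_{m,\tau}^2$ does not depend on $\lambda$, and the remainder is $o(n^{-1}+N^{-1}+m^{-1})$ uniformly over the admissible power set. So I will work with $L$ alone and minimize it over the closed interval $\Lambda$.

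\textbf{Step 1: strict convexity.} Because $c=\tr\{G_c\Var(\xi^\dagger)\}$ is the trace of a product of two positive semidefinite matrices, it is nonnegative. Under the hypothesis $c>0$, together with $\kappa\ge0$ (the proportional pool ratio from Assumption~\ref{ass:deployment_qmd}), the leading coefficient $(1+\kappa)c/n$ is strictly positive. Hence $L$ is strictly convex. Setting $L'(\lambda)=\{-2b+2(1+\kappa)c\lambda\}/n=0$ gives the unconstrained minimizer $b/\{(1+\kappa)c\}$.

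\textbf{Step 2: projection onto $\Lambda$.} A strictly convex univariate function restricted to an interval is minimized at the projection of its unconstrained minimizer onto that interval. The reason is that $L$ is monotone on each side of its vertex, so the constrained minimum is the nearest endpoint when the vertex lies outside $\Lambda$. This gives \eqref{eq:optimal_lambda}.

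\textbf{Step 3: the regret identity.} When clipping is inactive, $\lambda^\star$ is the vertex, so I complete the square:
\[
L(\lambda)=L(\lambda^\star)+\tfrac{(1+\kappa)c}{n}(\lambda-\lambda^\star)^2.
\]
This identity is exact for any quadratic with leading coefficient $(1+\kappa)c/n$ and vertex $\lambda^\star$, because the linear term vanishes at the vertex. This yields \eqref{eq:lambda_regret}.

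\textbf{Step 4: unit power versus supervision.} A direct computation gives
\[
L(1)-L(0)=\frac{-2b+(1+\kappa)c}{n}.
\]
This is negative if and only if $2b>(1+\kappa)c$, which proves the last claim. Note that this comparison needs neither convexity nor any clipping assumption.

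I do not expect a real obstacle here. The only points needing care are two. First, "improves" must be read at the first-order level of Theorem~\ref{thm:realized_msce_expansion}. Second, admissibility requires that $0$ and $1$ lie in the fixed admissible power set, so that the uniform $o(\cdot)$ remainder applies to both. I would state both points explicitly.
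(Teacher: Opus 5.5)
Your proposal is correct and follows essentially the same route as the paper: differentiate the quadratic $L(\lambda)$, clip the vertex onto $\Lambda$, complete the square for the regret, and compare $L(1)$ with $L(0)$. The paper additionally invokes Cauchy--Schwarz in the $G_c$ semi-inner product ($|b|\le\sqrt{ac}$) to show the minimum value $L(\lambda^\star)$ is nonnegative, which none of the displayed claims requires.
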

The rule gives a direct interpretation: $b$ is the usable covariance benefit, $c$ is the synthetic-variance price, and $\kappa=n/N$ records pool-estimation precision. Positive projected covariance supports some positive power but does not justify unit power. Appendix~\ref{app:additional_results} gives the certified-pooling robustness comparison (Proposition~\ref{prop:pooling_robustness}), an estimable-power rule (Corollary~\ref{cor:plugin_power}), and the trusted-label allocation result (Corollary~\ref{cor:label_allocation}).

\section{Experimental Study}\label{sec:experiments}
The empirical study asks three connected questions. First, does synthetic-powered quantile learning improve conditional coverage relative to the supervised method in the same conformal family? Second, are the gains attributable to input-specific synthetic information rather than only additional optimization or regularization? Third, how do labeler quality, the synthetic-pool size, trusted-label scarcity, and final calibration affect the result? We answer these questions with a common audited protocol, then examine a realistic external-LLM setting.

\subsection{Benchmark design}\label{sec:experimental_setup}
\paragraph{Datasets.}
We use eight public regression benchmarks spanning scalar, multivariate, and high-dimensional outcomes. Table~\ref{tab:dataset_summary} summarizes the processed data used by every method. Bike Sharing and Diamonds are large scalar-regression problems; Gas Turbine, Naval Propulsion, SGEMM, and Transcoding have two or four outputs; WEC has 49 outputs and tests whether one score radius can control a high-dimensional prediction box. Dataset sources, preprocessing, target definitions, and license notes appear in Appendix~\ref{app:datasets}.
\begin{table}[htbp]
\centering
\caption{Regression benchmarks used in the main study. $d_X$ and $d_Y$ are the processed covariate and response dimensions. SGEMM is evaluated on a fixed 50,000-row subsample; all other datasets use every cleaned observation.}
\label{tab:dataset_summary}
\begin{tabular*}{\linewidth}{@{\extracolsep{\fill}}lrrrl@{}}
\toprule
Dataset & $N$ & $d_X$ & $d_Y$ & Application domain \\
\midrule
Bike Sharing & 17,379 & 13 & 1 & Urban mobility \\
Diamonds & 53,940 & 23 & 1 & Price prediction \\
Gas Turbine & 36,733 & 9 & 2 & Power-plant emissions \\
Naval Propulsion & 11,934 & 16 & 2 & Predictive maintenance \\
SGEMM & 50,000 & 14 & 4 & GPU-kernel performance \\
Superconductivity & 21,263 & 81 & 1 & Materials science \\
Transcoding & 68,784 & 23 & 2 & Video processing \\
WEC & 27,000 & 98 & 49 & Wave-energy farms \\
\bottomrule
\end{tabular*}
\end{table}

\paragraph{Data roles and label budget.}
Each seed begins with one random permutation and disjoint maximum reservoirs. One percent of the observations fit preprocessing, 16\% train the auxiliary labeler, 4\% validate it, up to 40\% provide covariates for synthetic labeling, up to 4\% train the base predictor or CQR endpoints, and up to 2\% form the conformal reservoir. Separate samples define conditional-coverage groups, select worst slices, and evaluate all methods. The main low-label configuration uses a 30\% synthetic pool, 2\% base training, and a 1\% conformal reservoir. RCP splits the latter equally between score-quantile learning and untouched final calibration; Split, SPPI, and NNM use the full reservoir for calibration. Resource sweeps take nested prefixes of fixed reservoirs, so changing one resource never changes the evaluation rows. Ordinary synthetic methods do not read hidden outcomes in the synthetic pool.

This accounting is deliberately strict. The auxiliary labeler is trained externally to the downstream conformal procedure, while the final calibration sample is never used to fit a representation, select a power, choose a checkpoint, or estimate a conditional metric. These separations make the finite-sample coverage guarantee and the three-resource interpretation operational rather than only conceptual.

\paragraph{Models and methods.}
The tabular auxiliary labeler is a random forest trained only on its designated trusted outcomes. All CQR and RCP variants use two-hidden-layer width-128 ReLU networks, common initialization within each dataset--seed family, and matched optimization budgets; they differ only in the learning objective and in which synthetic data that objective receives. All final conformal corrections use the exact order statistic.

Split is ordinary split conformal prediction with trusted residual scores. SPPI~\cite{bashari2025syntheticpowered} transports ranks between trusted and synthetic score distributions. NNM~\cite{zhou2025semi} corrects synthetic scores by matching them to nearby trusted observations. CQR~\cite{romano2019conformalized} learns lower and upper outcome quantiles and conformalizes their endpoint error; CQR-PPI replaces the supervised endpoint losses by the bias-corrected prediction-powered objective in Eq.~\eqref{eq:ppi_obj_main}.

RCP learns a conditional quantile of the $\ell_\infty$ residual score and then applies an independent scalar correction. RCP-PTFT pretrains the same quantile network on synthetic scores and fine-tunes it on trusted scores. RCP-PPI-CV fits prediction-powered candidates and selects $\lambda\in\{0,.25,.5,.75,1\}$ using trusted pinball validation. RCP-Aug minimizes $P_n\ell_q(X,S)+0.5\widetilde P_N\ell_q(\widetilde X,\widetilde S)$; its positive synthetic risk can improve finite-sample stability but is not bias robust. The proposed RCP-PPI uses Eq.~\eqref{eq:ppi_obj_main} at the preregistered unit power. Our most controlled comparisons---CQR versus CQR-PPI and RCP versus RCP-PPI---hold architecture, initialization, data roles, optimization, and final calibration fixed, changing only the quantile-learning objective. Appendix~\ref{app:method_details} gives all solver and architecture details.

\paragraph{Conditional-coverage metrics and uncertainty.}
We report three complementary diagnostics. Independent data fit $K=30$ covariate groups; for group masses $\widehat p_k$ and held-out group coverages $\widehat c_k$, grouped MSCE is
\begin{equation}\label{eq:group_raw_msce_main}
\widehat{\MSCE}_{30}=\sum_{k=1}^{30}\widehat p_k(\widehat c_k-\tau)^2.
\end{equation}
Worst-Slice Coverage (WSC) selects a low-coverage slab on a dedicated sample and evaluates it on different observations. $L_1$-ERT fits a cross-validated coverage-probability model and measures predictable deviation from the nominal target. Lower MSCE and $L_1$-ERT and higher WSC are better. We also report marginal coverage, original-scale log volume, empty/full-set frequencies, failures, and a split-half grouped estimator that removes the leading Bernoulli evaluation-noise term. All principal entries are mean $\pm$ standard deviation over 30 common splits. Paired bootstrap intervals compare a synthetic method with the supervised method in the same family on identical splits and evaluation rows; no cross-dataset average is treated as an inferential estimand. Formal definitions appear in Appendix~\ref{app:metric_details}.

\subsection{Main conditional-coverage benchmark}\label{sec:main_results}
Tables~\ref{tab:main_msce}--\ref{tab:main_ert} report the complete ten-method benchmark under the moderate synthetic labeler.
\begin{table}[htbp]
\centering
\caption{Grouped mean-squared conditional-coverage error (MSCE; lower is better) on Bike Sharing, Diamonds, Gas Turbine, Naval Propulsion, SGEMM, Superconductivity, Transcoding, and WEC. We report $10^3\!\times$ mean $\pm$ standard deviation over 30 common splits. Rules separate method families; bold marks the best value in each column, and abbreviated headers are defined in Section~\ref{sec:experimental_setup}.}
\label{tab:main_msce}
\begin{tabularx}{\linewidth}{@{}l*{4}{>{\centering\arraybackslash}X}@{}}
\toprule
Method & Bike & Diam. & Gas & Naval \\
\midrule
Split & 5.89$\pm$1.58 & 13.10$\pm$2.62 & 8.13$\pm$2.08 & 10.15$\pm$3.87 \\
SPPI & 6.92$\pm$2.18 & 14.01$\pm$2.60 & 10.05$\pm$2.54 & 11.99$\pm$4.83 \\
NNM & 6.13$\pm$2.09 & 12.99$\pm$2.85 & 8.16$\pm$2.00 & 11.02$\pm$4.86 \\
\midrule
CQR & 3.39$\pm$1.19 & 6.06$\pm$1.66 & 2.45$\pm$0.71 & 7.47$\pm$2.84 \\
CQR-PPI & 2.60$\pm$1.68 & 3.36$\pm$0.74 & \textbf{2.14$\pm$0.62} & \textbf{5.54$\pm$1.98} \\
\midrule
RCP & 5.76$\pm$3.16 & 7.12$\pm$2.31 & 4.42$\pm$2.19 & 10.29$\pm$5.74 \\
RCP-PTFT & 4.98$\pm$2.64 & 5.58$\pm$1.88 & 3.43$\pm$1.48 & 9.27$\pm$5.33 \\
RCP-PPI-CV & 2.90$\pm$1.56 & 4.35$\pm$2.43 & 3.64$\pm$1.57 & 7.64$\pm$4.66 \\
RCP-Aug & 3.40$\pm$1.62 & 3.67$\pm$0.88 & 3.21$\pm$1.38 & 7.60$\pm$3.86 \\
\textbf{RCP-PPI} & \textbf{2.52$\pm$1.55} & \textbf{3.33$\pm$1.22} & 3.01$\pm$1.17 & 6.41$\pm$2.57 \\
\bottomrule
\end{tabularx}

\tablepanelgap
\begin{tabularx}{\linewidth}{@{}l*{4}{>{\centering\arraybackslash}X}@{}}
\toprule
Method & SGEMM & Cond. & Trans. & WEC \\
\midrule
Split & 5.94$\pm$1.53 & 8.12$\pm$2.55 & 5.68$\pm$1.83 & 13.28$\pm$4.36 \\
SPPI & 6.16$\pm$1.23 & 11.05$\pm$4.32 & 6.38$\pm$1.71 & 15.98$\pm$4.79 \\
NNM & 5.86$\pm$1.04 & 9.02$\pm$3.67 & 5.64$\pm$1.26 & 13.13$\pm$4.35 \\
\midrule
CQR & 3.89$\pm$1.03 & 6.07$\pm$2.00 & 4.12$\pm$1.44 & 8.85$\pm$2.73 \\
CQR-PPI & \textbf{1.04$\pm$0.25} & 4.46$\pm$1.28 & \textbf{2.59$\pm$0.58} & 9.86$\pm$3.34 \\
\midrule
RCP & 3.68$\pm$1.61 & 7.02$\pm$2.55 & 4.75$\pm$2.36 & 6.71$\pm$3.44 \\
RCP-PTFT & 3.17$\pm$1.54 & 6.48$\pm$2.15 & 4.19$\pm$2.46 & \textbf{5.38$\pm$2.54} \\
RCP-PPI-CV & 2.78$\pm$2.20 & 4.29$\pm$1.86 & 3.78$\pm$2.98 & 7.88$\pm$3.58 \\
RCP-Aug & 2.26$\pm$1.09 & 4.38$\pm$1.04 & 3.50$\pm$2.46 & 5.92$\pm$2.98 \\
\textbf{RCP-PPI} & 3.09$\pm$2.13 & \textbf{4.07$\pm$1.98} & 3.20$\pm$2.81 & 8.00$\pm$3.39 \\
\bottomrule
\end{tabularx}
\end{table}

\begin{table}[htbp]
\centering
\caption{Worst-Slice Coverage (WSC) over the same 30 common splits; higher is better. Rules separate method families, and bold marks the best value in each dataset.}
\label{tab:main_wsc}
\begin{tabularx}{\linewidth}{@{}l*{4}{>{\centering\arraybackslash}X}@{}}
\toprule
Method & Bike & Diam. & Gas & Naval \\
\midrule
Split & .843$\pm$.048 & .620$\pm$.051 & .719$\pm$.057 & .752$\pm$.080 \\
SPPI & .817$\pm$.053 & .604$\pm$.048 & .686$\pm$.059 & .705$\pm$.080 \\
NNM & .838$\pm$.050 & .621$\pm$.052 & .723$\pm$.061 & .731$\pm$.088 \\
\midrule
CQR & .867$\pm$.047 & .763$\pm$.046 & .827$\pm$.035 & .817$\pm$.062 \\
CQR-PPI & .883$\pm$.044 & .810$\pm$.028 & \textbf{.854$\pm$.032} & \textbf{.828$\pm$.057} \\
\midrule
RCP & .855$\pm$.072 & .724$\pm$.056 & .803$\pm$.066 & .760$\pm$.098 \\
RCP-PTFT & .857$\pm$.073 & .761$\pm$.052 & .827$\pm$.046 & .779$\pm$.089 \\
RCP-PPI-CV & \textbf{.891$\pm$.053} & .798$\pm$.056 & .821$\pm$.059 & .817$\pm$.083 \\
RCP-Aug & .882$\pm$.059 & .803$\pm$.031 & .839$\pm$.048 & .807$\pm$.080 \\
\textbf{RCP-PPI} & .890$\pm$.051 & \textbf{.816$\pm$.041} & .839$\pm$.049 & .825$\pm$.068 \\
\bottomrule
\end{tabularx}

\tablepanelgap
\begin{tabularx}{\linewidth}{@{}l*{4}{>{\centering\arraybackslash}X}@{}}
\toprule
Method & SGEMM & Cond. & Trans. & WEC \\
\midrule
Split & .739$\pm$.049 & .790$\pm$.043 & .745$\pm$.050 & .781$\pm$.040 \\
SPPI & .737$\pm$.052 & .758$\pm$.055 & .732$\pm$.049 & .758$\pm$.045 \\
NNM & .741$\pm$.056 & .785$\pm$.051 & .746$\pm$.047 & .785$\pm$.047 \\
\midrule
CQR & .784$\pm$.036 & .818$\pm$.052 & .779$\pm$.049 & .821$\pm$.035 \\
CQR-PPI & \textbf{.875$\pm$.021} & .847$\pm$.043 & .824$\pm$.034 & .810$\pm$.042 \\
\midrule
RCP & .797$\pm$.038 & .802$\pm$.066 & .795$\pm$.039 & .808$\pm$.052 \\
RCP-PTFT & .817$\pm$.035 & .810$\pm$.057 & .809$\pm$.035 & \textbf{.822$\pm$.046} \\
RCP-PPI-CV & .829$\pm$.049 & .843$\pm$.046 & .813$\pm$.048 & .813$\pm$.053 \\
RCP-Aug & .835$\pm$.034 & .846$\pm$.051 & .827$\pm$.036 & .818$\pm$.053 \\
\textbf{RCP-PPI} & .837$\pm$.041 & \textbf{.851$\pm$.044} & \textbf{.832$\pm$.048} & .813$\pm$.048 \\
\bottomrule
\end{tabularx}
\end{table}

\begin{table}[htbp]
\centering
\caption{$L_1$ Excess Risk of Target coverage ($L_1$-ERT) over 30 common splits. Entries are mean $\pm$ standard deviation; lower is better.}
\label{tab:main_ert}
\begin{tabularx}{\linewidth}{@{}l*{4}{>{\centering\arraybackslash}X}@{}}
\toprule
Method & Bike & Diam. & Gas & Naval \\
\midrule
Split & .064$\pm$.008 & .127$\pm$.006 & .052$\pm$.005 & .062$\pm$.017 \\
SPPI & .066$\pm$.009 & .129$\pm$.004 & .058$\pm$.007 & .063$\pm$.017 \\
NNM & .062$\pm$.008 & .127$\pm$.005 & .053$\pm$.006 & .064$\pm$.016 \\
\midrule
CQR & .039$\pm$.009 & .085$\pm$.010 & .031$\pm$.008 & .047$\pm$.012 \\
CQR-PPI & .035$\pm$.014 & .062$\pm$.006 & \textbf{.028$\pm$.009} & \textbf{.042$\pm$.012} \\
\midrule
RCP & .058$\pm$.014 & .090$\pm$.008 & .044$\pm$.010 & .065$\pm$.021 \\
RCP-PTFT & .052$\pm$.012 & .075$\pm$.009 & .039$\pm$.009 & .059$\pm$.023 \\
RCP-PPI-CV & .038$\pm$.016 & .063$\pm$.019 & .039$\pm$.011 & .054$\pm$.020 \\
RCP-Aug & .041$\pm$.013 & .059$\pm$.008 & .037$\pm$.009 & .051$\pm$.017 \\
\textbf{RCP-PPI} & \textbf{.033$\pm$.017} & \textbf{.051$\pm$.009} & .036$\pm$.009 & .045$\pm$.015 \\
\bottomrule
\end{tabularx}

\tablepanelgap
\begin{tabularx}{\linewidth}{@{}l*{4}{>{\centering\arraybackslash}X}@{}}
\toprule
Method & SGEMM & Cond. & Trans. & WEC \\
\midrule
Split & .112$\pm$.005 & .095$\pm$.013 & .073$\pm$.004 & .112$\pm$.010 \\
SPPI & .111$\pm$.005 & .108$\pm$.017 & .075$\pm$.005 & .120$\pm$.010 \\
NNM & .112$\pm$.005 & .098$\pm$.017 & .073$\pm$.005 & .113$\pm$.010 \\
\midrule
CQR & .087$\pm$.009 & .070$\pm$.011 & .070$\pm$.007 & .090$\pm$.008 \\
CQR-PPI & .034$\pm$.007 & \textbf{.056$\pm$.009} & .059$\pm$.004 & .092$\pm$.008 \\
\midrule
RCP & .065$\pm$.010 & .081$\pm$.010 & .053$\pm$.009 & .091$\pm$.010 \\
RCP-PTFT & .051$\pm$.011 & .075$\pm$.008 & .049$\pm$.007 & .081$\pm$.008 \\
RCP-PPI-CV & .034$\pm$.015 & .063$\pm$.011 & .040$\pm$.011 & .077$\pm$.010 \\
RCP-Aug & .039$\pm$.011 & .066$\pm$.007 & .042$\pm$.008 & \textbf{.074$\pm$.009} \\
\textbf{RCP-PPI} & \textbf{.031$\pm$.010} & .060$\pm$.008 & \textbf{.034$\pm$.007} & .078$\pm$.008 \\
\bottomrule
\end{tabularx}
\end{table}

Prediction-powered quantile learning consistently strengthens the relevant supervised quantile learner. CQR-PPI lowers grouped MSCE and raises WSC on seven of eight datasets, with particularly large MSCE reductions on SGEMM ($3.89\!\to\!1.04$), Naval ($7.47\!\to\!5.54$), and Transcoding ($4.12\!\to\!2.59$). RCP-PPI lowers grouped MSCE on seven datasets, improves every WSC point estimate, and lowers $L_1$-ERT on all eight. On Diamonds, MSCE falls from $7.12$ to $3.33$, WSC rises from $.724$ to $.816$, and $L_1$-ERT falls from $.090$ to $.051$. Its held-out pinball risk is lower than supervised RCP on every dataset (Appendix~\ref{app:supp_experiments}), providing an upstream learning check before the post-conformal evaluation.

The exceptions are informative rather than hidden. On WEC, the high-dimensional response makes RCP-PTFT strongest in grouped MSCE, while RCP-PPI still improves WSC and $L_1$-ERT. CQR-PPI improves seven datasets but not WEC grouped MSCE. No single numerical diagnostic is treated as the definition of conditional coverage: agreement among MSCE, WSC, $L_1$-ERT, the split-half audit, and prediction-set size is more informative than any one table.

The insertion point and robustness correction also matter. SPPI and NNM enlarge the calibration resource without explicitly learning heterogeneous score quantiles; under this protocol they remain close to or worse than Split CP on conditional metrics. Synthetic pretraining and positive augmentation frequently improve RCP, but their population targets can shift with labeler bias. RCP-PPI-CV often helps but is less stable than the preregistered fixed power when trusted validation data are scarce. The results therefore support prediction-powered quantile learning rather than an undifferentiated claim that every use of generated labels improves conformal prediction.

The gains are not bought by indiscriminate widening. Across all method--dataset cells, marginal coverage lies in $[0.882,0.919]$, no full prediction set occurs, and both CQR-PPI and RCP-PPI reduce mean log volume on every dataset relative to their supervised counterparts. Appendix~\ref{app:completed_results} reports the paired effects, coverage, log-volume, empty-set, and split-half audits.

\subsection{Where do the gains come from?}\label{sec:interpreting_gains}
Positive augmentation can help by regularizing an unstable threshold even when its labels contain little input-specific information. We therefore compare genuine augmentation with a matched permutation placebo that preserves the synthetic-score marginal distribution, pool size, network, initialization, optimizer, and total synthetic weight, while destroying the covariate--score pairing. Figure~\ref{fig:pairing_ablation} shows that genuine labels improve both grouped-MSCE estimators and WSC on five prespecified datasets. The accompanying numerical table and intervals are in Appendix~\ref{app:ablation_results}. The conclusion is deliberately narrow: generic stabilization is an important part of positive augmentation, but genuine pairing provides additional conditional information in these settings.

\begin{figure}[htbp]
    \centering
    \includegraphics[width=0.82\linewidth]{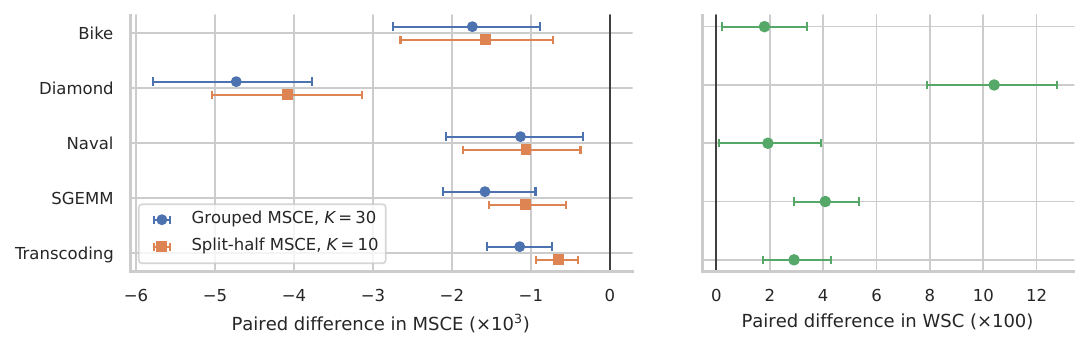}
    \caption{Genuine RCP augmentation versus a matched permutation placebo. Negative MSCE differences and positive WSC differences favor the genuine covariate--score pairing; bars are paired-bootstrap 95\% intervals over 20 common splits.}
    \label{fig:pairing_ablation}
\end{figure}

Teacher accuracy alone does not predict the value of synthetic labels. Table~\ref{tab:teacher_sensitivity} compares weak, moderate, and oracle labelers. Oracle labels generally define an upper ceiling, but the path is not uniformly monotone because post-conformal value depends on boundary-aligned information, the learner's tangent space, and synthetic variance rather than only global predictive accuracy. The trusted-label budget experiment in Table~\ref{tab:budget_sensitivity} shows a clearer pattern: gains are largest in the ultra-low and low regimes and usually shrink as more trusted labels stabilize the supervised learner.

\begin{table}[htbp]
\centering
\caption{Effect of synthetic-label quality. Entries are grouped MSCE $\times10^3$ for weak/moderate/oracle labelers on the same 10 splits; lower is better. Oracle labels use hidden outcomes and are an upper-bound diagnostic.}
\label{tab:teacher_sensitivity}
\setlength{\tabcolsep}{4.5pt}
\begin{tabular*}{\linewidth}{@{\extracolsep{\fill}}lccc@{}}
\toprule
Dataset & CQR-PPI & RCP-Aug & \textbf{RCP-PPI} \\
\midrule
Bike & 2.82/2.33/2.02 & 4.75/4.28/4.21 & 3.96/3.96/2.99 \\
Diamond & 4.48/3.52/0.65 & 3.57/4.18/3.20 & 3.85/4.02/1.13 \\
Gas Turbine & 2.29/2.04/1.05 & 2.74/2.74/2.48 & 3.35/3.00/1.47 \\
Naval & 7.65/5.47/2.92 & 8.38/6.91/6.10 & 9.20/6.06/4.05 \\
SGEMM & 2.59/1.04/0.77 & 2.51/2.38/2.31 & 3.95/3.18/2.90 \\
Supercond. & 5.64/4.53/2.76 & 4.52/4.34/3.40 & 5.57/3.45/1.77 \\
Transcoding & 3.19/2.52/1.08 & 4.16/3.89/3.85 & 5.58/3.75/1.51 \\
WEC & 11.75/10.18/5.97 & 10.38/7.24/3.04 & 9.26/9.35/1.69 \\
\bottomrule
\end{tabular*}
\end{table}

\begin{table}[htbp]
\centering
\caption{Trusted-label budget sensitivity. Entries are paired grouped-MSCE changes $\times10^3$ relative to the supervised method in the same family; negative is better. Ultra-low, low, and moderate use base/conformal proportions of 1\%/0.5\%, 2\%/1\%, and 4\%/2\%, respectively, on the same 10-seed support.}
\label{tab:budget_sensitivity}
\setlength{\tabcolsep}{5pt}
\begin{tabular*}{\linewidth}{@{\extracolsep{\fill}}llrrr@{}}
\toprule
Dataset & Method & Ultra-low & Low & Moderate \\
\midrule
Bike & CQR-PPI & -2.79 & -0.93 & -1.30 \\
Bike & RCP-Aug & -2.94 & -3.34 & -1.62 \\
Bike & \textbf{RCP-PPI} & -3.80 & -3.66 & -2.49 \\
\midrule
Diamond & CQR-PPI & -4.24 & -2.88 & -1.33 \\
Diamond & RCP-Aug & -5.55 & -4.14 & -2.01 \\
Diamond & \textbf{RCP-PPI} & -6.65 & -4.30 & -1.04 \\
\midrule
SGEMM & CQR-PPI & -3.77 & -2.82 & -2.01 \\
SGEMM & RCP-Aug & -2.41 & -1.36 & -0.56 \\
SGEMM & \textbf{RCP-PPI} & -1.49 & -0.56 & -0.46 \\
\midrule
Transcoding & CQR-PPI & -1.68 & -1.35 & -1.29 \\
Transcoding & RCP-Aug & -1.19 & -0.88 & -0.48 \\
Transcoding & \textbf{RCP-PPI} & -2.26 & -1.02 & -0.47 \\
\bottomrule
\end{tabular*}
\end{table}

The pool-size experiment in Table~\ref{tab:varying_N} changes only the synthetic prefix. A small pool already produces most of the observed gains, while further increases often plateau or become mildly nonmonotone. This is consistent with the three-resource theorem: increasing $N$ only reduces the pool-estimation component; it cannot manufacture missing covariance, repair model misspecification, or remove the independent final-calibration term.
\begin{table}[htbp]
\centering
\caption{Effect of the synthetic-pool size on RCP-PPI. Only the synthetic-pool prefix varies; trusted learning, final calibration, grouping, and evaluation roles are fixed. The 0\% row is supervised RCP. Entries are mean $\pm$ standard deviation over 10 common splits. MSCE is multiplied by $10^3$ (lower is better), WSC is higher-is-better, and $L_1$-ERT is lower-is-better.}
\label{tab:varying_N}
\setlength{\tabcolsep}{4.5pt}
\begin{tabular*}{\linewidth}{@{\extracolsep{\fill}}llccc@{}}
\toprule
Dataset & Synthetic share & Grouped MSCE & WSC & $L_1$-ERT \\
\midrule
Bike & 0\% & 7.62$\pm$3.37 & 0.818$\pm$0.083 & 0.063$\pm$0.016 \\
Bike & 5\% & 4.55$\pm$1.98 & 0.863$\pm$0.077 & 0.045$\pm$0.015 \\
Bike & 10\% & 3.66$\pm$2.05 & 0.873$\pm$0.067 & 0.042$\pm$0.019 \\
Bike & 20\% & 3.36$\pm$1.91 & 0.873$\pm$0.063 & 0.038$\pm$0.021 \\
Bike & 40\% & 3.40$\pm$2.18 & 0.869$\pm$0.056 & 0.038$\pm$0.022 \\
\midrule
Diamond & 0\% & 8.32$\pm$3.29 & 0.713$\pm$0.068 & 0.092$\pm$0.010 \\
Diamond & 5\% & 4.32$\pm$1.81 & 0.807$\pm$0.062 & 0.058$\pm$0.008 \\
Diamond & 10\% & 4.06$\pm$1.11 & 0.805$\pm$0.056 & 0.057$\pm$0.008 \\
Diamond & 20\% & 4.01$\pm$1.57 & 0.803$\pm$0.058 & 0.054$\pm$0.008 \\
Diamond & 40\% & 3.85$\pm$1.19 & 0.801$\pm$0.049 & 0.053$\pm$0.008 \\
\midrule
SGEMM & 0\% & 3.74$\pm$1.46 & 0.781$\pm$0.042 & 0.064$\pm$0.012 \\
SGEMM & 5\% & 2.11$\pm$1.08 & 0.867$\pm$0.044 & 0.030$\pm$0.008 \\
SGEMM & 10\% & 2.73$\pm$1.79 & 0.854$\pm$0.040 & 0.033$\pm$0.008 \\
SGEMM & 20\% & 3.41$\pm$2.95 & 0.840$\pm$0.050 & 0.034$\pm$0.010 \\
SGEMM & 40\% & 3.57$\pm$2.82 & 0.845$\pm$0.053 & 0.037$\pm$0.010 \\
\midrule
Transcoding & 0\% & 4.77$\pm$2.72 & 0.807$\pm$0.048 & 0.053$\pm$0.008 \\
Transcoding & 5\% & 3.89$\pm$3.19 & 0.833$\pm$0.072 & 0.038$\pm$0.007 \\
Transcoding & 10\% & 3.91$\pm$3.37 & 0.831$\pm$0.071 & 0.036$\pm$0.007 \\
Transcoding & 20\% & 3.91$\pm$3.70 & 0.826$\pm$0.059 & 0.036$\pm$0.008 \\
Transcoding & 40\% & 3.89$\pm$3.68 & 0.835$\pm$0.059 & 0.037$\pm$0.007 \\
\bottomrule
\end{tabular*}
\end{table}

\subsection{Direct checks of the post-conformal theory}\label{sec:theory_checks_main}
The benchmark establishes practical improvements, while controlled experiments test the mathematical mechanisms. Figure~\ref{fig:post_conformal_derivative} checks the profiled deployment derivative directly. Constant threshold shifts are numerical nulls, and the first-order remainder for linear and quadratic perturbations decays at slopes 1.981 and 2.009, matching the predicted second-order behavior. Figure~\ref{fig:three_resource_scaling} separately examines the resource laws. When $N=4n$ and $m=n$, deployment error decays at an empirical slope close to $-1$; at fixed $n$ and large $m$, the population-calibrated PPI error has the predicted linear dependence on $1/N$, while the supervised control is flat. Together with the calibration-size experiment in Appendix~\ref{app:theory_diagnostics}, these results test the $n^{-1}$, $N^{-1}$, and $m^{-1}$ components without treating simulation as the paper's empirical headline.

\begin{figure}[htbp]
    \centering
    \begin{minipage}[t]{0.485\linewidth}
        \centering
        \includegraphics[width=\linewidth]{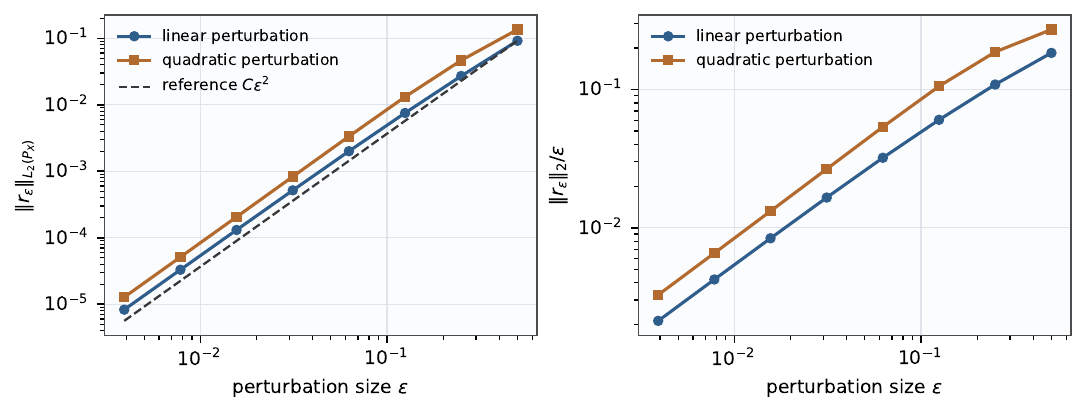}\\[-0.3ex]
        {\small (a) Population derivative and second-order remainder.}
    \end{minipage}\hfill
    \begin{minipage}[t]{0.485\linewidth}
        \centering
        \includegraphics[width=\linewidth]{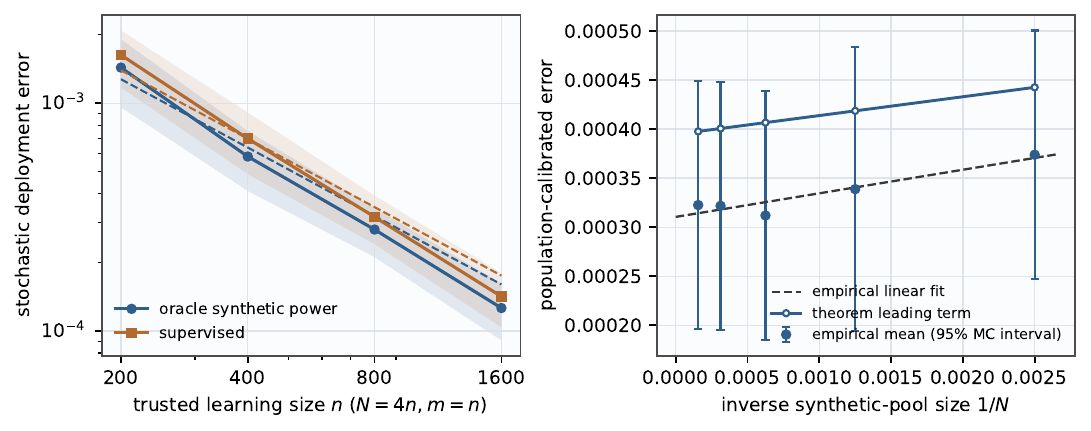}\\[-0.3ex]
        {\small (b) Joint and isolated resource scaling.}
    \end{minipage}
    \caption{Theory-aligned diagnostics. Panel (a) checks exact global-shift nullity and the local density-weighted derivative. Panel (b) checks the three-resource scaling predicted by Theorem~\ref{thm:realized_msce_expansion}. Full settings and numerical intervals are reported in Appendix~\ref{app:theory_followup}.}
    \label{fig:theory_checks}
    \label{fig:post_conformal_derivative}
    \label{fig:three_resource_scaling}
\end{figure}

\section{Realistic External-LLM Supervision}\label{sec:helpsteer_main}
The tabular benchmark uses domain-specific random forests as external labelers. We next ask whether the same idea remains useful when synthetic outcomes come from general-purpose language models and the target is a multivariate human judgment. HelpSteer2 contains five 0--4 ratings---helpfulness, correctness, coherence, complexity, and verbosity---for prompt--response pairs. After deduplication and prompt-level role separation, the study contains 21,352 rows and zero prompt overlap across base training, LLM pilot, development, final calibration, and test roles.

A frozen sentence encoder maps each prompt and response to a 768-dimensional feature vector. The base network predicts the five human ratings, and the nonconformity score is the maximum absolute residual, so the final set is a five-dimensional box controlled by one learned radius. The experiment compares supervised RCP, synthetic-only quantile learning, and PPI using two external LLM routes, denoted Flash and Pro. Each of 20 paired seeds uses 100 paired human/LLM ratings, 400 synthetic-only ratings in the equal-count arms, an independent 100-row development role, 400 human-rated final-calibration rows, and the official prompt-separated test split. Appendix~\ref{app:helpsteer} gives the complete protocol and leakage audit.

Figure~\ref{fig:helpsteer} and Table~\ref{tab:helpsteer_downstream} show the deployed results. Flash-PPI reduces grouped MSCE from $0.00665$ to $0.00502$ (24.6\%), and Pro-PPI reduces it to $0.00400$ (39.9\%). Equal-count Pro-PPI also raises minimum-group coverage from $0.738$ to $0.773$ and lowers mean log volume from $-0.426$ to $-0.481$. These gains are not a free model-ranking result: the Pro route costs 3.20 times as much as Flash at equal label count. At nearly equal spend, 400 Flash labels deliver slightly lower average MSCE and smaller sets than 54.8 Pro labels, whereas the cost-matched Pro arm has slightly higher minimum-group coverage. The experiment therefore exhibits a genuine label quality--quantity--cost tradeoff.

\begin{figure}[htbp]
    \centering
    \includegraphics[width=0.82\linewidth]{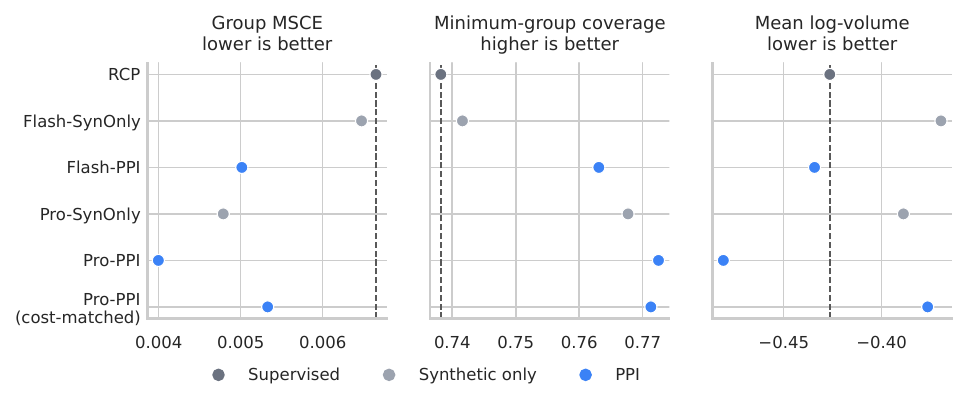}
    \caption{HelpSteer2 downstream results over 20 paired seeds. Equal-count PPI uses 400 auxiliary labels; the cost-matched Pro arm uses 54.8 on average. Dashed lines mark supervised RCP.}
    \label{fig:helpsteer}
\end{figure}
\begin{table}[htbp]
\centering
\caption{HelpSteer2 downstream results, averaged over 20 paired seeds. Equal-count arms use 100 paired points and 400 auxiliary labels; the cost-matched Pro arm uses 54.8 auxiliary labels on average. Lower MSCE, true pinball risk, and log-volume are better; higher minimum-group coverage is better. The two panels report the same methods and seeds.}
\label{tab:helpsteer_downstream}
\begin{tabular*}{\linewidth}{@{\extracolsep{\fill}}lrrrr@{}}
\toprule
Method & Aux. $N$ & Query cost & True pinball & Coverage \\
\midrule
RCP & 0.0 & \$0.0000 & 0.0558 & 0.8794 \\
Flash-SynOnly & 400.0 & \$0.0422 & 0.0520 & 0.8865 \\
Flash-PPI & 400.0 & \$0.0528 & 0.0611 & 0.8899 \\
Pro-SynOnly & 400.0 & \$0.1352 & 0.0507 & 0.8886 \\
Pro-PPI & 400.0 & \$0.1693 & 0.0619 & 0.8937 \\
Pro-PPI (cost-matched) & 54.8 & \$0.0527 & 0.0721 & 0.8846 \\
\bottomrule
\end{tabular*}

\tablepanelgap
\begin{tabular*}{\linewidth}{@{\extracolsep{\fill}}lrrr@{}}
\toprule
Method & MSCE & Min-group coverage & Log-volume \\
\midrule
RCP & 0.00665 & 0.738 & $-0.426$ \\
Flash-SynOnly & 0.00648 & 0.742 & $-0.370$ \\
Flash-PPI & 0.00502 & 0.763 & $-0.434$ \\
Pro-SynOnly & 0.00479 & 0.768 & $-0.389$ \\
Pro-PPI & 0.00400 & 0.773 & $-0.481$ \\
Pro-PPI (cost-matched) & 0.00533 & 0.771 & $-0.376$ \\
\bottomrule
\end{tabular*}
\end{table}

\begin{table}[htbp]
\centering
\caption{Paired HelpSteer2 effects relative to RCP. Entries are mean differences with paired-bootstrap 95\% intervals. Negative MSCE and log-volume and positive minimum-group coverage favor PPI.}
\label{tab:helpsteer_paired}
\setlength{\tabcolsep}{4pt}
\begin{tabularx}{\linewidth}{@{}>{\raggedright\arraybackslash}p{0.23\linewidth}*{3}{>{\centering\arraybackslash}X}@{}}
\toprule
Metric & Flash-PPI & Pro-PPI & \shortstack{Pro-PPI\\(cost-matched)} \\
\midrule
$\Delta$MSCE & \shortstack{$-0.00164$\\$[-0.00294,-0.00033]$} & \shortstack{$-0.00265$\\$[-0.00347,-0.00179]$} & \shortstack{$-0.00132$\\$[-0.00263,-0.00005]$} \\
\addlinespace[4pt]
\shortstack[l]{$\Delta$ minimum-group\\coverage} & \shortstack{$+0.0249$\\$[-0.0006,+0.0494]$} & \shortstack{$+0.0343$\\$[+0.0127,+0.0556]$} & \shortstack{$+0.0331$\\$[+0.0163,+0.0496]$} \\
\addlinespace[4pt]
$\Delta$ log-volume & \shortstack{$-0.0078$\\$[-0.0375,+0.0242]$} & \shortstack{$-0.0543$\\$[-0.0909,-0.0168]$} & \shortstack{$+0.0501$\\$[+0.0091,+0.0899]$} \\
\bottomrule
\end{tabularx}
\end{table}

\begin{table}[htbp]
\centering
\caption{Global rating accuracy and local boundary information on the 500-row HelpSteer2 pilot. Shuffling preserves each synthetic-score marginal while destroying input-specific alignment.}
\label{tab:helpsteer_boundary}
\setlength{\tabcolsep}{4pt}
\begin{tabular*}{\linewidth}{@{\extracolsep{\fill}}lrrrrr@{}}
\toprule
Labeler & Macro MAE & \shortstack{Observed\\disagree.} & \shortstack{Shuffled\\disagree.} & \shortstack{Reduction vs.\\shuffled} & \shortstack{Balanced\\acc.} \\
\midrule
Constant median & 0.680 & 0.136 & 0.136 & 0.0\% & 0.500 \\
Flash & 0.706 & 0.134 & 0.198 & 32.5\% & 0.637 \\
Pro & 0.846 & 0.152 & 0.251 & 39.5\% & 0.701 \\
\bottomrule
\end{tabular*}
\end{table}

Global rating error is not the relevant diagnostic by itself. Only 13.6\% of trusted pilot scores lie above the target boundary, so an always-below constant-median labeler achieves 86.4\% raw boundary agreement without useful discrimination. The shuffled-score reference in Table~\ref{tab:helpsteer_boundary} preserves each LLM's score marginal while destroying input-specific alignment. Flash and Pro reduce disagreement relative to that reference by 32.5\% and 39.5\%, with balanced boundary accuracies 0.637 and 0.701. Thus the external judges retain local boundary-side information even though neither beats the constant-median baseline in global five-rating MAE.

This case study also clarifies the profiled-risk perspective. The PPI models do not improve held-out raw or cross-calibrated pinball risk relative to RCP, yet their final grouped conditional-coverage error is lower. The observation is compatible with Proposition~\ref{prop:profiled_risk}: risk values and functional-gradient norms need not rank a finite candidate set identically. It is not presented as a direct empirical proof of the population identity, and prompt-cluster dependence makes the coverage results descriptive rather than a new row-level i.i.d. conformal guarantee.

\section{Discussion}\label{sec:discussion}
\paragraph{What synthetic labels contribute.}
The experiments support two complementary roles. Bias-corrected PPI uses paired trusted and synthetic scores as a control variate for the quantile-learning risk. Its first-order value is governed by post-conformal covariance, not by generic prediction accuracy. Flexible positive augmentation can additionally stabilize a high-variance threshold learner, but this benefit is not bias robust and need not reflect correct covariate--label pairing. The permutation experiment separates these effects rather than collapsing them into one claim.

\paragraph{Practical design lessons.}
First, preserve an untouched trusted final-calibration split: it insulates marginal validity from all upstream uses of synthetic supervision. Second, compare synthetic methods with the supervised procedure in the same conformal family; otherwise improvements may be confounded by different scores, architectures, or calibration allocations. Third, report conditional diagnostics together with marginal coverage and set size. A method that reduces a grouped error only by creating full or excessively wide sets is not useful. Finally, treat labeler choice as a task- and budget-dependent decision. The amount of boundary-aligned information per dollar may matter more than either global label accuracy or the nominal strength of the external model.

\paragraph{Limitations.}
The sharp three-resource theorem is local and fixed dimensional, with the neural implementations interpreted through controlled comparisons and complementary consistency results in the appendix. Grouped MSCE, WSC, and $L_1$-ERT are finite-resolution diagnostics, not distribution-free pointwise conditional-validity guarantees. The main tabular results concern eight regression datasets, one target coverage level, and fixed external labelers. The HelpSteer2 study uses prompt clusters and frozen LLM outputs, so it should be read as a realistic case study rather than a universal guarantee about LLM-generated labels. Finally, synthetic labels do not replace trusted outcomes: they cannot remove final-calibration uncertainty, and equal-quality additional trusted labels remain a strong resource benchmark.

\section{Conclusion}
With abundant covariates but few trusted labels, prediction-powered quantile learning offers a principled route from synthetic supervision to more reliable conditional coverage while preserving the usual marginal guarantee. Across external labelers---including domain models and general-purpose LLMs---bias-corrected quantile learning improves deployed coverage and often yields smaller prediction sets. The theory explains why these gains depend neither on global label accuracy nor on raw learning-stage covariance: final conformalization removes global threshold shifts and retains only density-weighted shape information aligned with the component of trusted learning error visible at deployment. This post-conformal view separates usable synthetic information from synthetic noise and finite-calibration error, and provides a benefit--cost rule for how strongly synthetic labels should influence quantile learning.

\bibliography{ref}

@inproceedings{plassier2025rectifying,
  title     = {Rectifying Conformity Scores for Better Conditional Coverage},
  author    = {Plassier, Vincent and Fishkov, Alexander and Dheur, Victor and Guizani, Mohsen and Ben Taieb, Souhaib and Panov, Maxim and Moulines, Eric},
  booktitle = {Proceedings of the 42nd International Conference on Machine Learning},
  series    = {Proceedings of Machine Learning Research},
  volume    = {267},
  pages     = {49459--49492},
  publisher = {PMLR},
  year      = {2025}
}

@inproceedings{kiyani2024conformal,
  title     = {Conformal Prediction with Learned Features},
  author    = {Kiyani, Shayan and Pappas, George J. and Hassani, Hamed},
  booktitle = {Proceedings of the 41st International Conference on Machine Learning},
  series    = {Proceedings of Machine Learning Research},
  volume    = {235},
  pages     = {24749--24769},
  publisher = {PMLR},
  year      = {2024}
}

@article{lei2018distribution,
  title={Distribution-free predictive inference for regression},
  author={Lei, Jing and G’Sell, Max and Rinaldo, Alessandro and Tibshirani, Ryan J and Wasserman, Larry},
  journal={Journal of the American Statistical Association},
  volume={113},
  number={523},
  pages={1094--1111},
  year={2018},
  publisher={Taylor \& Francis}
}

@article{lei2014distribution,
  title={Distribution-free prediction bands for non-parametric regression},
  author={Lei, Jing and Wasserman, Larry},
  journal={Journal of the Royal Statistical Society Series B: Statistical Methodology},
  volume={76},
  number={1},
  pages={71--96},
  year={2014},
  publisher={Oxford University Press}
}

@book{vovk2005algorithmic,
  title={Algorithmic learning in a random world},
  author={Vovk, Vladimir and Gammerman, Alexander and Shafer, Glenn},
  year={2005},
  publisher={Springer}
}

@article{shafer2008tutorial,
  title   = {A Tutorial on Conformal Prediction},
  author  = {Shafer, Glenn and Vovk, Vladimir},
  journal = {Journal of Machine Learning Research},
  volume  = {9},
  number  = {12},
  pages   = {371--421},
  year    = {2008}
}

@article{hore2024local,
  title   = {Conformal Prediction with Local Weights: Randomization Enables Robust Guarantees},
  author  = {Hore, Rohan and Barber, Rina Foygel},
  journal = {Journal of the Royal Statistical Society Series B: Statistical Methodology},
  volume  = {87},
  number  = {2},
  pages   = {549--578},
  year    = {2025},
  doi     = {10.1093/jrsssb/qkae103}
}

@inproceedings{vovk2012conditional,
  title     = {Conditional Validity of Inductive Conformal Predictors},
  author    = {Vovk, Vladimir},
  booktitle = {Proceedings of the Asian Conference on Machine Learning},
  series    = {Proceedings of Machine Learning Research},
  volume    = {25},
  pages     = {475--490},
  publisher = {PMLR},
  year      = {2012}
}

@article{foygel2021limits,
  title={The limits of distribution-free conditional predictive inference},
  author={Foygel Barber, Rina and Candes, Emmanuel J and Ramdas, Aaditya and Tibshirani, Ryan J},
  journal={Information and Inference: A Journal of the IMA},
  volume={10},
  number={2},
  pages={455--482},
  year={2021},
  publisher={Oxford University Press}
}

@article{gibbs2025conformal,
  title   = {Conformal Prediction with Conditional Guarantees},
  author  = {Gibbs, Isaac and Cherian, John J. and Cand{\`e}s, Emmanuel J.},
  journal = {Journal of the Royal Statistical Society Series B: Statistical Methodology},
  volume  = {87},
  number  = {4},
  pages   = {1100--1126},
  year    = {2025},
  doi     = {10.1093/jrsssb/qkaf008}
}

@inproceedings{jung2023batch,
title={Batch Multivalid Conformal Prediction},
author={Christopher Jung and Georgy Noarov and Ramya Ramalingam and Aaron Roth},
booktitle={International Conference on Learning Representations},
year={2023},
}

@article{ding2023class,
  title={Class-conditional conformal prediction with many classes},
  author={Ding, Tiffany and Angelopoulos, Anastasios and Bates, Stephen and Jordan, Michael and Tibshirani, Ryan J},
  journal={Advances in neural information processing systems},
  volume={36},
  pages={64555--64576},
  year={2023}
}

@article{bian2023training,
  title={Training-conditional coverage for distribution-free predictive inference},
  author={Bian, Michael and Barber, Rina Foygel},
  journal={Electronic Journal of Statistics},
  volume={17},
  number={2},
  pages={2044--2066},
  year={2023},
  publisher={The Institute of Mathematical Statistics and the Bernoulli Society}
}

@article{romano2019conformalized,
  title={Conformalized quantile regression},
  author={Romano, Yaniv and Patterson, Evan and Candes, Emmanuel},
  journal={Advances in neural information processing systems},
  volume={32},
  year={2019}
}

@article{izbicki2022cd,
  title={Cd-split and hpd-split: Efficient conformal regions in high dimensions},
  author={Izbicki, Rafael and Shimizu, Gilson and Stern, Rafael B},
  journal={Journal of Machine Learning Research},
  volume={23},
  number={87},
  pages={1--32},
  year={2022}
}

@article{xie2024boosted,
  title={Boosted conformal prediction intervals},
  author={Xie, Ran and Barber, Rina and Candes, Emmanuel},
  journal={Advances in Neural Information Processing Systems},
  volume={37},
  pages={71868--71899},
  year={2024}
}

@misc{braun2025conditionalcoveragediagnosticsconformal,
      title={Conditional Coverage Diagnostics for Conformal Prediction}, 
      author={Sacha Braun and David Holzmüller and Michael I. Jordan and Francis Bach},
      year={2025},
      eprint={2512.11779},
      archivePrefix={arXiv},
}

@inproceedings{plassier2025probabilistic,
title={Probabilistic Conformal Prediction with Approximate Conditional Validity},
author={Vincent Plassier and Alexander Fishkov and Mohsen Guizani and Maxim Panov and Eric Moulines},
booktitle={International Conference on Learning Representations},
year={2025},
}

@article{cauchois2021knowing,
  title={Knowing what you know: valid and validated confidence sets in multiclass and multilabel prediction},
  author={Cauchois, Maxime and Gupta, Suyash and Duchi, John C},
  journal={Journal of machine learning research},
  volume={22},
  number={81},
  pages={1--42},
  year={2021}
}

@article{chen2025colorful,
  title={Colorful Pinball: Density-Weighted Quantile Regression for Conditional Guarantee of Conformal Prediction},
  author={Chen, Qianyi and Li, Bo},
  journal={arXiv preprint arXiv:2512.24139},
  year={2025}
}

@article{angelopoulos2023prediction,
  title   = {Prediction-Powered Inference},
  author  = {Angelopoulos, Anastasios N. and Bates, Stephen and Fannjiang, Clara and Jordan, Michael I. and Zrnic, Tijana},
  journal = {Science},
  volume  = {382},
  number  = {6671},
  pages   = {669--674},
  year    = {2023},
  doi     = {10.1126/science.adi6000}
}

@article{angelopoulos2023ppi++,
  title={Ppi++: Efficient prediction-powered inference},
  author={Angelopoulos, Anastasios N and Duchi, John C and Zrnic, Tijana},
  journal={arXiv preprint arXiv:2311.01453},
  year={2023}
}

@article{ji2025predictions,
  title={Predictions as Surrogates: Revisiting Surrogate Outcomes in the Age of AI},
  author={Ji, Wenlong and Lei, Lihua and Zrnic, Tijana},
  journal={arXiv preprint arXiv:2501.09731},
  year={2025}
}

@inproceedings{bashari2025syntheticpowered,
  title     = {Synthetic-Powered Predictive Inference},
  author    = {Bashari, Meshi and Lotan, Roy Maor and Lee, Yonghoon and Dobriban, Edgar and Romano, Yaniv},
  booktitle = {Advances in Neural Information Processing Systems},
  volume    = {38},
  year      = {2025},
  doi       = {10.52202/085713-3925}
}

@article{zhou2025semi,
  title={Semi-Supervised Conformal Prediction With Unlabeled Nonconformity Score},
  author={Zhou, Xuanning and Zeng, Hao and Xia, Xiaobo and Jing, Bingyi and Wei, Hongxin},
  journal={arXiv preprint arXiv:2505.21147},
  year={2025}
}

@inproceedings{shoham2025predictionpowered,
  title     = {Prediction-Powered Semi-Supervised Learning with Online Power Tuning},
  author    = {Shoham, Noa and Dorfman, Ron and Shaer, Shalev and Levy, Kfir Yehuda and Romano, Yaniv},
  booktitle = {Advances in Neural Information Processing Systems},
  volume    = {38},
  year      = {2025},
  doi       = {10.52202/085713-3632}
}

@book{chapelle2006ssl,
  title     = {Semi-Supervised Learning},
  editor    = {Chapelle, Olivier and Sch{\"o}lkopf, Bernhard and Zien, Alexander},
  publisher = {MIT Press},
  address   = {Cambridge, MA},
  year      = {2006}
}

@inproceedings{colombo2024normalizing,
  title     = {Normalizing Flows for Conformal Regression},
  author    = {Colombo, Nicol{\`o}},
  booktitle = {Proceedings of the 40th Conference on Uncertainty in Artificial Intelligence},
  series    = {Proceedings of Machine Learning Research},
  volume    = {244},
  pages     = {881--893},
  publisher = {PMLR},
  year      = {2024}
}

@inproceedings{seedat2023improving,
  title     = {Improving Adaptive Conformal Prediction Using Self-Supervised Learning},
  author    = {Seedat, Nabeel and Jeffares, Alan and Imrie, Fergus and van der Schaar, Mihaela},
  booktitle = {Proceedings of the 26th International Conference on Artificial Intelligence and Statistics},
  series    = {Proceedings of Machine Learning Research},
  volume    = {206},
  pages     = {10160--10177},
  publisher = {PMLR},
  year      = {2023}
}
\bibliographystyle{iclr2027_conference}

\appendix

\section{Experimental Details and Full Results}\label{app:experiments}
\subsection{Datasets and role protocol}\label{app:datasets}
The eight datasets are Bike Sharing, Diamonds, Gas Turbine, Naval Propulsion, SGEMM, Superconductivity, Transcoding, and WEC. They span scalar and multivariate regression, with response dimension up to 49. The package records the source, version, preprocessing, and license of every dataset. All cleaned observations are used except for a fixed 50,000-row SGEMM sample.

Each seed begins with one random permutation and non-overlapping maximum reservoirs: 1\% preprocessing, 16\% auxiliary-labeler training, 4\% labeler validation, up to 40\% synthetic-pool covariates, up to 4\% base/CQR training, up to 2\% conformal data, 3\% grouping, 3\% worst-slice selection, and the remaining observations for evaluation. The main experiment uses a 30\% synthetic pool, 2\% base training, and a 1\% conformal reservoir. Resource sweeps use nested prefixes, so changing one resource does not move the evaluation data or any other role. Ordinary synthetic methods never access true outcomes in the synthetic pool.

\subsection{Detailed baseline definitions}\label{app:method_details}
\paragraph{Split, SPPI, and NNM.}
Split CP calibrates the global $\ell_\infty$ residual score on the complete conformal reservoir. SPPI~\cite{bashari2025syntheticpowered} uses the published rank-based transporter between trusted and synthetic score distributions. NNM~\cite{zhou2025semi} matches synthetic observations to trusted neighbors and corrects their scores before calibration. All three share the same base predictor, data roles, and evaluation rows.

\paragraph{CQR and CQR-PPI.}
CQR~\cite{romano2019conformalized} predicts coordinatewise lower and upper conditional outcome quantiles with a two-hidden-layer network and conformalizes the maximum signed endpoint error. CQR-PPI replaces each endpoint pinball risk by \eqref{eq:ppi_obj_main}, using paired and pool synthetic outcomes; both methods run on all eight datasets, WEC included.

\paragraph{RCP family.}
RCP learns the conditional $\tau$-quantile of the $\ell_\infty$ residual score on one half of the conformal reservoir and uses the other half for final calibration. RCP-PTFT pretrains the same network on synthetic scores and fine-tunes on trusted scores. RCP-PPI-CV chooses $\lambda$ from $\{0,.25,.5,.75,1\}$ by five-fold trusted pinball validation and refits on the full threshold-learning half. RCP-Aug minimizes $P_n\ell_q(X,S)+0.5\widetilde P_N\ell_q(\widetilde X,\widetilde S)$; because it has no paired subtraction, it is not bias robust. RCP-PPI trains the same network with \eqref{eq:ppi_obj_main} at $\lambda=1$. All RCP variants use bounded nonnegative outputs, Adam learning rate $2\times10^{-3}$, 100 epochs, weight decay, gradient clipping, and common initialization. CQR uses the same width-128 architecture for 200 epochs.

\paragraph{Ablation controls.}
RCP-Aug-Permuted permutes synthetic scores across pool covariates. RCP-Extra uses true pool outcomes and is explicitly a resource upper control. RCP-Pool assigns equal positive weight to every trusted and synthetic score and represents stronger trust in the synthetic law. None of these controls appears in the ten-method ranking.

\subsection{Metrics and statistical reporting}\label{app:metric_details}
K-means with $K=30$ is fitted only on the independent grouping sample. For held-out group coverage $\widehat c_k$ and group mass $\widehat p_k$, the main table reports
\begin{equation}\label{eq:group_raw_msce}
\widehat{\MSCE}_{30}=\sum_{k=1}^{30}\widehat p_k(\widehat c_k-\tau)^2.
\end{equation}
This nonnegative estimator contains finite-evaluation Bernoulli noise. We therefore repeat the analysis with $K=10$, split the evaluation rows in each group into independent halves $A_k,B_k$, and report
\begin{equation}\label{eq:group_cross_product}
\widehat{\MSCE}_{\rm split}=\sum_{k=1}^{10}\widehat p_k(\widehat c_{A_k}-\tau)(\widehat c_{B_k}-\tau),
\end{equation}
whose expectation removes the leading Bernoulli term conditional on the groups. WSC searches 64 fixed projection directions on a dedicated selection sample, subject to minimum mass 0.1, and evaluates the selected slab on different observations. $L_1$-ERT uses an unweighted cross-fitted logistic coverage model against the constant nominal-coverage predictor. We also report marginal coverage, original-scale log volume, empty/full-set frequencies, failures, and true-outcome access.

Every main result uses 30 common data splits. Paired effects subtract the supervised method in the same family on the same split and evaluation observations. Percentile intervals use 5,000 bootstrap resamples of the within-split difference. They are descriptive, dataset-specific intervals without multiple-comparison adjustment; no cross-dataset effect is formed.

\subsection{Paired results, marginal coverage, and volume}\label{app:completed_results}
The main text reports complete MSCE, WSC, and $L_1$-ERT tables; this section adds paired grouped-MSCE changes, marginal coverage, mean log volume, and an independent split-half audit.
\begin{table}[htbp]
\centering
\caption{Paired change in grouped MSCE relative to the supervised method in the same family (synthetic minus supervised, $\times10^3$). Negative is better; $\dagger$ marks a descriptive paired-bootstrap 95\% interval entirely below zero. (Bike, Diamonds, Gas Turbine, and Naval.)}
\label{tab:paired_msce_appendix}
\setlength{\tabcolsep}{4.5pt}
\begin{tabular*}{\linewidth}{@{\extracolsep{\fill}}lrrrr@{}}
\toprule
Method & Bike & Diamond & Gas Turbine & Naval \\
\midrule
SPPI & +1.03 & +0.91 & +1.92 & +1.84 \\
NNM & +0.24 & -0.11 & +0.03 & +0.87 \\
CQR-PPI & -0.79$^\dagger$ & -2.69$^\dagger$ & -0.30$^\dagger$ & -1.93$^\dagger$ \\
RCP-PTFT & -0.78$^\dagger$ & -1.53$^\dagger$ & -0.99$^\dagger$ & -1.02$^\dagger$ \\
RCP-PPI-CV & -2.86$^\dagger$ & -2.77$^\dagger$ & -0.78$^\dagger$ & -2.65$^\dagger$ \\
RCP-Aug & -2.36$^\dagger$ & -3.45$^\dagger$ & -1.21$^\dagger$ & -2.69$^\dagger$ \\
\textbf{RCP-PPI} & -3.24$^\dagger$ & -3.79$^\dagger$ & -1.41$^\dagger$ & -3.88$^\dagger$ \\
\bottomrule
\end{tabular*}
\end{table}

\begin{table}[htbp]
\centering
\caption{Paired change in grouped MSCE relative to the supervised method in the same family (synthetic minus supervised, $\times10^3$). Negative is better; $\dagger$ marks a descriptive paired-bootstrap 95\% interval entirely below zero. (SGEMM, Superconductivity, Transcoding, and WEC.)}
\label{tab:paired_msce_appendix:second}
\setlength{\tabcolsep}{4.5pt}
\begin{tabular*}{\linewidth}{@{\extracolsep{\fill}}lrrrr@{}}
\toprule
Method & SGEMM & Supercond. & Transcoding & WEC \\
\midrule
SPPI & +0.22 & +2.93 & +0.70 & +2.70 \\
NNM & -0.07 & +0.89 & -0.05 & -0.15 \\
CQR-PPI & -2.85$^\dagger$ & -1.61$^\dagger$ & -1.53$^\dagger$ & +1.01 \\
RCP-PTFT & -0.51$^\dagger$ & -0.55 & -0.56$^\dagger$ & -1.33$^\dagger$ \\
RCP-PPI-CV & -0.90 & -2.73$^\dagger$ & -0.97$^\dagger$ & +1.17 \\
RCP-Aug & -1.42$^\dagger$ & -2.65$^\dagger$ & -1.25$^\dagger$ & -0.79 \\
\textbf{RCP-PPI} & -0.59 & -2.95$^\dagger$ & -1.55$^\dagger$ & +1.29 \\
\bottomrule
\end{tabular*}
\end{table}

\begin{table}[htbp]
\centering
\caption{Marginal coverage in the main benchmark (mean $\pm$ standard deviation over 30 splits). The target is 0.9. (Bike, Diamonds, Gas Turbine, and Naval.)}
\label{tab:marginal_coverage_appendix}
\setlength{\tabcolsep}{4.5pt}
\begin{tabular*}{\linewidth}{@{\extracolsep{\fill}}lrrrr@{}}
\toprule
Method & Bike & Diamond & Gas Turbine & Naval \\
\midrule
Split & 0.909$\pm$0.023 & 0.900$\pm$0.013 & 0.898$\pm$0.013 & 0.909$\pm$0.023 \\
SPPI & 0.892$\pm$0.016 & 0.894$\pm$0.008 & 0.882$\pm$0.014 & 0.887$\pm$0.022 \\
NNM & 0.904$\pm$0.019 & 0.900$\pm$0.011 & 0.899$\pm$0.015 & 0.900$\pm$0.019 \\
\midrule
CQR & 0.904$\pm$0.022 & 0.897$\pm$0.011 & 0.900$\pm$0.017 & 0.906$\pm$0.026 \\
CQR-PPI & 0.901$\pm$0.030 & 0.898$\pm$0.012 & 0.901$\pm$0.015 & 0.901$\pm$0.028 \\
\midrule
RCP & 0.910$\pm$0.043 & 0.905$\pm$0.018 & 0.904$\pm$0.024 & 0.916$\pm$0.030 \\
RCP-PTFT & 0.910$\pm$0.039 & 0.903$\pm$0.019 & 0.904$\pm$0.022 & 0.912$\pm$0.031 \\
RCP-PPI-CV & 0.911$\pm$0.036 & 0.906$\pm$0.016 & 0.906$\pm$0.023 & 0.919$\pm$0.031 \\
RCP-Aug & 0.912$\pm$0.036 & 0.905$\pm$0.016 & 0.905$\pm$0.025 & 0.914$\pm$0.031 \\
\textbf{RCP-PPI} & 0.910$\pm$0.035 & 0.902$\pm$0.014 & 0.907$\pm$0.023 & 0.915$\pm$0.030 \\
\bottomrule
\end{tabular*}
\end{table}

\begin{table}[htbp]
\centering
\caption{Marginal coverage in the main benchmark (mean $\pm$ standard deviation over 30 splits). The target is 0.9. (SGEMM, Superconductivity, Transcoding, and WEC.)}
\label{tab:marginal_coverage_appendix:second}
\setlength{\tabcolsep}{4.5pt}
\begin{tabular*}{\linewidth}{@{\extracolsep{\fill}}lrrrr@{}}
\toprule
Method & SGEMM & Supercond. & Transcoding & WEC \\
\midrule
Split & 0.901$\pm$0.013 & 0.907$\pm$0.017 & 0.897$\pm$0.013 & 0.895$\pm$0.016 \\
SPPI & 0.896$\pm$0.009 & 0.890$\pm$0.022 & 0.890$\pm$0.011 & 0.882$\pm$0.016 \\
NNM & 0.901$\pm$0.008 & 0.902$\pm$0.022 & 0.896$\pm$0.009 & 0.895$\pm$0.017 \\
\midrule
CQR & 0.903$\pm$0.013 & 0.899$\pm$0.020 & 0.902$\pm$0.011 & 0.895$\pm$0.016 \\
CQR-PPI & 0.899$\pm$0.012 & 0.902$\pm$0.017 & 0.901$\pm$0.013 & 0.894$\pm$0.016 \\
\midrule
RCP & 0.898$\pm$0.016 & 0.905$\pm$0.025 & 0.900$\pm$0.017 & 0.899$\pm$0.028 \\
RCP-PTFT & 0.900$\pm$0.017 & 0.902$\pm$0.029 & 0.901$\pm$0.015 & 0.901$\pm$0.025 \\
RCP-PPI-CV & 0.896$\pm$0.019 & 0.904$\pm$0.028 & 0.900$\pm$0.015 & 0.900$\pm$0.030 \\
RCP-Aug & 0.900$\pm$0.017 & 0.908$\pm$0.025 & 0.901$\pm$0.016 & 0.898$\pm$0.028 \\
\textbf{RCP-PPI} & 0.894$\pm$0.020 & 0.908$\pm$0.026 & 0.902$\pm$0.017 & 0.901$\pm$0.026 \\
\bottomrule
\end{tabular*}
\end{table}

\begin{table}[htbp]
\centering
\caption{Mean log volume on the original outcome scale (mean $\pm$ standard deviation over 30 splits). Values are comparable only within a dataset; no method produced a full set. (Bike, Diamonds, Gas Turbine, and Naval.)}
\label{tab:log_volume_appendix}
\setlength{\tabcolsep}{4.5pt}
\begin{tabular*}{\linewidth}{@{\extracolsep{\fill}}lrrrr@{}}
\toprule
Method & Bike & Diamond & Gas Turbine & Naval \\
\midrule
Split & 6.16$\pm$0.10 & 7.64$\pm$0.09 & 4.35$\pm$0.14 & -7.07$\pm$0.17 \\
SPPI & 6.08$\pm$0.08 & 7.60$\pm$0.06 & 4.21$\pm$0.13 & -7.18$\pm$0.16 \\
NNM & 6.13$\pm$0.11 & 7.65$\pm$0.08 & 4.36$\pm$0.17 & -7.13$\pm$0.17 \\
\midrule
CQR & 5.89$\pm$0.10 & 7.25$\pm$0.06 & 4.18$\pm$0.13 & -6.88$\pm$0.08 \\
CQR-PPI & 5.70$\pm$0.07 & 7.09$\pm$0.06 & 4.15$\pm$0.12 & -6.90$\pm$0.09 \\
\midrule
RCP & 6.13$\pm$0.18 & 7.36$\pm$0.11 & 4.39$\pm$0.23 & -7.04$\pm$0.19 \\
RCP-PTFT & 6.10$\pm$0.15 & 7.30$\pm$0.10 & 4.34$\pm$0.19 & -7.07$\pm$0.20 \\
RCP-PPI-CV & 6.07$\pm$0.14 & 7.29$\pm$0.11 & 4.38$\pm$0.22 & -7.05$\pm$0.22 \\
RCP-Aug & 6.08$\pm$0.14 & 7.27$\pm$0.09 & 4.32$\pm$0.21 & -7.08$\pm$0.20 \\
\textbf{RCP-PPI} & 6.05$\pm$0.12 & 7.24$\pm$0.06 & 4.37$\pm$0.22 & -7.09$\pm$0.19 \\
\bottomrule
\end{tabular*}
\end{table}

\begin{table}[htbp]
\centering
\caption{Mean log volume on the original outcome scale (mean $\pm$ standard deviation over 30 splits). Values are comparable only within a dataset; no method produced a full set. (SGEMM, Superconductivity, Transcoding, and WEC.)}
\label{tab:log_volume_appendix:second}
\setlength{\tabcolsep}{4.5pt}
\begin{tabular*}{\linewidth}{@{\extracolsep{\fill}}lrrrr@{}}
\toprule
Method & SGEMM & Supercond. & Transcoding & WEC \\
\midrule
Split & 22.46$\pm$0.33 & 4.06$\pm$0.09 & 14.61$\pm$0.16 & 536.00$\pm$1.93 \\
SPPI & 22.33$\pm$0.24 & 3.98$\pm$0.11 & 14.51$\pm$0.13 & 534.63$\pm$1.95 \\
NNM & 22.44$\pm$0.24 & 4.04$\pm$0.11 & 14.59$\pm$0.10 & 536.01$\pm$2.09 \\
\midrule
CQR & 21.62$\pm$0.33 & 3.68$\pm$0.08 & 13.25$\pm$0.17 & 519.16$\pm$2.59 \\
CQR-PPI & 20.09$\pm$0.24 & 3.53$\pm$0.06 & 12.71$\pm$0.14 & 516.41$\pm$2.25 \\
\midrule
RCP & 21.92$\pm$0.35 & 3.83$\pm$0.13 & 14.22$\pm$0.18 & 530.73$\pm$3.70 \\
RCP-PTFT & 21.85$\pm$0.36 & 3.78$\pm$0.13 & 14.16$\pm$0.16 & 528.19$\pm$3.28 \\
RCP-PPI-CV & 21.72$\pm$0.32 & 3.74$\pm$0.12 & 14.09$\pm$0.15 & 525.83$\pm$3.51 \\
RCP-Aug & 21.78$\pm$0.33 & 3.77$\pm$0.12 & 14.08$\pm$0.14 & 526.05$\pm$3.51 \\
\textbf{RCP-PPI} & 21.66$\pm$0.33 & 3.74$\pm$0.11 & 14.04$\pm$0.13 & 525.53$\pm$3.34 \\
\bottomrule
\end{tabular*}
\end{table}

The $K=10$ raw, debiased, and split-half audits agree with the principal $K=30$ direction in nearly all dataset--method cells. Complete interval endpoints, win rates, empty-set frequencies, and task-level audit fields are distributed with the results package.

\subsection{RCP interpretation controls}\label{app:ablation_results}
Direct pooling gives low grouped error on several datasets but assumes that synthetic labels can be trusted without bias correction. RCP-Extra uses true pool outcomes and confirms that additional trusted labels remain a strong resource control. Figure~\ref{fig:pairing_ablation} in the main text visualizes the matched-permutation contrast; the precise values are recorded below.
\begin{table}[htbp]
\centering
\caption{Genuine RCP augmentation versus a matched permutation placebo. The placebo preserves the synthetic-score marginal distribution, sample size, architecture, and optimization, but breaks the covariate--score pairing. Entries are paired differences (genuine minus permuted) with 95\% bootstrap intervals over 20 common splits. Negative MSCE and positive WSC favor genuine pairing.}
\label{tab:pairing_ablation}
\setlength{\tabcolsep}{3.5pt}
\begin{tabular*}{\linewidth}{@{\extracolsep{\fill}}lccc@{}}
\toprule
Dataset & $\Delta$MSCE$_{30}$ ($\times10^3$) & $\Delta$MSCE$_{10}$ ($\times10^3$) & $\Delta$WSC ($\times100$) \\
\midrule
Bike & -1.74 [-2.75, -0.88] & -1.57 [-2.65, -0.72] & 1.80 [0.20, 3.40] \\
Diamond & -4.73 [-5.78, -3.77] & -4.08 [-5.04, -3.14] & 10.41 [7.89, 12.78] \\
Naval & -1.13 [-2.07, -0.34] & -1.06 [-1.86, -0.37] & 1.93 [0.11, 3.92] \\
SGEMM & -1.58 [-2.11, -0.94] & -1.07 [-1.53, -0.55] & 4.08 [2.91, 5.33] \\
Transcoding & -1.14 [-1.56, -0.73] & -0.65 [-0.93, -0.40] & 2.91 [1.76, 4.28] \\
\bottomrule
\end{tabular*}
\end{table}

\subsection{Sensitivity to label quality and trusted-label budget}\label{app:sensitivity_tables}
The main text reports the weak/moderate/oracle labeler comparison and the trusted-label budget sweep. All cells use common splits, and budget sweeps vary only nested trusted-data prefixes while keeping every other role fixed.

\subsection{Supplementary experiments}\label{app:supp_experiments}
\paragraph{Pinball risk of the score-quantile learner.}
Proposition~\ref{prop:risk_coverage_bridge} motivates an intermediate check between the optimization objective and conditional coverage. We evaluate the supervised and unit-PPI score-quantile learners on the same trusted validation folds used for power selection. Table~\ref{tab:pinball_comparison} shows lower mean pinball risk under PPI-ERM on every dataset, together with lower across-split variability. This verifies that prediction-powered learning improves the quantile learner before we ask whether the gain survives final conformalization.
\begin{table}[htbp]
\centering
\caption{Intermediate pinball-risk comparison for the RCP score-quantile learner. ERM and unit PPI-ERM are evaluated on the same trusted validation folds used for power selection. We report mean $\pm$ standard deviation over 30 common splits; a positive gain means lower validation risk under PPI-ERM.}
\label{tab:pinball_comparison}
\setlength{\tabcolsep}{3.5pt}
\begin{tabular*}{\linewidth}{@{\extracolsep{\fill}}lcccc@{}}
\toprule
Dataset & ERM risk & PPI-ERM risk & Risk reduction & PPI-ERM win rate \\
\midrule
Bike & 0.1471$\pm$0.0585 & 0.1030$\pm$0.0364 & 0.0441$\pm$0.0348 & 0.93 \\
Diamond & 0.0338$\pm$0.0169 & 0.0308$\pm$0.0152 & 0.0030$\pm$0.0057 & 0.63 \\
Gas Turbine & 0.1006$\pm$0.0405 & 0.0975$\pm$0.0375 & 0.0030$\pm$0.0110 & 0.57 \\
Naval & 0.0930$\pm$0.0615 & 0.0822$\pm$0.0301 & 0.0108$\pm$0.0357 & 0.60 \\
SGEMM & 0.0493$\pm$0.0166 & 0.0434$\pm$0.0132 & 0.0059$\pm$0.0065 & 0.73 \\
Supercond. & 0.0885$\pm$0.0417 & 0.0754$\pm$0.0356 & 0.0132$\pm$0.0164 & 0.77 \\
Transcoding & 0.0794$\pm$0.0268 & 0.0762$\pm$0.0265 & 0.0032$\pm$0.0079 & 0.57 \\
WEC & 0.2350$\pm$0.0847 & 0.1649$\pm$0.0473 & 0.0702$\pm$0.0623 & 0.90 \\
\bottomrule
\end{tabular*}
\end{table}

\paragraph{Effect of the synthetic-pool size.}
The main experiments use a large synthetic pool because auxiliary labels are cheaper than trusted outcomes. Table~\ref{tab:varying_N} varies only the pool prefix while keeping trusted learning, final calibration, grouping, and evaluation samples fixed. Relative to supervised RCP, even a 5\% pool improves all three metrics on the four audited datasets. Further increases help until the pool-estimation component becomes small, after which performance may plateau or become nonmonotone because the usable covariance and power remain unchanged.
Table~\ref{tab:varying_N} in the main text reports the complete pool-size sweep.

\subsection{Realistic LLM-generated supervision on HelpSteer2}\label{app:helpsteer}
After removing exact normalized prompt--response duplicates and one prompt shared between the official train and validation splits, the experiment contains 21,352 rows. The official validation split is held out as test data, and every other role is allocated at the normalized-prompt level, giving zero prompt overlap across base training, LLM pilot, development, final calibration, and test roles. Each response receives five 0--4 human ratings for helpfulness, correctness, coherence, complexity, and verbosity as distinct quality dimensions.

Prompt and response are encoded separately by a frozen \texttt{all-MiniLM-L6-v2} model and concatenated into a 768-dimensional feature vector. The five-output base predictor is a $768\!\to\!128\!\to\!64\!\to\!5$ MLP trained on 801 rows. Its nonconformity score is $S(x,y)=\max_{k\le 5}|y_k-\widehat\mu_k(x)|$, so one scalar radius defines a five-dimensional box. The score-quantile learner is a $768\!\to\!64\!\to\!32\!\to\!1$ MLP. Within each seed, all methods share initialization. Checkpoint selection uses a separate 100-row human-rated development role, and 400 further human-rated rows are reserved for final conformalization.

The external labelers are the production \texttt{deepseek-v4-flash} and \texttt{deepseek-v4-pro} routes. They score the same 500 pilot rows at temperature zero in JSON mode, using the five HelpSteer2 rubrics and no in-context examples. All 1,000 calls return valid ratings without retries. For each seed, 100 pilot rows form the paired sample and the remaining 400 form the auxiliary pool. Synthetic-only methods fit only the auxiliary synthetic scores; PPI adds the signed true-minus-synthetic correction on the paired rows. The cost-matched Pro arm keeps the 100 paired calls and uses as many auxiliary Pro calls as fit within the Flash-PPI spend.

Figure~\ref{fig:helpsteer} and Tables~\ref{tab:helpsteer_downstream}--\ref{tab:helpsteer_boundary} in the main text report the deployed metrics and boundary audit. For the boundary audit, only 13.6\% of true pilot scores lie above the target boundary, so an always-below constant-median score obtains 86.4\% raw agreement without useful discrimination. We therefore compare each LLM with a shuffled-score reference that preserves its score marginal and above-boundary rate while breaking input-specific alignment. Table~\ref{tab:helpsteer_boundary} reports this comparison. The package includes the executed notebook, rendered report, paper-ready setting, and deterministic figure data.

\subsection{Theory diagnostics}\label{app:theory_diagnostics}
Table~\ref{tab:theory_experiment_map} links each principal controlled experiment to a prediction of the post-conformal analysis. These simulations isolate geometry, information mechanisms, and resource scaling; practical evidence comes from the real-data benchmark and permutation experiment.

\begin{table}[htbp]
\centering
\caption{Theory-to-experiment map for the controlled diagnostics.}
\label{tab:theory_experiment_map}
\setlength{\tabcolsep}{4.0pt}
\begin{tabular}{>{\raggedright\arraybackslash}p{0.225\linewidth}>{\raggedright\arraybackslash}p{0.255\linewidth}>{\raggedright\arraybackslash}p{0.445\linewidth}}
\toprule
Prediction & Experiment & Finding \\
\midrule
Global shifts and some raw covariance are conformally null & Exact shift check and conformal-null simulation & Sets are unchanged by a constant threshold shift; raw covariance remains positive while projected covariance is zero to numerical precision. \\
\addlinespace
The deployment map is a density-weighted first derivative with a second-order remainder & Population derivative check (Figure~\ref{fig:post_conformal_derivative}) & Constant perturbations vanish to machine precision; linear and quadratic remainder slopes are 1.981 and 2.009. \\
\addlinespace
$b=b_{\rm res}+b_{\rm align}$ has two sources & Specification $\times$ residual-coupling design & Coupling activates the within-$X$ residual term, misspecification activates between-$X$ alignment, and the coupled misspecified cell contains both. \\
\addlinespace
Exact independent scores give no first-order PPI gain, but certified samples may be pooled & Exact-independent fifth cell & PPI-optimal power is near zero, whereas direct pooling reduces stochastic error when the synthetic law is certified correct. \\
\addlinespace
Power follows a covariance--variance tradeoff & Paired-score oracle and signed-power curves & In the oracle cell $a=b=c$ and the empirical minimum is near $1/(1+\kappa)$; the signed diagnostic recovers the predicted negative optimum. \\
\addlinespace
Learning, pool estimation, and final calibration are separate resources & Joint $n,N,m$ scan, isolated $N$ scan, and calibration-size sweep & Error follows the predicted $n^{-1}$ rate; the isolated $1/N$ coefficient is positive and close to its leading value, while supervision is flat in $N$; finite-calibration error decreases with $1/m$. \\
\bottomrule
\end{tabular}
\end{table}

\subsection{Direct checks of the deployment derivative and resource scaling}\label{app:theory_followup}
The earlier diagnostics identify null directions and information mechanisms. We add two direct checks of the approximation statements used by the main theory. Both use the controlled score DGP and leave the eight-dataset and HelpSteer2 results unchanged.

\paragraph{Density-weighted deployment derivative.}
For Corollary~\ref{prop:functional_projection}, we evaluate the population conformalization map with 2,048-point Gauss--Legendre quadrature. We perturb the oracle threshold by constant, standardized linear, and standardized quadratic functions at $\epsilon\in\{2^{-1},\ldots,2^{-8}\}$, recomputing the exact population scalar correction at every $\epsilon$. Constant shifts change the deployed threshold by at most $8.88\times10^{-16}$ and the coverage curve by at most $2.22\times10^{-16}$. For the two nonconstant directions, the log--log slopes of the $L_2(P_X)$ first-order remainder over the five smallest perturbations are 1.981 and 2.009. Figure~\ref{fig:post_conformal_derivative} therefore checks both the exact null direction and the second-order remainder, rather than only constructing a null covariance in influence space.

\paragraph{Three-resource scaling.}
We next use the well-specified residual-coupled cell, fix the population oracle power at $0.2541$, and use 30 common seeds per setting. In the joint scan $n\in\{200,400,800,1600\}$ with $N=4n$ and $m=n$, the finite-calibration deployment-error slopes are $-1.159$ for oracle power and $-1.171$ for supervision; the paired-bootstrap interval for the oracle slope is $[-1.364,-0.952]$ and contains the target $-1$. To isolate the pool term, we fix $n=400$ and $m=6400$ and vary $N\in\{400,800,1600,3200,6400\}$. Regressing population-calibrated error on $1/N$ gives coefficient 0.0240 with interval $[0.00643,0.04234]$, compared with the leading coefficient 0.0192; the supervised control, which never reads the pool, is exactly flat. Figure~\ref{fig:three_resource_scaling}, together with the earlier calibration-size experiment, supplies separate numerical evidence for the $n^{-1}$, $N^{-1}$, and $m^{-1}$ terms. The large-$N$ means are noisy relative to the fixed learning floor, so we interpret signs, rates, and scale rather than pointwise equality.

\subsection{Profiled-risk and selection diagnostics}\label{app:profiled_diagnostics}
These experiments evaluate consequences of Proposition~\ref{prop:profiled_risk} and Proposition~\ref{prop:quadratic_discrepancy}; they do not replace fixed unit PPI as the proposed method.

\paragraph{Global-shift audit.}
For each of 240 dataset--seed--candidate comparisons, we add candidate-specific constants before applying the selection criteria. Raw pinball selection changes in 205 comparisons and the complete raw-risk ordering changes in 235. Cross-calibrated pinball, trusted-only and synthetic-assisted coverage moments, and the final deployed thresholds remain invariant up to $8.88\times10^{-16}$. This directly checks the quotient geometry in Proposition~\ref{prop:profiled_risk}; we report it only as an implementation invariant, not as evidence of predictive quality.

\paragraph{Candidate selection.}
We fix six neural candidates---three training powers $\lambda\in\{0,0.5,1\}$ at two checkpoints---and compare raw pinball, cross-calibrated pinball, trusted-only coverage moments, and synthetic-assisted coverage moments. Development data are cross-calibrated, final calibration is untouched, and an Extra-Supervised arm accounts for the additional trusted validation resource. Table~\ref{tab:profiled_selector} and Figure~\ref{fig:profiled_selection} show that the synthetic-assisted selector is worse than fixed unit PPI on Bike, Diamonds, and Transcoding, while its SGEMM interval crosses zero. We therefore retain fixed RCP-PPI and report this study as a negative method result.

\begin{figure}[htbp]
    \centering
    \includegraphics[width=0.88\linewidth]{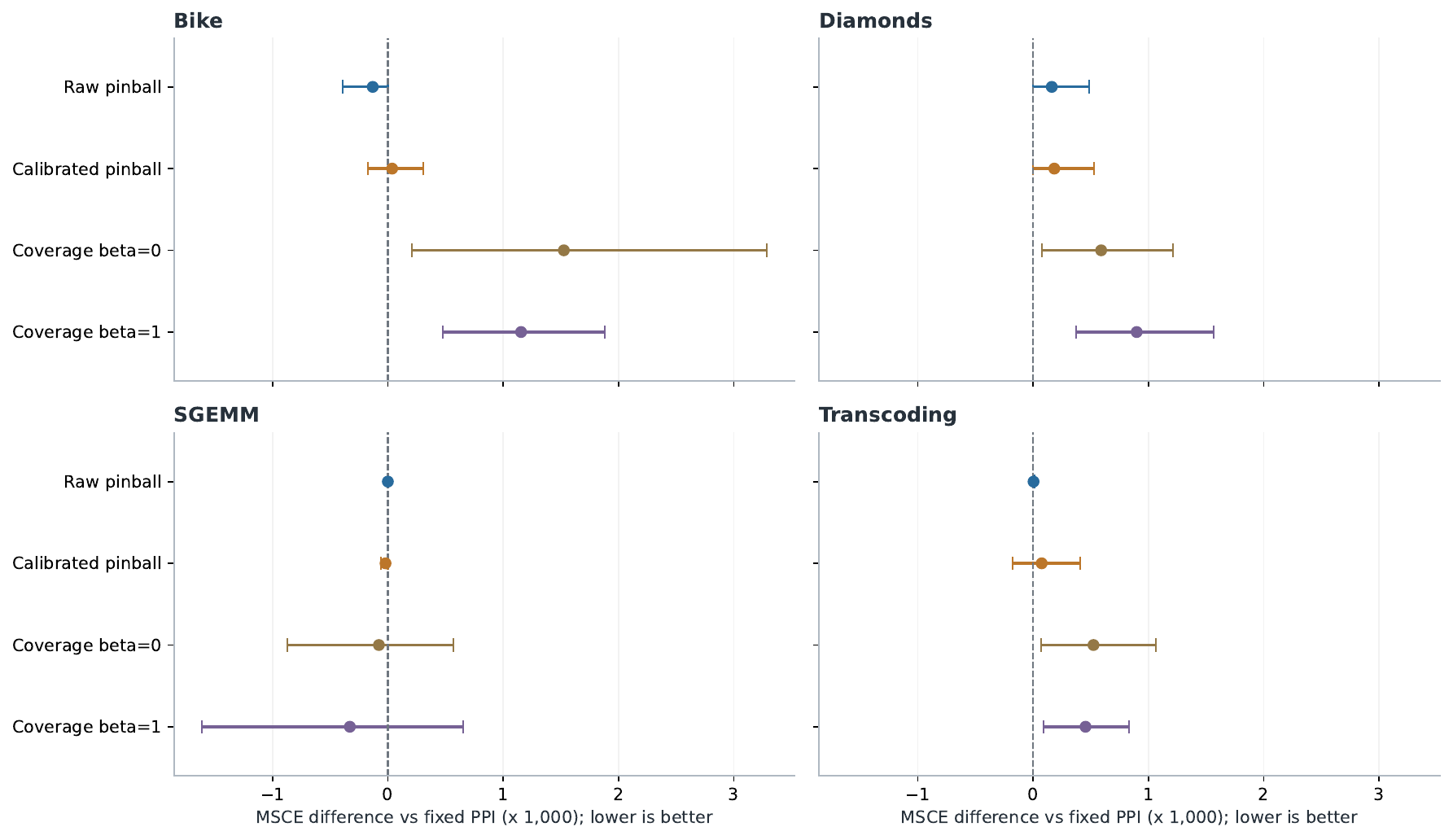}
    \caption{Paired grouped-MSCE changes for the profiled-risk candidate-selection study. Negative values favor the row method. The coverage-based selector does not improve consistently over fixed unit PPI.}
    \label{fig:profiled_selection}
\end{figure}

\begin{table}[htbp]
\centering
\caption{Coverage-moment selection relative to fixed unit PPI. Entries are paired changes in grouped MSCE, multiplied by $10^3$; lower is better. Intervals are descriptive paired-bootstrap 95\% intervals over 15 common splits.}
\label{tab:profiled_selector}
\setlength{\tabcolsep}{4.5pt}
\begin{tabular*}{\linewidth}{@{\extracolsep{\fill}}lrrr@{}}
\toprule
Dataset & Mean $\Delta$ & 95\% interval & Interpretation \\
\midrule
Bike & $+1.155$ & $[+0.476,+1.878]$ & worse \\
Diamonds & $+0.899$ & $[+0.371,+1.569]$ & worse \\
SGEMM & $-0.329$ & $[-1.610,+0.652]$ & inconclusive \\
Transcoding & $+0.455$ & $[+0.092,+0.832]$ & worse \\
\bottomrule
\end{tabular*}
\end{table}

\paragraph{Risk--coverage audit on HelpSteer2.}
Temporary cross-calibration removes global threshold level before measuring pinball risk. Nevertheless, Flash-PPI and Pro-PPI remain worse than supervised RCP by 0.002264 and 0.003030, while their final grouped MSCE improves by 0.001571 and 0.002011. Table~\ref{tab:profiled_helpsteer} therefore exhibits the distinction in Proposition~\ref{prop:profiled_risk}: risk values and gradient norms need not rank a finite candidate set identically. The reconstructed HelpSteer experiment has prompt-cluster dependence, so these are descriptive paired results rather than a new i.i.d. conformal guarantee.

\begin{table}[htbp]
\centering
\caption{HelpSteer2 risk--coverage audit. Entries are PPI minus supervised RCP over 20 paired seeds. Lower is better for pinball risk, MSCE, and log volume; higher is better for minimum-group coverage.}
\label{tab:profiled_helpsteer}
\setlength{\tabcolsep}{3.5pt}
\begin{tabular*}{\linewidth}{@{\extracolsep{\fill}}lrrrr@{}}
\toprule
Method & Cross-cal. pinball & Group MSCE & Min-group cov. & Log volume \\
\midrule
Flash-PPI & $+0.002264$ & $-0.001571$ & $+0.01799$ & $-0.02825$ \\
Pro-PPI & $+0.003030$ & $-0.002011$ & $+0.02410$ & $-0.06575$ \\
\bottomrule
\end{tabular*}
\end{table}

\paragraph{Quadratic-estimator Monte Carlo.}
We test Eq.~\eqref{eq:quadratic_discrepancy_estimator} and Eq.~\eqref{eq:quadratic_discrepancy_variance} in 32 settings with 100,000 repetitions each. The largest absolute bias and variance-formula discrepancies are 2.10 and 2.36 Monte Carlo standard errors. Synthetic correction reduces variance under positive coupling but inflates it under independence or negative coupling, as summarized in Table~\ref{tab:quadratic_variance_ratio} and Figure~\ref{fig:profiled_estimator_variance}. Correct estimation therefore does not imply improved model selection.

\begin{table}[htbp]
\centering
\caption{Variance ratio of the synthetic-assisted quadratic estimator ($\beta=1$) to its trusted-only version ($\beta=0$) across $n\in\{20,50,100,200\}$ and $N=5n$. Values below one indicate variance reduction.}
\label{tab:quadratic_variance_ratio}
\setlength{\tabcolsep}{5pt}
\begin{tabular*}{\linewidth}{@{\extracolsep{\fill}}lcc@{}}
\toprule
True--synthetic coupling & Range of variance ratios & Consequence \\
\midrule
Positive, nonzero moment & $0.45$--$0.57$ & reduction \\
Positive, zero moment & $0.37$--$0.38$ & reduction \\
Independent & $2.29$--$3.18$ & inflation \\
Negative & $3.01$--$4.61$ & inflation \\
\bottomrule
\end{tabular*}
\end{table}

\begin{figure}[htbp]
    \centering
    \includegraphics[width=0.84\linewidth]{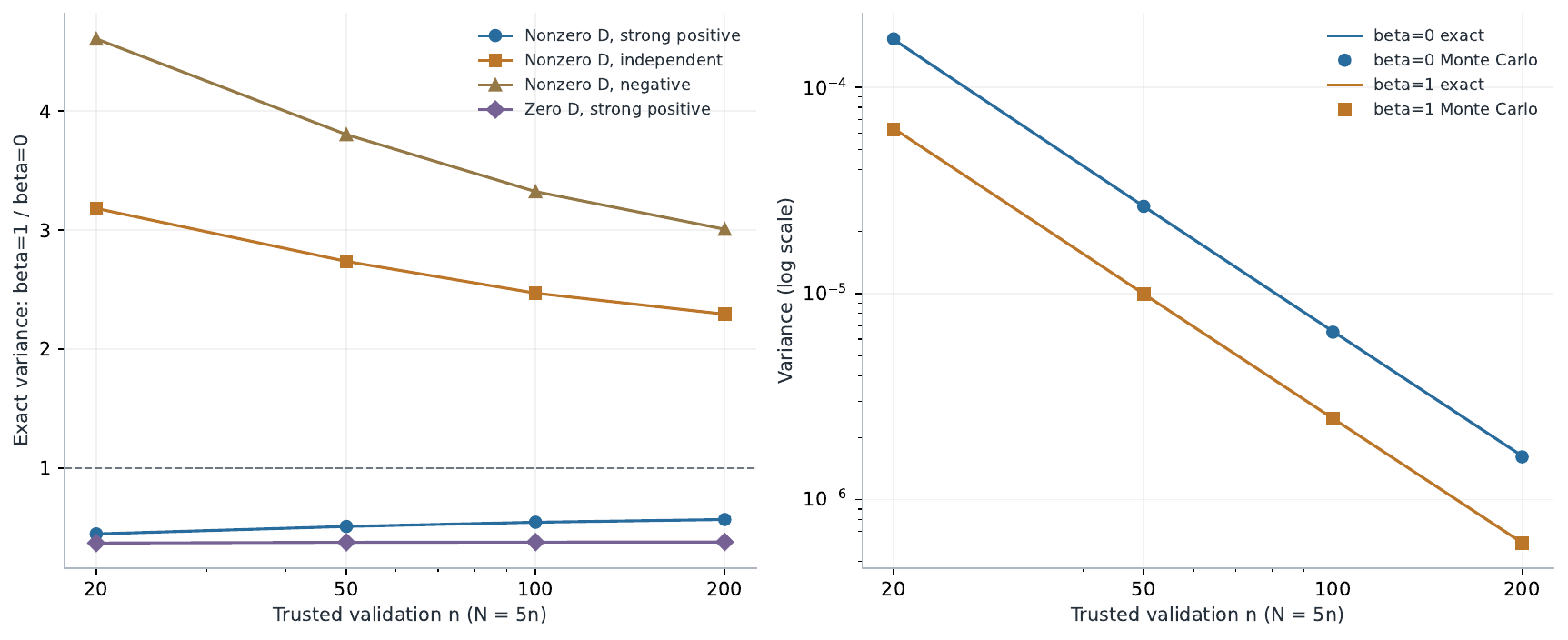}
    \caption{Variance of the unbiased quadratic discrepancy estimator. Synthetic correction helps under positive coupling and hurts when the synthetic moment is independent or negatively coupled.}
    \label{fig:profiled_estimator_variance}
\end{figure}

\section{Additional Theoretical Statements}\label{app:additional_results}
\subsection{Regularity and profiled-risk details}\label{app:profiled_proof_details}
The main text states only the deployment-relevant consequences. The profiled correction satisfies
\begin{equation}\label{eq:profiled_gamma_derivative}
D\gamma_q[h]= -\frac{\E\{f_q(X)h(X)\}}{\E f_q(X)},
\end{equation}
and the Hessian in Eq.~\eqref{eq:profiled_hessian} is positive semidefinite because
\begin{equation}\label{eq:profiled_hessian_quadratic}
\langle h,\mathcal H_qh\rangle
=\inf_{c\in\R}\E[f_q(X)\{h(X)+c\}^2]\ge0.
\end{equation}
In coverage coordinates $a_0=f_\star e$, the oracle map is the codimension-one oblique projection
\begin{equation}\label{eq:projection_operator}
(\Pi a_0)(x)=a_0(x)-\frac{f_\star(x)}{\bar f_\star}\E\{a_0(X)\}.
\end{equation}

\begin{assumption}[Local learning and deployment regularity]\label{ass:deployment_qmd}
\textit{Model and curvature.} The threshold family is fixed dimensional and twice differentiable near an interior stationary point $\theta_0$ of the trusted risk, and $J$ is positive definite. \textit{Learning expansion.} Let $P^\dagger$ denote expectation under the common paired/pool synthetic law. Uniformly over $\lambda$ in a fixed compact set,
\begin{equation}\label{eq:bahadur_main}
\widehat\theta_\lambda-\theta_0
=-J^{-1}\left[(P_n-P)(\xi-\lambda\xi^\dagger)
+\lambda(\widetilde P_N-P^\dagger)\widetilde\xi\right]+r_\lambda,
\end{equation}
where $\E\sup_\lambda\|r_\lambda\|^2=o(n^{-1}+N^{-1})$ and $\E\sup_\lambda\|\widehat\theta_\lambda-\theta_0\|^4=O\{(n^{-1}+N^{-1})^2\}$. \textit{Deployment expansion.} The map from the learned threshold and probability-scale calibration error to $\pi_{\mathcal D,\lambda}$ has the second-order $L_2(P_X)$ expansion induced by Eq.~\eqref{eq:post_conformal_map}. \textit{Sampling.} The three samples are independent, calibration residuals are continuous, $n/N\to\kappa\in[0,\infty)$, and $n/m$ converges to a positive constant.
\end{assumption}

\subsection{Information mechanisms and consequences}
\begin{lemma}[Reparameterization invariance]\label{lem:reparam_invariance}
Under any locally invertible differentiable reparameterization, the random function $\mathsf B\xi$ is unchanged. Consequently, $a$, $b$, $c$, the oracle scalar power, and its leading regret are invariant, although their coordinate matrices change.
\end{lemma}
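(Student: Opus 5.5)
Reparameterize $\theta=\phi(\eta)$ with $\phi$ locally invertible and differentiable, $\theta_0=\phi(\eta_0)$, and write $A=\nabla_\eta\phi(\eta_0)$, which is invertible. The plan is to track how each ingredient of Definition~\ref{def:post_conformal_map} and Eq.~\eqref{eq:influence_scores} transforms under this change of coordinates. We then show that the factors of $A$ cancel in $\mathsf B\xi$. Invariance of $a,b,c$ and of the power rule then follows from identities already in the paper.

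\emph{Step 1 (tangent and influence scores).} By the chain rule the new tangent is $\tilde d(x)=A^\top d(x)$. Because the centering in Eq.~\eqref{eq:projected_geometry} is linear in $d$, the centered tangent is $\tilde d_c=A^\top d_c$. The density $f_c$ and the correction $\gamma_0$ are unchanged, since they depend only on the function $q_0$, not on its coordinates. Hence Eq.~\eqref{eq:influence_scores} gives $\tilde\xi=A^\top\xi$, and likewise $\tilde\xi^\dagger=A^\top\xi^\dagger$ and $\tilde{\widetilde\xi}=A^\top\widetilde\xi$.

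\emph{Step 2 (curvature).} We have $\nabla^2_\eta\Risk(q_{\phi(\eta)})=A^\top JA+\sum_k(\partial_k\Risk)\nabla^2\phi_k$. The second term vanishes because $\theta_0$ is a stationary point, by the interior-stationarity clause of Assumption~\ref{ass:deployment_qmd}. So $\tilde J=A^\top JA$ and $\tilde J^{-1}=A^{-1}J^{-1}A^{-\top}$. This is the only place where more than linear algebra is used, and I expect it to be the main obstacle. One must verify that stationarity is indeed assumed at $\theta_0$. If the learner were penalized or restricted, so that $\theta_0$ is only stationary for a penalized objective, I would instead define $J$ as the Hessian of that objective and argue in the same way.

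\emph{Step 3 (cancellation).} For any score perturbation $v$ in old coordinates, the new coordinates give
\[
(\tilde{\mathsf B}A^\top v)(x)=f_c(x)\,d_c(x)^\top A\,A^{-1}J^{-1}A^{-\top}A^\top v=(\mathsf Bv)(x).
\]
Applied to $\xi$, $\xi^\dagger$ and $\widetilde\xi$, this shows that the random functions $\mathsf B\xi$, $\mathsf B\xi^\dagger$ and $\mathsf B\widetilde\xi$ are unchanged. By Definition~\ref{def:post_conformal_map}, $\|\mathsf Bv\|_2^2=v^\top G_cv$. Taking expectations gives the coordinate-free forms
\[
a=\E\|\mathsf B(\xi-\E\xi)\|_2^2,\qquad b=\E\langle\mathsf B(\xi-\E\xi),\mathsf B(\xi^\dagger-\E\xi^\dagger)\rangle,\qquad c=\E\|\mathsf B(\xi^\dagger-\E\xi^\dagger)\|_2^2,
\]
using the trace identity $\tr\{G_c\Cov(u,w)\}=\E\langle\mathsf B\bar u,\mathsf B\bar w\rangle$ for centered $\bar u,\bar w$. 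So $a,b,c$ are invariant. Meanwhile $\tilde G_c=A^{-1}G_cA^{-\top}$ and $\widetilde{\Var}(\xi)=A^\top\Var(\xi)A$, which shows that the coordinate matrices change.

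\emph{Step 4 (power and regret).} The oracle power $\lambda^\star=\Pi_\Lambda\{b/((1+\kappa)c)\}$ and the regret in Corollary~\ref{cor:optimal_power} depend only on $a,b,c,\kappa,n$. They are therefore invariant as well.
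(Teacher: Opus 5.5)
Your proposal is correct and follows essentially the paper's own argument. The chain rule gives $\tilde d=A^\top d$ and $\tilde\xi=A^\top\xi$. Stationarity at $\theta_0$ removes the second-derivative term, so $\tilde J=A^\top JA$, and the factors of $A$ then cancel in $\mathsf B\xi$. Finally, the coordinate-free identity $\tr\{G_c\Cov(u,w)\}=\E\langle\mathsf B\bar u,\mathsf B\bar w\rangle$ (the paper's polarization with $G_c=\mathsf B^*\mathsf B$) carries the invariance to $a,b,c$, to $\lambda^\star$, and to the regret.

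Your Step 2 worry is already settled, because the paper defines $J$ as the Hessian of the trusted risk at an interior stationary point. Your use of $\nabla^2\phi_k$ does tacitly assume a twice-differentiable $\phi$, but the paper's sketch assumes the same smoothness.
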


\begin{corollary}[Conformal null information]\label{cor:conformal_null}
For centered influences,
\begin{equation}\label{eq:projected_covariance_map}
\tr\{G_c\Cov(\xi,\xi^\dagger)\}
=\E\langle\mathsf B\xi,\mathsf B\xi^\dagger\rangle_{L_2(P_X)}.
\end{equation}
Raw covariance can therefore be nonzero while deployed covariance is zero, particularly when the shared information lies only in a global threshold-shift direction.
\end{corollary}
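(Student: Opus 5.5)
The identity follows from the definitions of $\mathsf B$ and $G_c$ in Definition~\ref{def:post_conformal_map} by Fubini; the second claim needs a constant-direction construction. For each realization of the centered influence vectors $\xi$ and $\xi^\dagger$, $\mathsf B\xi$ and $\mathsf B\xi^\dagger$ are random functions of $x$:
\[
(\mathsf B\xi)(x)=f_c(x)d_c(x)^\top J^{-1}\xi,
\qquad
(\mathsf B\xi^\dagger)(x)=f_c(x)d_c(x)^\top J^{-1}\xi^\dagger .
\]
Their $L_2(P_X)$ inner product, with $X'$ an independent copy of the covariate, equals
\[
\xi^\top J^{-1}\E_{X'}[f_c(X')^2d_c(X')d_c(X')^\top]J^{-1}\xi^\dagger=\xi^\top G_c\xi^\dagger .
\]
Since this is a scalar, it equals $\tr\{G_c\,\xi^\dagger\xi^\top\}$. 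Taking expectation over the influence draw and using linearity of trace and expectation gives $\tr\{G_c\E[\xi^\dagger\xi^\top]\}$. Centering gives $\E[\xi^\dagger\xi^\top]=\Cov(\xi^\dagger,\xi)=\Cov(\xi,\xi^\dagger)^\top$. Because $G_c$ is symmetric, $\tr\{G_cM^\top\}=\tr\{(MG_c)^\top\}=\tr\{G_cM\}$, which yields Eq.~\eqref{eq:projected_covariance_map}.

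The only technical points are integrability and the interchange of $\E$ with the $X'$-integral. Both follow from finite second moments of $\xi,\xi^\dagger$ (indicators times $d(X)$, with $d\in L_2$) and $f_c d_c\in L_2(P_X)$. These are implicit in Assumption~\ref{ass:deployment_qmd} and in the finiteness of $G_c$.

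For the second claim, I would exhibit the null direction explicitly. Suppose the tangent contains an intercept direction $e_0$ with $d(x)^\top J^{-1}v$ constant in $x$ for some $v$. Centering in Eq.~\eqref{eq:projected_geometry} then gives $d_c(x)^\top J^{-1}v=0$ for all $x$, so $\mathsf Bv=0$ and $G_cv=0$. This mirrors Proposition~\ref{prop:shift_invariance} and the constant null space of $\mathcal H_q$ in Proposition~\ref{prop:profiled_risk}.

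The example is built as follows. Take $\xi=Z v+\eta$ and $\xi^\dagger=Z v+\eta'$, with $\eta,\eta'$ independent of each other and of the scalar $Z$, and $\Var Z>0$. Then $\Cov(\xi,\xi^\dagger)=\Var(Z)vv^\top\neq0$, while $\tr\{G_c\Cov(\xi,\xi^\dagger)\}=\Var(Z)\,v^\top G_cv=0$. A concrete instance is a one-dimensional intercept-only model, $q_\theta\equiv\theta$ with $d\equiv1$. There $d_c\equiv0$, so $G_c=0$ and $b=0$, while the raw covariance $\Cov(\1(S\le q_0)-\tau,\1(S^\dagger\le q_0)-\tau)$ can be any positive number when $S$ and $S^\dagger$ are coupled.

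The main obstacle is this second part. One must check that such an example respects the structural form $\xi=\{\1(S\le q_0(X))-\tau\}d(X)$ rather than arbitrary vectors. I would handle it with the intercept-only model above, or more generally by noting that the coupling needs to act only through the global-shift component.
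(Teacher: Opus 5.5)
Your proposal is correct and follows essentially the same route as the paper. The paper obtains the identity by polarizing $\|\mathsf Bv\|_{L_2(P_X)}^2=v^\top G_cv$ (i.e.\ $G_c=\mathsf B^*\mathsf B$), which is your direct computation $\langle\mathsf B\xi,\mathsf B\xi^\dagger\rangle=\xi^\top G_c\xi^\dagger$ followed by taking expectations; it justifies the null claim by noting that a global-shift score $Jw$ with $d(X)^\top w$ constant has $d_c(X)^\top w=0$, which is your $v=Jw$. Your intercept-only model is a valid concrete instance that the paper leaves implicit, but the raw indicator covariance there is bounded by $\tau(1-\tau)$, so ``any positive number'' should read ``any positive value up to that bound.''
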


\begin{proposition}[Certified pooling versus bias-robust PPI]\label{prop:pooling_robustness}
Under exact specification and a certified synthetic joint law equal to the target law, direct pooling has leading post-conformal learning error
\begin{equation}\label{eq:certified_pooling_rate}
\frac{a}{n+N}+o(n^{-1}+N^{-1}).
\end{equation}
For a local synthetic-law path satisfying, uniformly for bounded $h$,
\begin{equation}\label{eq:local_synthetic_path}
\sqrt n\,\nabla_\theta\Risk_{Q_n}(q_{\theta_0+h/\sqrt n})=\delta+Jh+o(1),
\end{equation}
with matching derivative and variance limits, pooling incurs post-conformal squared bias
\begin{equation}\label{eq:pooling_local_bias}
\frac{\delta^\top G_c\delta}{n(1+\kappa)^2}+o(n^{-1})
\end{equation}
and leading risk
\begin{equation}\label{eq:pooling_local_risk}
\frac{a}{n+N}+\frac{\Delta^2}{n(1+\kappa)^2},\qquad \Delta^2=\delta^\top G_c\delta.
\end{equation}
It improves on $a/n$ at this local order precisely when $\Delta^2<a(1+\kappa)$. Matched paired--pool PPI remains centered at the trusted target because its two synthetic expectations cancel.
\end{proposition}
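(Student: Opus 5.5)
We treat Proposition~\ref{prop:pooling_robustness} as a bias--variance computation layered on the deployment map of Definition~\ref{def:post_conformal_map}. The first step is to write the pooled estimator as the minimizer of $(n+N)^{-1}\{nP_n\ell_q(X,S)+N\widetilde P_N\ell_q(\widetilde X,\widetilde S)\}$. Its population target solves $w\nabla_\theta\Risk(q_\theta)+(1-w)\nabla_\theta\Risk_{Q}(q_\theta)=0$ with $w=n/(n+N)\to 1/(1+\kappa)$. By the same argument that gives the learning expansion \eqref{eq:bahadur_main} in Assumption~\ref{ass:deployment_qmd}, we obtain a Bahadur representation
\[
\widehat\theta_{\rm pool}-\theta_n^\dagger=-J_w^{-1}\frac{n(P_n-P)\xi+N(\widetilde P_N-Q)\widetilde\xi}{n+N}+o_P(n^{-1/2}),
\]
where $J_w$ is the weighted curvature. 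After this, the leading post-conformal learning error is $\E\|\mathsf B(\widehat\theta_{\rm pool}-\theta_0)\|^2$, computed through $G_c$ exactly as in Theorem~\ref{thm:realized_msce_expansion}. The bias and variance parts can then be read off separately, since the cross term vanishes at leading order because the centered fluctuation has mean zero.

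For the certified case, $Q=P$ jointly, so the synthetic influence $\widetilde\xi$ has the same law as $\xi$. Exact specification gives $J_w=J$ and no bias. The two independent samples contribute $(n^2\Var\xi+N\cdot N\Var\xi)/(n+N)^2\cdot(n+N)^{-1}\cdot(n+N)$, and this collapses to $\Var(\xi)/(n+N)$. Applying $\tr\{G_c\cdot\}$ then gives $a/(n+N)$, which is \eqref{eq:certified_pooling_rate}.

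For the local path, we solve the pooled stationarity equation with $\theta=\theta_0+h/\sqrt n$. The trusted gradient is $Jh/\sqrt n+o(n^{-1/2})$ by stationarity at $\theta_0$. The synthetic gradient, by \eqref{eq:local_synthetic_path}, is $(\delta+Jh)/\sqrt n$. Hence $wJh+(1-w)(\delta+Jh)=0$, which gives $h=-(1-w)J^{-1}\delta$. Since $1-w\to\kappa/(1+\kappa)$, this has to be reconciled with the stated $1/(1+\kappa)^2$ factor. I expect the main obstacle here is matching the paper's normalization of $\delta$. My plan is to interpret $\delta$ per the $\sqrt n$-scaled pooled-risk gradient (equivalently, with $\kappa$ entering through $N/n$ weights), and to verify that the bias in $\theta$ equals $J^{-1}\delta/\{\sqrt n(1+\kappa)\}$. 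Pushing this through $\mathsf B$ yields the squared bias $\delta^\top G_c\delta/\{n(1+\kappa)^2\}$ of \eqref{eq:pooling_local_bias}. The "matching derivative and variance limits" hypothesis keeps $J_w\to J$ and the variance term equal to $a/(n+N)$ up to $o(n^{-1})$, and this gives \eqref{eq:pooling_local_risk}.

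The comparison with supervised learning is then immediate algebra. We have $a/(n+N)=a/\{n(1+\kappa^{-1}\cdots)\}$, written in the paper's convention as $a/\{n(1+\kappa)\}$. So $a/(n+N)+\Delta^2/\{n(1+\kappa)^2\}<a/n$ reduces to $\Delta^2<a(1+\kappa)$ after multiplying by $n(1+\kappa)^2$ and using $(1+\kappa)^2-(1+\kappa)=\kappa(1+\kappa)$. I will recheck this at the level of leading constants. Finally, for PPI the two synthetic expectations $P^\dagger$ and the pool law coincide along the path, so in \eqref{eq:ppi_obj_main} the $\delta$ contributions cancel. The population stationary point therefore remains $\theta_0$, which gives the robustness statement.
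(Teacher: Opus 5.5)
\paragraph{Review.} Your overall route matches the paper's proof. It uses a pooled Bahadur expansion and pooled variance $\Var(\xi)/(n+N)$. The bias is the synthetic fraction times $-J^{-1}\delta/\sqrt n$, everything is pushed through $G_c$, and PPI is centered because its two synthetic expectations cancel.

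The gap is that you have inverted $\kappa$. The paper defines $\kappa=\lim n/N$ (Assumption~\ref{ass:deployment_qmd}). So $w=n/(n+N)\to\kappa/(1+\kappa)$, and the synthetic fraction is $1-w=N/(n+N)\to 1/(1+\kappa)$. Your own stationarity equation $wJh+(1-w)(\delta+Jh)=0$ then gives $h=-J^{-1}\delta/(1+\kappa)$ directly. This is exactly the paper's shift $-J^{-1}\delta/\{(1+\kappa)\sqrt n\}$, and applying $\mathsf B$ yields \eqref{eq:pooling_local_bias}. There is nothing to reconcile.

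The remedy you propose is also not valid. $\delta$ is fixed by \eqref{eq:local_synthetic_path} as the $\sqrt n$-scaled gradient of the synthetic-law risk. If you reinterpret it as the pooled-risk gradient, the shift becomes $-J^{-1}\delta/\sqrt n$ with no $(1+\kappa)^{-1}$ factor, so that step would change the statement rather than prove it. A sanity check exposes the inversion: as $\kappa\to0$ the pool dominates and pooling is essentially synthetic ERM, so the full shift $-J^{-1}\delta/\sqrt n$ must survive. Your reading instead makes the bias vanish in that limit.

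The same inversion breaks the final comparison. Since $n+N=n(1+\kappa^{-1})$, you have $a/(n+N)=a\kappa/\{n(1+\kappa)\}$, not $a/\{n(1+\kappa)\}$. With your substitution, multiplying by $n(1+\kappa)^2$ gives $\Delta^2<a\{(1+\kappa)^2-(1+\kappa)\}=a\kappa(1+\kappa)$, as your own identity shows, and that is not the stated threshold. With the correct substitution, $a\kappa/(1+\kappa)+\Delta^2/(1+\kappa)^2<a$ is equivalent to $\Delta^2<a(1+\kappa)$.

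There is also an arithmetic slip in the certified case. The pooled score sum has variance $n\Var(\xi)+N\Var(\xi)=(n+N)\Var(\xi)$, and dividing by $(n+N)^2$ gives $\Var(\xi)/(n+N)$. The expression with $n^2+N^2$ that you wrote does not reduce to this.

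If you use $\kappa=n/N$ consistently, your argument goes through as planned and coincides with the paper's.
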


\begin{corollary}[Conservative estimable power]\label{cor:plugin_power}
Let $\widehat b-b=o_p(t_n)$, $\widehat c-c=o_p(1)$, and $\widehat\kappa\to_p\kappa$, with $t_n\to0$ and $\sqrt n\,t_n\to\infty$. For $\Lambda=[0,\lambda_{\max}]$,
\begin{equation}\label{eq:sign_gated_power}
\widehat\lambda_{\rm ad}
=\Pi_\Lambda\left\{\frac{(\widehat b-t_n)_+}{(1+\widehat\kappa)\max(\widehat c,\epsilon_n)}\right\},\qquad \epsilon_n\downarrow0.
\end{equation}
If $b\le0$ and $c>0$, the rule is asymptotically inactive. If $b,c>0$ and the oracle is interior, it converges to $\lambda^\star$ and has $o_p(n^{-1})$ leading regret.
\end{corollary}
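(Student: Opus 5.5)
The plan is to treat the two regimes separately. Both rest on three facts: the projection $\Pi_\Lambda$ onto $\Lambda=[0,\lambda_{\max}]$ is $1$-Lipschitz, the map $(\beta,\gamma,k)\mapsto\beta/\{(1+k)\gamma\}$ is continuous wherever $\gamma>0$, and the exact quadratic form of $L$ in \eqref{eq:power_quadratic}. Throughout I take $\lambda^\star=\Pi_\Lambda\{b/((1+\kappa)c)\}$ from Corollary~\ref{cor:optimal_power}. When $b\le 0$ and $c>0$, this oracle equals $0$.

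\emph{Case $b\le0$.} I will show the numerator $(\widehat b-t_n)_+$ vanishes with probability tending to one. Write $\widehat b-t_n=b+(\widehat b-b)-t_n\le (\widehat b-b)-t_n=t_n\{o_p(1)-1\}$. Since $t_n>0$, the event $\{\widehat b-t_n<0\}$ has probability tending to one. On that event the positive part is $0$, so $\widehat\lambda_{\rm ad}=\Pi_\Lambda(0)=0$. The denominator needs no control here, because $\max(\widehat c,\epsilon_n)\ge\epsilon_n>0$ keeps the ratio well defined. This is the sense of "asymptotically inactive": the procedure coincides with supervised learning, which is also the oracle power, on events of probability $\to1$. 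The role of $\sqrt n\,t_n\to\infty$ is to make the hypothesis $\widehat b-b=o_p(t_n)$ attainable for a $\sqrt n$-consistent $\widehat b$. I will remark on this but not need it further.

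\emph{Case $b,c>0$, interior oracle.} Since $t_n\to0$ and $\widehat b-b=o_p(t_n)$, we have $\widehat b-t_n\to_p b>0$, hence $(\widehat b-t_n)_+\to_p b$. Since $\widehat c\to_p c>0$ and $\epsilon_n\downarrow0$, $\max(\widehat c,\epsilon_n)\to_p c$. Together with $\widehat\kappa\to_p\kappa$, the continuous mapping theorem gives that the unclipped ratio converges in probability to $b/\{(1+\kappa)c\}$. The Lipschitz property of $\Pi_\Lambda$ then gives $\widehat\lambda_{\rm ad}\to_p\lambda^\star$. Because $\lambda^\star$ is interior, $L'(\lambda^\star)=0$ and $L$ is exactly quadratic. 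Hence \eqref{eq:lambda_regret} holds as an identity for every $\lambda\in\Lambda$, including the random $\widehat\lambda_{\rm ad}$:
\[
L(\widehat\lambda_{\rm ad})-L(\lambda^\star)=\frac{(1+\kappa)c}{n}(\widehat\lambda_{\rm ad}-\lambda^\star)^2=o_p(n^{-1}).
\]

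\emph{Main obstacle.} The delicate step is making "leading regret" refer to the deployed error at a data-dependent power, not merely to the deterministic function $L$. The estimates $\widehat b$, $\widehat c$ are computed from $(\mathcal D_L,\mathcal D_U^X)$, the same data that produce $\widehat\theta_\lambda$. I would handle this by invoking the uniformity in Theorem~\ref{thm:realized_msce_expansion}: the expansion of $\E_{\mathcal D}\|\pi_{\mathcal D,\lambda}-\pi_0\|_2^2$ holds uniformly over the compact admissible set. This in turn rests on the $\sup_\lambda$ control of $r_\lambda$ in Assumption~\ref{ass:deployment_qmd}. Evaluating the Bahadur representation \eqref{eq:bahadur_main} at $\lambda=\widehat\lambda_{\rm ad}$ therefore costs only $o(n^{-1}+N^{-1})$. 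The leading term at $\widehat\lambda_{\rm ad}$ differs from that at $\lambda^\star$ by $(\widehat\lambda_{\rm ad}-\lambda^\star)$ times an $O_p(n^{-1/2})$ linear influence term, plus the quadratic piece. Since $\widehat\lambda_{\rm ad}-\lambda^\star=o_p(1)$, both are $o_p(n^{-1})$. Final calibration does not depend on $\lambda$, so the $C_{\rm cal}\sigma^2_{m,\tau}$ term cancels in the regret. If one wants the regret in expectation rather than in probability, I would add a uniform-integrability step using the fourth-moment bound on $\widehat\theta_\lambda-\theta_0$ and boundedness of $\Lambda$.
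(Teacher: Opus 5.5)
Your proposal is correct and follows the paper's proof. When $b\le0$, the sign gate $(\widehat b-t_n)_+$ vanishes with probability tending to one; you merge the paper's separate $b<0$ (consistency) and $b=0$ ($o_p(t_n)$) cases through the single bound $\widehat b-t_n\le(\widehat b-b)-t_n$. When $b,c>0$, continuous mapping gives $\widehat\lambda_{\rm ad}\to_p\lambda^\star$, and the exact quadratic identity \eqref{eq:lambda_regret} then yields $o_p(n^{-1})$ regret.

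Your final paragraph, on the realized deployed error at a data-dependent power, goes beyond the paper. The paper reads ``leading regret'' simply as $L(\widehat\lambda_{\rm ad})-L(\lambda^\star)$, so that step is an optional strengthening rather than a needed ingredient.
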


\begin{corollary}[Oracle trusted-label allocation]\label{cor:label_allocation}
For proportional $N=n/\kappa$, trusted budget $L=n+m$, learning constant $C_{\rm learn}=\min_\lambda\{a-2b\lambda+(1+\kappa)c\lambda^2\}$, and $C_{\rm final}=\tau(1-\tau)C_{\rm cal}$,
\begin{equation}\label{eq:optimal_allocation}
n^\star=L\frac{\sqrt{C_{\rm learn}}}{\sqrt{C_{\rm learn}}+\sqrt{C_{\rm final}}},
\qquad
m^\star=L\frac{\sqrt{C_{\rm final}}}{\sqrt{C_{\rm learn}}+\sqrt{C_{\rm final}}}.
\end{equation}
\end{corollary}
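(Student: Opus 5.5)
The plan is to reduce the allocation problem to minimizing the leading term of the three-resource expansion in Theorem~\ref{thm:realized_msce_expansion} under the budget $n+m=L$. I would take the target criterion to be $\E_{\mathcal D}\|\pi_{\mathcal D,\lambda}-\pi_0\|_2^2$ from \eqref{eq:stationary_stoch_expansion}, or equivalently $\E\MSCE_{\mathcal D}$ under exact specification via Corollary~\ref{cor:exact_specification}.

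\emph{Step 1: the learning term.} With $\kappa=n/N$ held fixed (proportional pool), the constants $a,b,c$ do not depend on $n$. So $nL(\lambda)=a-2b\lambda+(1+\kappa)c\lambda^2$ is a fixed quadratic in $\lambda$. Minimizing it over the admissible power set gives $C_{\rm learn}/n$. When $c>0$ and clipping is inactive, Corollary~\ref{cor:optimal_power} gives $\lambda^\star$ and $C_{\rm learn}=a-b^2/\{(1+\kappa)c\}$. Because the expansion is uniform over the compact power set, evaluating at the optimal (or any fixed) $\lambda$ keeps the $o(n^{-1}+N^{-1}+m^{-1})$ remainder.

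\emph{Step 2: the calibration term.} Since $U_{(k)}\sim\operatorname{Beta}(k,m+1-k)$ with $k=\lceil(m+1)\tau\rceil$, I would decompose
\[
\sigma_{m,\tau}^2=\Var(U_{(k)})+\{k/(m+1)-\tau\}^2 .
\]
The Beta variance is $k(m+1-k)/\{(m+1)^2(m+2)\}=\tau(1-\tau)/m+O(m^{-2})$. The rounding bias satisfies $0\le k/(m+1)-\tau<1/(m+1)$, so its square is $O(m^{-2})$. Hence $C_{\rm cal}\sigma_{m,\tau}^2=C_{\rm final}/m+o(m^{-1})$, and the leading error becomes
\[
\Phi(n,m)=\frac{C_{\rm learn}}{n}+\frac{C_{\rm final}}{m}.
\]

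\emph{Step 3: optimize.} Set $m=L-n$ with $n\in(0,L)$. The function $\Phi$ is strictly convex on this interval whenever $C_{\rm learn},C_{\rm final}>0$, and it tends to $+\infty$ at both endpoints. The first-order condition $C_{\rm learn}/n^2=C_{\rm final}/(L-n)^2$ gives $n/(L-n)=\sqrt{C_{\rm learn}/C_{\rm final}}$, which is \eqref{eq:optimal_allocation}. The minimum value is $(\sqrt{C_{\rm learn}}+\sqrt{C_{\rm final}})^2/L$. Alternatively, the Cauchy--Schwarz bound $(n+m)(C_{\rm learn}/n+C_{\rm final}/m)\ge(\sqrt{C_{\rm learn}}+\sqrt{C_{\rm final}})^2$ gives the same value, with equality exactly at the stated allocation.

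\emph{Main obstacle.} The delicate point is the interpretation rather than the algebra. The expansion assumes $n/m$ converges to a positive constant, so the statement is really about the optimal limiting ratio $n/m$ within that proportional regime. I would therefore state the result as: any allocation sequence with $n/L\to\rho\ne\rho^\star$ has leading error exceeding the optimum by a positive multiple of $1/L$, while the remainders are uniformly $o(1/L)$. I would also handle the degenerate cases separately. If $C_{\rm learn}=0$, the formula sends all labels to calibration, which falls outside the positive-ratio regime. The constant $C_{\rm final}$ is positive automatically, because $\bar f>0$ gives $C_{\rm cal}\ge1$ by Jensen's inequality.
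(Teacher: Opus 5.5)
Your proposal is correct and follows the paper's own route: the paper's proof minimizes $C_{\rm learn}/n+C_{\rm final}/(L-n)$ and invokes the first-order condition and strict convexity, exactly as in your Step 3. Your Step 2 derives $C_{\rm cal}\sigma_{m,\tau}^2=C_{\rm final}/m+O(m^{-2})$ from the Beta variance and the rounding bound on $k/(m+1)-\tau$, and your remark restricts the conclusion to the limiting ratio $n/m$ in the proportional regime; both make explicit what the paper leaves implicit.
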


\subsection{Finite-resolution post-conformal evaluation}\label{app:quadratic_estimator}
The profiled-risk identity concerns the full conditional-coverage function. For evaluation with finitely many groups or bounded features, fix $\varphi(X)\in\R^d$ and a candidate threshold $q+\gamma$ independently of the evaluation samples. Let
\begin{equation}\label{eq:feature_moment}
Z=\1\{S\le q(X)+\gamma\}-\tau,
\qquad
\mu_\varphi=\E[Z\varphi(X)],
\qquad
D_\varphi=\|\mu_\varphi\|^2.
\end{equation}
For normalized group indicators, $D_\varphi$ is population grouped MSCE; for general features it measures only the projected component of conditional error.

Let $Z_i^\dagger$ and $\widetilde Z_j$ be paired and pool synthetic boundary indicators at the same threshold. For a fixed evaluation weight $\beta$, define
\[
A_i=\varphi(X_i)(Z_i-\beta Z_i^\dagger),
\qquad
B_j=\varphi(\widetilde X_j)\widetilde Z_j,
\]
and let $\widehat\Sigma_A,\widehat\Sigma_B$ be the unbiased sample covariance matrices.
\begin{proposition}[Unbiased synthetic-assisted quadratic discrepancy]\label{prop:quadratic_discrepancy}
If the paired and pool synthetic laws match and the two evaluation samples are independent, then
\begin{equation}\label{eq:quadratic_discrepancy_estimator}
\widehat D_\beta
=\|\bar A+\beta\bar B\|^2
-\frac{\tr(\widehat\Sigma_A)}{n_v}
-\frac{\beta^2\tr(\widehat\Sigma_B)}{N_v}
\end{equation}
is unbiased for $D_\varphi$. Writing $\Sigma_A=\Var(A_i)$ and $\Sigma_B=\Var(B_j)$, its exact variance is
\begin{align}
\Var(\widehat D_\beta)
={}&4\mu_\varphi^\top\left(\frac{\Sigma_A}{n_v}+\frac{\beta^2\Sigma_B}{N_v}\right)\mu_\varphi
+\frac{2\tr(\Sigma_A^2)}{n_v(n_v-1)}
+\frac{2\beta^4\tr(\Sigma_B^2)}{N_v(N_v-1)}\nonumber\\
&+\frac{4\beta^2\tr(\Sigma_A\Sigma_B)}{n_vN_v}.\label{eq:quadratic_discrepancy_variance}
\end{align}
\end{proposition}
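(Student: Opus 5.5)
The plan is to rewrite $\widehat D_\beta$ as a sum of U-statistics and then use the Hoeffding decomposition, whose pieces are mutually uncorrelated. Unbiasedness comes from the means of those pieces. The exact variance is the sum of their separate variances.

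\emph{Step 1 (means).} Write $\alpha=\E A_i$ and $\nu=\E B_j$. Because the paired and pool synthetic laws match, $(X_i,Z_i^\dagger)\overset d=(\widetilde X_j,\widetilde Z_j)$. Hence $\E[\varphi(X_i)Z_i^\dagger]=\nu$, so $\alpha=\mu_\varphi-\beta\nu$ and $\alpha+\beta\nu=\mu_\varphi$. This is the only place the matched-law assumption enters.

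\emph{Step 2 (U-statistic form).} I will use the algebraic identity
\[
\|\bar A\|^2-\frac{\tr(\widehat\Sigma_A)}{n_v}=\frac{1}{n_v(n_v-1)}\sum_{i\ne k}A_i^\top A_k=:U_A,
\]
which follows by expanding $(n_v-1)\widehat\Sigma_A=\sum_iA_iA_i^\top-n_v\bar A\bar A^\top$. The analogous identity holds for $U_B$. Together they give
\[
\widehat D_\beta=U_A+2\beta\bar A^\top\bar B+\beta^2U_B.
\]
By independence within and between the samples, $\E U_A=\|\alpha\|^2$, $\E U_B=\|\nu\|^2$ and $\E\bar A^\top\bar B=\alpha^\top\nu$. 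The mean is therefore $\|\alpha+\beta\nu\|^2=\|\mu_\varphi\|^2=D_\varphi$, which proves unbiasedness.

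\emph{Step 3 (Hoeffding decomposition).} Center the summands as $A_i=\alpha+a_i$ and $B_j=\nu+b_j$, and let $\bar a,\bar b$ denote the centered sample means. Substituting gives $\widehat D_\beta-D_\varphi=\mathcal T_1+\mathcal T_2+\mathcal T_3+\mathcal T_4$, where:
\begin{itemize}
\item the linear pieces from $U_A$, the cross term, and $U_B$ combine to $\mathcal T_1=2(\alpha+\beta\nu)^\top(\bar a+\beta\bar b)=2\mu_\varphi^\top(\bar a+\beta\bar b)$;
\item the degenerate part of $U_A$ is $\mathcal T_2=\{n_v(n_v-1)\}^{-1}\sum_{i\ne k}a_i^\top a_k$;
\item the degenerate part of $\beta^2 U_B$ is $\mathcal T_3=\beta^2\{N_v(N_v-1)\}^{-1}\sum_{j\ne l}b_j^\top b_l$;
\item the degenerate cross part is $\mathcal T_4=2\beta\bar a^\top\bar b$.
\end{itemize}
These four terms are pairwise uncorrelated, because every product of two different terms has an unmatched centered factor whose conditional mean is zero. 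So the variance is the sum of the four variances:
\begin{itemize}
\item $\Var(\mathcal T_1)=4\mu_\varphi^\top(\Sigma_A/n_v+\beta^2\Sigma_B/N_v)\mu_\varphi$;
\item $\Var(\mathcal T_2)=2\tr(\Sigma_A^2)/\{n_v(n_v-1)\}$, since only the ordered pairs $(i,k)$ and $(k,i)$ correlate and $\E(a_i^\top a_k)^2=\tr(\Sigma_A^2)$;
\item $\Var(\mathcal T_3)=2\beta^4\tr(\Sigma_B^2)/\{N_v(N_v-1)\}$, by the same argument;
\item $\Var(\mathcal T_4)=4\beta^2\E(\bar a^\top\bar b)^2=4\beta^2\tr(\Sigma_A\Sigma_B)/(n_vN_v)$.
\end{itemize}
Summing these reproduces Eq.~\eqref{eq:quadratic_discrepancy_variance}. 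Only second moments of the bounded indicators are needed.

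The main obstacle is the bookkeeping in Step 3. I need to confirm that all mixed covariances vanish, including third-order terms such as $\Cov(\bar a,\sum_{i\ne k}a_i^\top a_k)$, which involve $\E[a_i a_i^\top a_k]=0$ for $i\ne k$. I also need the linear pieces to collapse exactly to $\mu_\varphi$, which is the matched-law cancellation again. I would check the vanishing covariances term by term by conditioning on one index at a time.
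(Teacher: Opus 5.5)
Your proposal is correct and follows essentially the same route as the paper. Both arguments use the same corrected-squared-mean U-statistic identity, with the matched-law cancellation $\alpha+\beta\nu=\mu_\varphi$ giving unbiasedness, and both use a Hoeffding decomposition into the linear part, the two within-sample degenerate parts, and the cross part $2\beta\bar a^\top\bar b$. Your explicit step combining the linear pieces of $U_A$, $U_B$ and the cross term into $2\mu_\varphi^\top(\bar a+\beta\bar b)$ is slightly more careful than the paper's wording, which assigns the $\mu_\varphi$-weighted projection to each U-statistic separately.
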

When $\mu_\varphi=0$, the first-order term vanishes and evaluation is intrinsically second order. Synthetic correction can therefore reduce or increase selection noise; the training-stage power rule in Corollary~\ref{cor:optimal_power} does not transfer unchanged to $\beta$.

\section{Technical Results and Proofs}\label{app:proofs}

\subsection{Pinball risk and pre-conformal coverage}\label{app:proof_risk_coverage}
\begin{proof}[Proof of Proposition~\ref{prop:risk_coverage_bridge}]
Fix $x$ and write $r_x(t)=\E[\rho_\tau(S-t)\mid X=x]$. The integral identity for the pinball loss gives
\[
r_x(t)-r_x(q^\star(x))
=\int_{q^\star(x)}^t\{F_x(u)-\tau\}\,du,
\]
where an integral with reversed limits has the usual sign. Suppose first that $t\ge q^\star(x)$ and set $\delta=F_x(t)-\tau\ge0$. Since $F_x$ is nondecreasing and $L_F$-Lipschitz,
\[
F_x(u)-\tau\ge\{\delta-L_F(t-u)\}_+
\]
for $u\in[q^\star(x),t]$. Integrating over the support of the right-hand side gives
\[
r_x(t)-r_x(q^\star(x))\ge\frac{\delta^2}{2L_F}.
\]
The case $t<q^\star(x)$ is identical after setting $\delta=\tau-F_x(t)$ and reversing the integral. Hence
\[
\{F_x(t)-\tau\}^2\le2L_F\{r_x(t)-r_x(q^\star(x))\}.
\]
Substituting $t=q(x)$ and integrating over $P_X$ proves the claim.
\end{proof}

\subsection{Exact MSCE identities and probability-scale calibration}\label{app:finite_calibration}
\begin{proof}[Proof of Proposition~\ref{prop:msce_exact}]
For a fixed deployment, add and subtract $U_{\mathcal D}=\E_X\pi_{\mathcal D}(X)$ and use the orthogonality of a centered random variable and a constant:
\[
\E_X[(\pi_{\mathcal D}(X)-\tau)^2]
=\Var_X\{\pi_{\mathcal D}(X)\}+(U_{\mathcal D}-\tau)^2.
\]
Conditional on $\widehat q$, let $R=S-\widehat q(X)$ have continuous CDF $F_R$. Since
$U_{\mathcal D}=F_R(\widehat\gamma)$ and the calibration residuals are i.i.d. from $F_R$, the probability integral transform gives
$F_R(\widehat\gamma)\sim\operatorname{Beta}(k,m+1-k)$. Its mean and variance yield Eq.~\eqref{eq:beta_setup}.
\end{proof}

\begin{proof}[Proof of Eq.~\eqref{eq:realized_ensemble_decomp}]
For each fixed $x$, apply the bias--variance identity to $\pi_{\mathcal D}(x)$ around $\tau$; integrating the resulting equality over $P_X$ proves the claim.
\end{proof}

\begin{proof}[Proof of Proposition~\ref{prop:shift_invariance}]
Replacing $q$ by $q+c$ shifts every calibration residual by $-c$, hence shifts its $k$th order statistic by $-c$. The sum $q(x)+\widehat\gamma(q)$ and therefore the prediction set are unchanged.
\end{proof}

\begin{proof}[Proof of Proposition~\ref{prop:adaptive_validity}]
Conditional on $(\mathcal D_L,\mathcal D_U^X)$ and the fixed fitted objects and the auxiliary learning randomness, $\widehat\lambda$ and $\widehat q$ are fixed, while the $m$ calibration residuals and the test residual are exchangeable. Under continuity the rank is uniform. With ties, the conservative $k$th order statistic still gives the displayed lower bound; randomized ranks can recover exact rank validity but are not used for the Beta identity in Proposition~\ref{prop:msce_exact}. The infinite-set convention covers $k=m+1$.
\end{proof}

\begin{proposition}[Exact transport and a finite-sample bridge]\label{prop:finite_cal_transport}
Fix a measurable $q$ independent of calibration and let $R=S-q(X)$ have a continuous strictly increasing CDF $F_R$. Define
\[
U_{(k)}=F_R(\widehat\gamma),\quad
g_x(u)=F_{R\mid X=x}\{F_R^{-1}(u)\},\quad
u_0=F_R(0).
\]
Then $U_{(k)}\sim\operatorname{Beta}(k,m+1-k)$ and
\[
\pi_{\mathcal D}(x)=g_x(U_{(k)}),
\qquad F_{S\mid X}(q(x)\mid x)=g_x(u_0).
\]
If $\sup_r f_{R\mid X=x}(r)/f_R(r)\le\Lambda$, then, with
$v_m=\Var(U_{(k)})$ and $\tau_m=\E U_{(k)}$,
\begin{align}
\left|\E_{\mathcal D_C}\pi_{\mathcal D}(x)-F_{S\mid X}(q(x)\mid x)\right|
&\le\Lambda\{|\tau_m-u_0|+\sqrt{v_m}\},\label{eq:finite_bias_bridge}\\
\E_{\mathcal D_C}[(\pi_{\mathcal D}(x)-\tau)^2]
&\le2\{F_{S\mid X}(q(x)\mid x)-\tau\}^2
+2\Lambda^2\{(\tau_m-u_0)^2+v_m\}.\label{eq:finite_msce_bridge}
\end{align}
\end{proposition}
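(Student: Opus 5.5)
The plan is to condition on $q$ throughout and reduce every claim to one-dimensional statements about the marginal residual law. \textbf{Beta law.} Since $F_R$ is continuous and the calibration residuals $R_r^c=S_r^c-q(X_r^c)$ are i.i.d.\ from $F_R$, the variables $F_R(R_r^c)$ are i.i.d.\ uniform. Because $F_R$ is strictly increasing, it preserves order, so $U_{(k)}=F_R(\widehat\gamma)$ is the $k$th uniform order statistic. This gives the $\operatorname{Beta}(k,m+1-k)$ law, as in the proof of Proposition~\ref{prop:msce_exact}.

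\textbf{Transport identities.} For fixed $\widehat\gamma$, $\pi_{\mathcal D}(x)=\Pr\{S\le q(x)+\widehat\gamma\mid X=x\}=F_{R\mid X=x}(\widehat\gamma)$. Substituting $\widehat\gamma=F_R^{-1}(U_{(k)})$ gives $\pi_{\mathcal D}(x)=g_x(U_{(k)})$. Similarly, $F_{S\mid X}(q(x)\mid x)=F_{R\mid X=x}(0)=g_x(F_R(0))=g_x(u_0)$.

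\textbf{Lipschitz bound.} The key step is to show that $g_x$ is $\Lambda$-Lipschitz on $(0,1)$. By the chain rule, where densities exist,
\[
g_x'(u)=\frac{f_{R\mid X=x}\{F_R^{-1}(u)\}}{f_R\{F_R^{-1}(u)\}}\le\Lambda .
\]
This is the main technical point. If $F_{R\mid X=x}$ is merely absolutely continuous with respect to $F_R$, I would avoid pointwise differentiation and instead write
\[
g_x(u)-g_x(v)=\int_{F_R^{-1}(v)}^{F_R^{-1}(u)}\frac{f_{R\mid X=x}}{f_R}\,dF_R .
\]
The integrand is at most $\Lambda$ and the $dF_R$-mass of the interval is exactly $u-v$, so the Lipschitz bound follows in either case. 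The density-ratio hypothesis implicitly includes this absolute continuity.

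\textbf{Bias bound \eqref{eq:finite_bias_bridge}.} Using the Lipschitz bound,
\[
\bigl|\E g_x(U_{(k)})-g_x(u_0)\bigr|\le\Lambda\,\E|U_{(k)}-u_0|.
\]
By Cauchy--Schwarz, $\E|U_{(k)}-u_0|\le\sqrt{\E(U_{(k)}-u_0)^2}=\sqrt{(\tau_m-u_0)^2+v_m}$. Finally, $\sqrt{A+B}\le\sqrt A+\sqrt B$ gives the stated form $\Lambda\{|\tau_m-u_0|+\sqrt{v_m}\}$.

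\textbf{MSCE bound \eqref{eq:finite_msce_bridge}.} Decompose $\pi_{\mathcal D}(x)-\tau=\{g_x(U_{(k)})-g_x(u_0)\}+\{g_x(u_0)-\tau\}$ and apply $(s+t)^2\le2s^2+2t^2$. The first term satisfies
\[
\E\{g_x(U_{(k)})-g_x(u_0)\}^2\le\Lambda^2\,\E(U_{(k)}-u_0)^2=\Lambda^2\{(\tau_m-u_0)^2+v_m\}.
\]
The second term is exactly $\{F_{S\mid X}(q(x)\mid x)-\tau\}^2$, which completes the bound.
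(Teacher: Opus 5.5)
Your proposal is correct and follows the paper's proof essentially step for step. The shared steps are the probability integral transform for the Beta law and the transport identities, the density-ratio bound making $g_x$ $\Lambda$-Lipschitz, Jensen/Cauchy--Schwarz with the Beta moments for \eqref{eq:finite_bias_bridge}, and $(a+b)^2\le 2a^2+2b^2$ for \eqref{eq:finite_msce_bridge}. Your integral form of the Lipschitz step, $g_x(u)-g_x(v)=\int_{F_R^{-1}(v)}^{F_R^{-1}(u)}(f_{R\mid X=x}/f_R)\,dF_R\le\Lambda(u-v)$, is a slightly more careful version of the paper's pointwise derivative bound, since it does not require $g_x$ to be differentiable.
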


\begin{proof}
The probability integral transform gives the Beta law and the displayed identities. The density-ratio bound implies
$g_x'(u)=f_{R\mid X=x}(F_R^{-1}(u))/f_R(F_R^{-1}(u))\le\Lambda$, hence $g_x$ is $\Lambda$-Lipschitz. Jensen's inequality and the Beta moments prove Eq.~\eqref{eq:finite_bias_bridge}; applying $(a+b)^2\le2a^2+2b^2$ proves Eq.~\eqref{eq:finite_msce_bridge}.
\end{proof}

\subsection{Profiled risk and post-conformal geometry}
\begin{proof}[Proof of Proposition~\ref{prop:profiled_risk}]
Shift invariance follows by changing variables in the scalar infimum:
$\inf_\gamma\Risk(q+c+\gamma)=\inf_{\gamma'}\Risk(q+\gamma')$.
For fixed $(q,\gamma)$, the directional derivative of the pinball risk is
\[
D\Risk(q+\gamma)[h]
=\E[\{F_X(q(X)+\gamma)-\tau\}h(X)].
\]
At the locally unique minimizer $\gamma_q$, the scalar first-order condition is
$\E\{\pi_q(X)-\tau\}=0$. The envelope theorem therefore removes the derivative of $\gamma_q$ and gives Eq.~\eqref{eq:profiled_gradient}. The $L_2(P_X)$ gradient and Eq.~\eqref{eq:profiled_msce_identity} follow immediately.

For the second derivative, differentiate the calibration equation
$\E F_X\{q(X)+\gamma_q\}=\tau$ along $q+th$ at $t=0$. This gives
\[
0=\E[f_q(X)\{h(X)+D\gamma_q[h]\}],
\]
which proves Eq.~\eqref{eq:profiled_gamma_derivative}. Differentiating the gradient $\pi_q-\tau$ then yields Eq.~\eqref{eq:profiled_hessian}. Finally,
\[
\langle h,\mathcal H_qh\rangle
=\E(f_qh^2)-\frac{\{\E(f_qh)\}^2}{\E f_q}
=\inf_{c\in\R}\E[f_q(X)\{h(X)+c\}^2],
\]
where the minimizing constant is $c=-\E(f_qh)/\E f_q$. This proves positive semidefiniteness.
\end{proof}

\begin{proof}[Proof of Corollary~\ref{prop:functional_projection}]
Write $\gamma_{e,v}=\Gamma(q^\star+e,\tau+v)$ and
$z_{e,v}(x)=e(x)+\gamma_{e,v}$. A Taylor expansion of the marginal constraint, uniform in the stated neighborhood, gives
\[
v=\E[f_\star(X)z_{e,v}(X)]
+O\{(\|e\|_\infty+|\gamma_{e,v}|)^2\}.
\]
Local uniqueness and $\bar f_\star>0$ imply
$|\gamma_{e,v}|\lesssim\|e\|_\infty+|v|$. Solving the preceding equation yields
\[
\gamma_{e,v}
=-\frac{\E(f_\star e)}{\bar f_\star}
+\frac{v}{\bar f_\star}
+O\{(\|e\|_\infty+|v|)^2\}.
\]
A pointwise Taylor expansion of $F_x$ at $q^\star(x)$, together with the uniform bound on the density derivative, gives an $L_2(P_X)$ remainder of order
$(\|e\|_\infty+|v|)^2$. Substitution proves Eqs.~\eqref{eq:functional_coverage_expansion}--\eqref{eq:functional_operator}. Equation~\eqref{eq:projection_operator} follows by setting $a_0=f_\star e$.
\end{proof}

For a parametric path $e_v(x)=d(x)^\top v$, Corollary~\ref{prop:functional_projection} gives
$D\mathcal P_{(q_0,\tau)}[e_v,0]=f_c d_c^\top v$. This proves the finite-dimensional derivative asserted around Eq.~\eqref{eq:projected_geometry} and shows that $G_c$ is positive semidefinite. It can be singular because constant-shift directions are removed; all correlation statements therefore use the induced semi-inner product, or equivalently the quotient by $\ker(G_c)$.

\subsection{Stationary-target PPI asymptotics}\label{app:parametric_proofs}

\begin{assumption}[Stationary local model and uniform asymptotic linearity]\label{ass:stationary_regular}
Fix a compact admissible power set $\Lambda$. Conditional on the fixed fitted objects, the three splits and test pair are mutually independent, $(X,S^\dagger)\stackrel d=(\widetilde X,\widetilde S)$, $n/N\to\kappa\in[0,\infty)$, and $n/m\to\nu\in(0,\infty)$. Calibration residuals have a continuous marginal CDF. Ties may be handled conservatively for Proposition~\ref{prop:adaptive_validity}, but the Beta expansion below is asserted only under continuity.

The parameter space contains a neighborhood of an interior stationary point $\theta_0$; $J$ is positive definite; $q_\theta$ is twice continuously differentiable with integrable fourth-moment envelopes; and the true and synthetic boundary densities are locally regular. Uniformly over $\lambda\in\Lambda$,
\begin{equation}\label{eq:bahadur_representation}
\widehat\theta_\lambda-\theta_0
=-J^{-1}Z_\lambda+R_\lambda,
\qquad
Z_\lambda=(P_n-P)(\xi-\lambda\xi^\dagger)
+\lambda(\widetilde P_N-P^\dagger)\widetilde\xi,
\end{equation}
with
\begin{equation}\label{eq:bahadur_remainders}
\frac{\E[\sup_{\lambda\in\Lambda}\|R_\lambda\|^2]}{n^{-1}+N^{-1}}\to0,
\qquad
\E\left[\sup_{\lambda\in\Lambda}\|\widehat\theta_\lambda-\theta_0\|^4\right]
=O\{(n^{-1}+N^{-1})^2\}.
\end{equation}
\end{assumption}

\begin{remark}[Proof level of the learning expansion]\label{rem:ual_scope}
Assumption~\ref{ass:stationary_regular} is a high-level uniform asymptotic-linearity condition. The contribution of Theorem~\ref{thm:realized_msce_expansion} is the downstream post-conformal geometry and resource decomposition conditional on such a learning expansion, not a claim that arbitrary signed ERM automatically satisfies it.
\end{remark}

For the pinball risk, write
\[
g_\theta(x,s)=\{\1(s\le q_\theta(x))-\tau\}\nabla_\theta q_\theta(x).
\]

\begin{proposition}[A sufficient fixed-dimensional one-step construction]\label{prop:onestep_sufficient}
Suppose $q_\theta(x)=d(x)^\top\theta$ has fixed dimension and bounded fourth-moment feature envelope; true and synthetic conditional CDFs are continuously differentiable in a uniform local tube with Lipschitz densities; $J$ is positive definite; $\widetilde\theta$ is a root-$n$ trusted-only pilot; and $\widehat J\to_pJ$. Define
\begin{equation}\label{eq:onestep_estimator}
\widehat\theta_\lambda^{\rm os}
=\widetilde\theta-\widehat J^{-1}\left[
P_ng_{\widetilde\theta}(X,S)
+\lambda\{\widetilde P_Ng_{\widetilde\theta}(\widetilde X,\widetilde S)-P_ng_{\widetilde\theta}(X,S^\dagger)\}
\right].
\end{equation}
Under the standard fixed-dimensional quantile-score stochastic-equicontinuity conditions, Eq.~\eqref{eq:bahadur_representation} holds uniformly on compact $\Lambda$ with $R_\lambda=o_p(n^{-1/2}+N^{-1/2})$. Bounded higher moments give the expected remainder controls in Eq.~\eqref{eq:bahadur_remainders}.
\end{proposition}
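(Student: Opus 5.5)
We prove Proposition~\ref{prop:onestep_sufficient} by the classical one-step argument: expand the pilot-evaluated empirical scores around $\theta_0$, show that everything except the centred linear terms at $\theta_0$ is negligible, and read off Eq.~\eqref{eq:bahadur_representation}. Throughout we write $\Psi(\theta)=Pg_\theta(X,S)$, $\Psi^\dagger(\theta)=P^\dagger g_\theta(X,S^\dagger)$, and let $\mathbb G_n,\widetilde{\mathbb G}_N$ denote the centred empirical processes. The bracket in Eq.~\eqref{eq:onestep_estimator} then decomposes as
\begin{align*}
T_\lambda(\widetilde\theta)={}&\Psi(\widetilde\theta)+\lambda\{\Psi^\dagger(\widetilde\theta)-\Psi^\dagger(\widetilde\theta)\}\\
&+(P_n-P)g_{\widetilde\theta}(X,S)-\lambda(P_n-P^\dagger)g_{\widetilde\theta}(X,S^\dagger)+\lambda(\widetilde P_N-P^\dagger)g_{\widetilde\theta}(\widetilde X,\widetilde S).
\end{align*}
The key structural point is that the two synthetic population terms cancel exactly, because paired and pool synthetic laws coincide. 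This cancellation is what keeps the estimator centred at the trusted target for every $\lambda$.

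\textbf{Deterministic part.} Since $\theta_0$ is stationary, $\Psi(\theta_0)=0$. The conditional CDFs are $C^1$ with Lipschitz densities in a tube around $q_{\theta_0}$, and $d$ has a fourth-moment envelope. Differentiating under the integral therefore gives $\Psi(\theta)=J(\theta-\theta_0)+O(\|\theta-\theta_0\|^2)$, with $J=\E[f_{S\mid X}(q_{\theta_0}(X)\mid X)\,d(X)d(X)^\top]$. Applied at $\widetilde\theta$, this leaves a remainder $O_p(n^{-1})$.

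\textbf{Stochastic part.} We replace each empirical term at $\widetilde\theta$ by the same term at $\theta_0$, which produces exactly $(P_n-P)(\xi-\lambda\xi^\dagger)+\lambda(\widetilde P_N-P^\dagger)\widetilde\xi$. The replacement is justified by the assumed stochastic equicontinuity of the indicator-type classes $\{g_\theta\}$ over shrinking balls $\|\theta-\theta_0\|\le Mn^{-1/2}$. These are VC-type classes with $L_2$ envelope modulus $\lesssim\|\theta-\theta_0\|^{1/2}$, because the boundary densities are bounded. Hence
\[
\sup_{\|\theta-\theta_0\|\le Mn^{-1/2}}\bigl|\mathbb G_n(g_\theta-g_{\theta_0})\bigr|=o_p(1),
\]
and the same holds for $\widetilde{\mathbb G}_N$. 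After scaling, the replacement error is $o_p(n^{-1/2})+o_p(N^{-1/2})$. Because $\lambda$ enters only linearly, these bounds hold uniformly over compact $\Lambda$ with at most a factor $\sup_\Lambda|\lambda|$.

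\textbf{Assembly.} Substituting both parts, the one-step estimator becomes
\[
\widehat\theta^{\rm os}_\lambda-\theta_0=(I-\widehat J^{-1}J)(\widetilde\theta-\theta_0)-\widehat J^{-1}Z_\lambda+o_p(n^{-1/2}+N^{-1/2}).
\]
The first term is $o_p(1)\,O_p(n^{-1/2})$ because $\widehat J\to_p J$. Moreover $Z_\lambda=O_p(n^{-1/2}+N^{-1/2})$ uniformly in $\lambda$, so replacing $\widehat J^{-1}$ by $J^{-1}$ costs only $o_p$ of the same order. This yields Eq.~\eqref{eq:bahadur_representation} with $R_\lambda=o_p(n^{-1/2}+N^{-1/2})$.

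\textbf{Moment upgrade.} The expected controls in Eq.~\eqref{eq:bahadur_remainders} follow from uniform integrability. Bounded higher moments of $\sqrt n\,\widetilde\theta$, of $\widehat J^{-1}$ (for instance by truncating $\widehat J$ away from singularity), and of the envelope make $\sup_\lambda n\|R_\lambda\|^2$ uniformly integrable. Convergence in probability then upgrades to convergence in mean, and the fourth-moment bound follows from Rosenthal's inequality applied to $Z_\lambda$.

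The main obstacle is the equicontinuity step evaluated at the data-dependent point $\widetilde\theta$. The pilot uses the same trusted sample $\mathcal D_L$, so we cannot condition it away; the supremum over a shrinking ball is what handles this dependence, and it is the step where the density regularity conditions are genuinely needed. Turning this into the expected-remainder bound, rather than an $o_p$ statement, is the second delicate point, and it is exactly where the bounded-higher-moment hypothesis is used.
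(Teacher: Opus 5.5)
Your proposal is correct and follows the paper's one-step argument. In both, stationarity kills the trusted population score, and the matched paired and pool synthetic means cancel at every $\theta$. Stochastic equicontinuity over a shrinking ball then handles the data-dependent pilot $\widetilde\theta$, while consistency of $\widehat J$ and the root-$n$ pilot make the remaining terms second order, uniformly in the affine dependence on $\lambda$.

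One step in your moment upgrade needs tightening. Envelope moments alone only bound $\mathbb G_n g_{\widetilde\theta}$ by order $\sqrt n$ at the random pilot, so you need a maximal inequality for $\sup_\theta\|\mathbb G_n g_\theta\|$ over the VC-type class; this is the paper's ``fourth-moment maximal inequalities''. That same inequality is also what delivers the fourth-moment bound for $R_\lambda$, which Rosenthal applied to $Z_\lambda$ alone does not cover.
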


\begin{proof}
Expand the one-step score at $\theta_0$. The trusted population score vanishes by stationarity, while equality in law of paired and pool synthetic observations cancels their population means. Uniform stochastic equicontinuity controls replacement of $\widetilde\theta$ by $\theta_0$; consistency of $\widehat J$ and the root-$n$ pilot make the remaining terms second order. Compactness of $\Lambda$ makes the affine dependence on $\lambda$ uniform. Standard fourth-moment maximal inequalities yield the stated expected controls.
\end{proof}

\begin{proposition}[Stability boundary of signed pinball ERM]\label{prop:signed_objective_stability}
For fixed scores $s,s^\dagger$, let $g_\lambda(t)=\rho_\tau(s-t)-\lambda\rho_\tau(s^\dagger-t)$. If $\lambda>1$, then $g_\lambda(t)\to-\infty$ as $t\to\pm\infty$; if $\lambda=1$, both tails converge to finite constants and $g_1$ is not coercive. Thus the local theorem concerns a stable root or one-step update; unrestricted rich-class signed ERM may be ill posed.
\end{proposition}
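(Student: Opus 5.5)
The statement to prove is Proposition~\ref{prop:signed_objective_stability}. The plan is a direct tail computation: I will reduce $g_\lambda$ to explicit affine functions of $t$ on each tail.

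Fix $s,s^\dagger$. Recall $\rho_\tau(u)=\tau u$ for $u\ge0$ and $\rho_\tau(u)=(\tau-1)u$ for $u<0$.

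\emph{Right tail.} For $t>\max(s,s^\dagger)$ both arguments $s-t$ and $s^\dagger-t$ are negative. This gives
\[
g_\lambda(t)=(1-\tau)\{(t-s)-\lambda(t-s^\dagger)\}=(1-\tau)(1-\lambda)t+(1-\tau)(\lambda s^\dagger-s).
\]

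\emph{Left tail.} For $t<\min(s,s^\dagger)$ both arguments are positive. This gives
\[
g_\lambda(t)=\tau\{(s-t)-\lambda(s^\dagger-t)\}=\tau(\lambda-1)t+\tau(s-\lambda s^\dagger).
\]

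\emph{Case $\lambda>1$.} Assume $\tau\in(0,1)$, as throughout the paper. The slope on the right tail, $(1-\tau)(1-\lambda)$, is strictly negative, so $g_\lambda(t)\to-\infty$ as $t\to+\infty$. The slope on the left tail, $\tau(\lambda-1)$, is strictly positive, so $g_\lambda(t)\to-\infty$ as $t\to-\infty$.

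\emph{Case $\lambda=1$.} Both slopes vanish. Hence $g_1$ is eventually constant on each tail, equal to $(1-\tau)(s^\dagger-s)$ on the right and $\tau(s-s^\dagger)$ on the left. A function that is eventually constant, rather than tending to $+\infty$, is not coercive.

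\emph{Interpretive consequence.} I will argue that this pathology persists in the empirical objective \eqref{eq:ppi_obj_main} over a rich class. Choose $q$ arbitrarily at the paired trusted points and let it push the paired synthetic residual terms into their unfavorable tail. For $\lambda>1$ this drives the objective to $-\infty$. For $\lambda=1$ it makes the objective flat in those directions, so no minimizer need exist or be unique.

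This is why Assumption~\ref{ass:stationary_regular} and Proposition~\ref{prop:onestep_sufficient} are phrased in terms of a stable root or a one-step update rather than global ERM. I expect no real obstacle. The only care needed is the boundary case $\tau\in\{0,1\}$, where one of the tail slopes degenerates, and stating the rich-class remark informally rather than as a theorem.
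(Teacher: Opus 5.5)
Your proposal is correct and follows the paper's own proof: it likewise writes $g_\lambda$ as an explicit affine function of $t$ on the two tails (above both scores and below both scores) and reads off the signs of the slopes. This gives the divergence to $-\infty$ for $\lambda>1$ and the finite constant tails for $\lambda=1$; your caveat that $\tau\in(0,1)$ is needed for strict slopes and your informal rich-class remark are consistent with the paper's interpretive sentence.
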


\begin{proof}
For $t$ above both scores, $g_\lambda(t)=(1-\tau)\{(1-\lambda)t-s+\lambda s^\dagger\}$; below both scores, $g_\lambda(t)=\tau\{s-\lambda s^\dagger-(1-\lambda)t\}$. The conclusions follow directly.
\end{proof}

\begin{lemma}[Second-moment consequence]\label{lem:ppi_bahadur}
Under Assumption~\ref{ass:stationary_regular}, uniformly over $\lambda\in\Lambda$,
\begin{align}\label{eq:theta_second_moment}
\E[(\widehat\theta_\lambda-\theta_0)(\widehat\theta_\lambda-\theta_0)^\top]
={}&\frac1nJ^{-1}\Var(\xi-\lambda\xi^\dagger)J^{-1}\\
&+\frac{\lambda^2}{N}J^{-1}\Var(\widetilde\xi)J^{-1}
+o(n^{-1}+N^{-1}).\nonumber
\end{align}
\end{lemma}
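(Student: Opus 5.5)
We plan to prove Lemma~\ref{lem:ppi_bahadur} by substituting the Bahadur representation \eqref{eq:bahadur_representation} into the outer product. Writing $\widehat\theta_\lambda-\theta_0=-J^{-1}Z_\lambda+R_\lambda$, we obtain
\[
\E[(\widehat\theta_\lambda-\theta_0)(\widehat\theta_\lambda-\theta_0)^\top]
=J^{-1}\E[Z_\lambda Z_\lambda^\top]J^{-1}
-J^{-1}\E[Z_\lambda R_\lambda^\top]-\E[R_\lambda Z_\lambda^\top]J^{-1}
+\E[R_\lambda R_\lambda^\top].
\]
The first term carries the leading order, and the other three will be shown to be $o(n^{-1}+N^{-1})$ uniformly in $\lambda$.

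For the leading term, we split $Z_\lambda=A_\lambda+\lambda B$ with $A_\lambda=(P_n-P)(\xi-\lambda\xi^\dagger)$ and $B=(\widetilde P_N-P^\dagger)\widetilde\xi$. The pieces $A_\lambda$ and $B$ are functions of the independent samples $\mathcal D_L$ and $\mathcal D_U^X$, and both are mean zero. The cross term therefore vanishes exactly.

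Because $\mathcal D_L$ is i.i.d., $\E[A_\lambda A_\lambda^\top]=n^{-1}\Var(\xi-\lambda\xi^\dagger)$. Likewise $\E[BB^\top]=N^{-1}\Var(\widetilde\xi)$. Sandwiching by $J^{-1}$ gives the two displayed terms of \eqref{eq:theta_second_moment} exactly, with no remainder.

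Uniformity in $\lambda$ is immediate for this part. The expression is a quadratic polynomial in $\lambda$ whose coefficients $\Var(\xi)$, $\Cov(\xi,\xi^\dagger)$, $\Var(\xi^\dagger)$ and $\Var(\widetilde\xi)$ do not depend on $\lambda$, and $\Lambda$ is compact. These second moments are finite by the fourth-moment envelope on $\nabla_\theta q_\theta$, since the indicator factors are bounded.

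For the remainders, the pure term satisfies $\|\E R_\lambda R_\lambda^\top\|\le\E\sup_\lambda\|R_\lambda\|^2=o(n^{-1}+N^{-1})$ by \eqref{eq:bahadur_remainders}. The cross terms are the main point requiring care. We apply Cauchy--Schwarz:
\[
\|\E Z_\lambda R_\lambda^\top\|\le(\E\|Z_\lambda\|^2)^{1/2}(\E\sup_\lambda\|R_\lambda\|^2)^{1/2}.
\]
The first factor is $O((n^{-1}+N^{-1})^{1/2})$ uniformly over compact $\Lambda$, by the exact computation above. The second factor is $o((n^{-1}+N^{-1})^{1/2})$. The product is therefore $o(n^{-1}+N^{-1})$ uniformly.

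The only subtlety is that the remainder bound is stated for $\sup_\lambda$ inside the expectation. This is exactly what makes the cross-term control uniform, so no further stochastic-equicontinuity argument is needed. The fourth-moment bound in \eqref{eq:bahadur_remainders} is not required for this lemma itself. It is reserved for the later step that passes these second moments through the second-order deployment expansion.
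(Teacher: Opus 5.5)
Your proposal is correct and follows the paper's own proof. Both substitute the Bahadur representation into the outer product and compute the leading term exactly from the independence and centering of the paired and pool empirical terms. Both then make the cross and pure remainder terms negligible, uniformly over $\Lambda$, via Cauchy--Schwarz and the uniform $L_2$ remainder bound. You simply spell out the details the paper leaves implicit, and you are right that the fourth-moment control is not needed for this lemma.
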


\begin{proof}
The two empirical terms are independent and centered. Expand Eq.~\eqref{eq:bahadur_representation}; the $L_2$ remainder bound and Cauchy--Schwarz make every remainder and cross term negligible uniformly over $\Lambda$.
\end{proof}

\begin{proof}[Proof of Theorem~\ref{thm:realized_msce_expansion}]
Conditional on $\widehat\theta_\lambda$, the probability-scale calibration variable $U=F_{\widehat\theta_\lambda}(\widehat\gamma)$ is $\operatorname{Beta}(k,m+1-k)$ and independent of the learning samples. Assumption~\ref{ass:deployment_qmd} and Corollary~\ref{prop:functional_projection} give
\[
\pi_{\mathcal D,\lambda}(X)-\pi_0(X)
=f_c(X)d_c(X)^\top(\widehat\theta_\lambda-\theta_0)
+\frac{f_c(X)}{\E f_c(X)}(U-\tau)+r_X.
\]
The fourth-moment controls localize the expansion with complement probability $o(n^{-1}+N^{-1}+m^{-1})$; bounded coverage and uniform integrability handle the complement. Taking the expected squared $L_2(P_X)$ norm, applying Lemma~\ref{lem:ppi_bahadur}, and using independence yields Eq.~\eqref{eq:stationary_stoch_expansion}. Centering over the procedure replaces $\E(U-\tau)^2$ by $\Var(U)$ and gives Eq.~\eqref{eq:stationary_variance_expansion}. Corollary~\ref{cor:exact_specification} follows from $\pi_0=\tau$ and the exact MSCE decomposition in Proposition~\ref{prop:msce_exact}.
\end{proof}

\begin{proof}[Proof of Lemma~\ref{lem:reparam_invariance} and Corollary~\ref{cor:conformal_null}]
For a local linear change of coordinates $\theta=A\eta$, one has $d_\eta=A^\top d_\theta$, $J_\eta=A^\top J_\theta A$, and $\xi_\eta=A^\top\xi_\theta$. Substitution in Eq.~\eqref{eq:post_conformal_map} gives $\mathsf B_\eta\xi_\eta=\mathsf B_\theta\xi_\theta$. A differentiable reparameterization reduces to its invertible Jacobian locally. The Hessian transforms by congruence because the risk gradient vanishes at the stationary target, so the second-coordinate-derivative term from a nonlinear change of variables drops out. Equation~\eqref{eq:projected_covariance_map} follows by polarization and $G_c=(\mathsf B)^*\mathsf B$. A global-shift score $J w$ is in the kernel because $d_c(X)^\top w=0$.
\end{proof}

\subsection{Information boundary, power, and allocation}
\begin{proof}[Proof of Proposition~\ref{prop:mechanism_decomposition} and Corollary~\ref{cor:no_gain}]
The law of total covariance decomposes $\Cov(\xi,\xi^\dagger)$ into $\E\{\Cov(\xi,\xi^\dagger\mid X)\}$ and $\Cov\{\mu(X),\mu^\dagger(X)\}$. Applying $M\mapsto\tr(G_cM)$ proves Eq.~\eqref{eq:two_mechanisms}. Under exact specification, $\mu(X)=\{F_X(q^\star(X))-\tau\}d(X)=0$. Conditional independence also makes the conditional covariance zero, so $b=0$ and Eq.~\eqref{eq:power_quadratic} is minimized at zero over nonnegative powers.
\end{proof}

\begin{proof}[Proof of Proposition~\ref{prop:pooling_robustness}]
Under certified correctness, trusted and synthetic estimating scores are independent and identically distributed. The pooled score average has covariance $\Var(\xi)/(n+N)$, and the post-conformal map gives Eq.~\eqref{eq:certified_pooling_rate}. In the robust model, conditional independence invokes the zero-power conclusion of Corollary~\ref{cor:no_gain}.

For the local path, Eq.~\eqref{eq:local_synthetic_path} supplies both the first-order score displacement and the common limiting derivative $J$; the variance assumption supplies the pooled covariance limit. The synthetic fraction is $N/(n+N)=1/(1+\kappa)+o(1)$, so the pooled stationary point is shifted by
$-J^{-1}\delta/\{(1+\kappa)\sqrt n\}+o(n^{-1/2})$. Applying the post-conformal map gives Eq.~\eqref{eq:pooling_local_bias}. Adding the pooled variance $a/(n+N)$ and comparing with $a/n$ gives the threshold. If paired and pool synthetic observations follow the same $Q_n$, their population score contributions cancel in Eq.~\eqref{eq:ppi_obj_main}.
\end{proof}

\begin{proof}[Proof of Corollary~\ref{cor:optimal_power}]
Expanding the learning variance in Theorem~\ref{thm:realized_msce_expansion} gives Eq.~\eqref{eq:power_quadratic}. Differentiation and clipping yield Eq.~\eqref{eq:optimal_lambda}. Cauchy--Schwarz in the semi-inner product induced by $G_c$ gives $|b|\le\sqrt{ac}$; substitution gives the minimum, and completing the square gives Eq.~\eqref{eq:lambda_regret}.
\end{proof}

\begin{corollary}[Paired-score oracle check]\label{cor:paired_oracle_check}
If $\xi^\dagger=\xi$ almost surely and the pool has the same boundary-score law, then $a=b=c$ and
\begin{equation}\label{eq:paired_oracle_check}
\lambda^\star=\frac1{1+\kappa},
\qquad
L(\lambda^\star)=\frac{a}{n+N}.
\end{equation}
\end{corollary}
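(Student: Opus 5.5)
The plan is to substitute the paired-oracle hypotheses directly into the definitions of $a$, $b$, $c$ in Eq.~\eqref{eq:abc_defs}, and then read off the minimizer and the minimum from Corollary~\ref{cor:optimal_power}.

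First, suppose $\xi^\dagger=\xi$ almost surely. Then $\Var(\xi^\dagger)=\Cov(\xi,\xi^\dagger)=\Var(\xi)$. Applying the linear functional $M\mapsto\tr(G_cM)$ gives $a=b=c$. The pool hypothesis says $\widetilde\xi$ has the same law as $\xi^\dagger$. This ensures the pool-estimation term in Lemma~\ref{lem:ppi_bahadur} carries the same constant $c$, so that $L(\lambda)$ in Eq.~\eqref{eq:power_quadratic} takes the form
\[
L(\lambda)=\frac{a\{1-2\lambda+(1+\kappa)\lambda^2\}}{n}.
\]

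Second, take the case $a>0$. Then $c>0$, and Eq.~\eqref{eq:optimal_lambda} gives the unclipped minimizer $b/\{(1+\kappa)c\}=1/(1+\kappa)$. This lies in $(0,1]$ for $\kappa\in[0,\infty)$, so clipping is inactive whenever the admissible interval contains $[0,1]$, which I would assume as in the paper's power sets. Substituting this minimizer gives
\[
L(\lambda^\star)=\frac{a}{n}\left\{1-\frac{2}{1+\kappa}+\frac{1}{1+\kappa}\right\}=\frac{a}{n}\cdot\frac{\kappa}{1+\kappa}.
\]
Under the finite-sample identification $\kappa=n/N$, this equals $a/(n+N)$.

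There are two points that need care. The degenerate case $a=0$ makes $L\equiv0$, so the identity holds trivially, though $\lambda^\star$ is then not unique. The main obstacle is interpretive: with $\kappa$ defined as the limit of $n/N$, the equality $L(\lambda^\star)=a/(n+N)$ holds only up to $o(n^{-1})$ unless one reads $\kappa=n/N$ exactly. I would state the result with $\kappa=n/N$. I would also remark that this matches the certified-pooling rate in Proposition~\ref{prop:pooling_robustness}, which confirms that PPI recovers the full effective sample size $n+N$ when the synthetic scores are perfect.
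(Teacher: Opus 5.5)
Your proposal is correct and follows the same route the paper implicitly uses (no separate proof is given; the corollary follows by substitution into Corollary~\ref{cor:optimal_power}): $a=b=c$ gives $\lambda^\star=1/(1+\kappa)$ and $L(\lambda^\star)=a\kappa/\{(1+\kappa)n\}=a/(n+N)$ when $\kappa=n/N$. The paper also leaves three points implicit, and you handle each correctly: the degenerate case $a=0$, the requirement that clipping be inactive, and the fact that the equality is exact for $\kappa=n/N$ but holds only up to $o(n^{-1})$ when $\kappa$ is the limit of $n/N$.
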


\subsection{Cross-fitted estimation of projected boundary information}\label{app:power_estimation}
Partition the paired sample into a fixed number of folds. On each complement, fit a trusted-only pilot, nearby quantile heads for conditional boundary density, the derivative map, and a positive-semidefinite estimate of $G_c$. Evaluate both paired boundary scores and the entire synthetic pool using that fold-specific pilot. Center within fold and average the resulting bilinear forms to obtain $\widehat b$; use fold-specific pool scores for $\widehat c$.

\begin{proposition}[Consistency of cross-fitted projected constants]\label{prop:cf_bc_consistency}
If the fold-specific threshold, derivative, density, and geometry estimates converge in the corresponding $L_2/L_4$ and operator norms, the number of folds is fixed, and relevant fourth moments are uniformly bounded, then $\widehat a\to_p a$, $\widehat b\to_p b$, and $\widehat c\to_p c$.
\end{proposition}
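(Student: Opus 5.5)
The plan is to write each constant as a continuous functional of a finite collection of population moments, and then show that the cross-fitted plug-in estimates converge to those moments.

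Recall $a=\tr\{G_c\Var(\xi)\}$, $b=\tr\{G_c\Cov(\xi,\xi^\dagger)\}$ and $c=\tr\{G_c\Var(\xi^\dagger)\}$. By Corollary~\ref{cor:conformal_null}, each is the expected $L_2(P_X)$ inner product of the deployed images $\mathsf B\xi$ and $\mathsf B\xi^\dagger$. So it suffices to prove three things. First, the fold-specific estimate $\widehat G_c^{(k)}$ converges to $G_c$ in operator norm. Second, the fold-specific boundary scores are $L_2$-close to the oracle scores. Third, the within-fold empirical (cross-)covariances of the oracle scores converge by the law of large numbers.

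Fix a fold $k$. Conditional on its complement, the pilot $\widetilde\theta^{(k)}$, the density heads and the derivative map are fixed functions, and the fold observations are i.i.d. and independent of them. Define the estimated scores
\[
\widehat\xi_i=\{\1(S_i\le q_{\widetilde\theta^{(k)}}(X_i))-\tau\}\widehat d^{(k)}(X_i),
\]
and analogously $\widehat\xi_i^\dagger$ and the pool scores $\widehat{\widetilde\xi}_j$. The first step is an $L_2$ bound on $\widehat\xi-\xi$, split into two pieces:
\[
\widehat\xi-\xi=\{\1(S\le\widehat q)-\1(S\le q_0)\}\widehat d+\{\1(S\le q_0)-\tau\}(\widehat d-d).
\]
For the second piece, Cauchy--Schwarz and the assumed $L_4$ convergence of $\widehat d$ give convergence to zero. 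For the first piece, apply Hölder with the fourth-moment envelope on $\widehat d$. The indicator difference has second moment $\E|F_X(\widehat q)-F_X(q_0)|\le C\,\E|\widehat q-q_0|$, using the locally bounded boundary densities; this tends to zero by the $L_2$ convergence of the threshold. The same argument applies to the synthetic scores, using the synthetic boundary density.

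The second step handles the empirical (cross-)covariances. Conditional on the complement, centered within-fold averages of $\widehat\xi\widehat\xi^{\dagger\top}$ differ from those of $\xi\xi^{\dagger\top}$ by terms bounded by $\|\widehat\xi-\xi\|_{L_2}\|\xi^\dagger\|_{L_2}$ plus symmetric counterparts; these bounds are in expectation, and Markov's inequality converts them to probability. The oracle averages converge by the weak law of large numbers, which applies because of the fourth-moment bounds. Centering adds only products of sample means, and those converge as well. Both the paired sample and the pool scores are evaluated at the same fold-specific pilot, so the same reasoning gives $\widehat c\to_p c$ from the pool (or the paired synthetic scores); the two agree in law because $(X,S^\dagger)\overset d=(\widetilde X,\widetilde S)$.

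The third step combines the pieces. Write $\widehat b^{(k)}=\tr\{\widehat G_c^{(k)}\widehat M^{(k)}\}$. Since $|\tr(AB)|\le\|A\|_{\rm op}\tr|B|$ for finite dimension, we have $|\widehat b^{(k)}-b|\le\|\widehat G_c^{(k)}-G_c\|_{\rm op}\,\tr|\widehat M^{(k)}|+\|G_c\|_{\rm op}\,\|\widehat M^{(k)}-M\|$. Both terms are $o_p(1)$, because $\widehat M^{(k)}$ is $O_p(1)$. Averaging over a fixed number of folds preserves convergence in probability.

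The main obstacle I expect is the first step: the indicator discontinuity coupled with an estimated derivative map. Its difficulty also depends on whether $\widehat G_c$ is built from the estimated density heads in a way whose operator-norm convergence is really assumed rather than derived. I will take that convergence as a hypothesis, as the statement allows, and treat the indicator term through the conditional-CDF smoothing argument above.
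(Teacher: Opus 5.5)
Your proposal is correct and follows essentially the same route as the paper's much terser proof. The shared steps are: condition on each off-fold pilot so the fold and pool observations are i.i.d.\ from their target laws; control the indicator discontinuity through the vanishing probability of the boundary neighborhood between $\widehat q$ and $q_0$; use the fourth-moment bounds for the law of large numbers and uniform integrability; and combine with the assumed operator-norm convergence of the geometry estimate over a fixed number of folds. One detail needs tightening: locally bounded boundary densities give $|F_X(\widehat q(X))-F_X(q_0(X))|\le C|\widehat q(X)-q_0(X)|$ only inside the local tube, so on the event $\{|\widehat q(X)-q_0(X)|>\epsilon\}$ you should bound the difference by $1$ and control that event's probability via the $L_2$ convergence of the threshold.
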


\begin{proof}
Conditional on each off-fold pilot, validation observations and the independent pool obey their target laws. Conditional laws of large numbers, fold-mean consistency, indicator convergence away from vanishing boundary neighborhoods, and uniform integrability yield the required population bilinear forms and consistency.
\end{proof}

\begin{proof}[Proof of Corollary~\ref{cor:plugin_power}]
If $b<0$, consistency places $\widehat b-t_n$ below zero with probability tending to one. If $b=0$, the stronger $o_p(t_n)$ condition gives the same conclusion. If $b,c>0$, consistency and $t_n,\epsilon_n\to0$ give convergence of the unclipped ratio to $b/\{(1+\kappa)c\}$; interiority removes clipping. Equation~\eqref{eq:lambda_regret} then gives $o_p(n^{-1})$ regret.
\end{proof}

\begin{proposition}[Boundary disagreement is a coarse upper bound]\label{prop:boundary_diagnostic}
At exact specification, let $D_\tau=\1\{\1(S\le q^\star(X))\ne\1(S^\dagger\le q^\star(X))\}$. Then
\begin{equation}\label{eq:boundary_diagnostic_bound}
\tr\{G_c\Var(\xi-\xi^\dagger)\}
\le\E[D_\tau d(X)^\top G_cd(X)].
\end{equation}
This diagnostic does not identify the sign of $b$.
\end{proposition}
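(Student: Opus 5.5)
Under exact specification $\1(S\le q^\star(X))$ has conditional mean $\tau$, so $\mu(X)=0$ and $\xi,\xi^\dagger$ are centered; the same holds for $\xi^\dagger$ only after a centering caveat discussed below. The plan is to write the difference of influence scores as
\[
\xi-\xi^\dagger=\{\1(S\le q^\star(X))-\1(S^\dagger\le q^\star(X))\}\,d(X)=\epsilon\,D_\tau\,d(X),
\]
where $\epsilon\in\{-1,+1\}$ records the direction of disagreement. The identity holds pointwise because the two indicators differ by exactly $\pm1$ when they disagree and by $0$ otherwise, and the $-\tau$ terms cancel.

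The key steps are as follows.

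First, bound the variance by the second moment in the semidefinite order:
\[
\Var(\xi-\xi^\dagger)\preceq \E[(\xi-\xi^\dagger)(\xi-\xi^\dagger)^\top].
\]
Since $G_c\succeq0$ (Appendix after Corollary~\ref{prop:functional_projection}), the trace functional $M\mapsto\tr(G_cM)$ is monotone on positive semidefinite matrices. This gives
\[
\tr\{G_c\Var(\xi-\xi^\dagger)\}\le\E[(\xi-\xi^\dagger)^\top G_c(\xi-\xi^\dagger)].
\]

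Second, substitute the pointwise identity. Using $\epsilon^2=1$ and $D_\tau^2=D_\tau$, the integrand equals $D_\tau\,d(X)^\top G_c d(X)$, which is exactly the right-hand side of \eqref{eq:boundary_diagnostic_bound}. Integrability follows from the fourth-moment envelope on $d$ in Assumption~\ref{ass:stationary_regular}.

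The only subtlety is whether centering matters. The synthetic score $\xi^\dagger$ need not be centered, but the variance-versus-second-moment inequality holds regardless of centering, so no specification argument is actually needed for the inequality itself. Exact specification is used only to interpret $q_0=q^\star$, so that $d$, $G_c$ and the boundary are evaluated at $q^\star$.

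For the final remark, I would exhibit two couplings with identical disagreement law $D_\tau$ but opposite covariance sign. Here $b=\tfrac12(a+c-\tr\{G_c\Var(\xi-\xi^\dagger)\})$, so the bound controls $a+c-2b$ but not $b$ itself. For example, take $S^\dagger$ independent of $S$ given $X$ versus $S^\dagger$ an antithetic transform of $S$, both with matched disagreement probability. This example is the only step needing care: choosing the conditional laws so that the disagreement rate is equal while $b$ changes sign. I would construct it with a binary-boundary toy model in which $a=c$ is fixed and the disagreement rate is held constant. The bound then reads $2(a-b)\le\E[D_\tau d^\top G_c d]$, and any $b$ compatible with that constraint can be realized by adjusting the within-$X$ joint law.
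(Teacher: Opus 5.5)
Your proof of the inequality is the paper's own argument. It uses the pointwise identity $\xi-\xi^\dagger=\{\1(S\le q^\star(X))-\1(S^\dagger\le q^\star(X))\}d(X)$, the bound $\tr\{G_c\Var(V)\}\le\E(V^\top G_cV)$ for positive-semidefinite $G_c$, and $D_\tau^2=D_\tau$. You are also right that centering of $\xi^\dagger$ plays no role.

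The closing remark is only asserted in the paper, and your sketch for it would not work as described. Write $I=\1(S\le q^\star(X))$, $I^\dagger=\1(S^\dagger\le q^\star(X))$ and $w(X)=d(X)^\top G_cd(X)$. Since $\E\xi=0$ at exact specification, expanding the second moment gives the exact identity
\[
\E[D_\tau w(X)]=a+c+\E(\xi^\dagger)^\top G_c\,\E(\xi^\dagger)-2b.
\]
Here $a$, $c$ and $\E\xi^\dagger$ depend only on the marginal laws of $(X,S)$ and $(X,S^\dagger)$. So if the synthetic marginal is held fixed, changing the within-$X$ coupling moves the diagnostic and $b$ in lockstep, and two couplings with the same diagnostic value have the same $b$.

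This breaks your construction in two ways. First, your toy model gets $a=c$ most naturally by setting $\Pr(I^\dagger=1\mid X)=\tau$. There the bound becomes the equality $2(a-b)=\E[D_\tau w(X)]$, which identifies $b$ together with its sign. Second, the conditionally independent coupling gives $b=0$ at exact specification, not a positive $b$. So an independent-versus-antithetic comparison cannot exhibit opposite signs.

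Non-identification instead comes from the unknown synthetic marginal, which shifts the offset $c+\E(\xi^\dagger)^\top G_c\,\E(\xi^\dagger)$. For example, take $\tau=0.9$ and conditional probabilities that do not depend on $X$, and compare two labelers:
\begin{itemize}
\item $\Pr(I^\dagger=1\mid X)=0.9$ and $\Pr(I=I^\dagger=1\mid X)=0.8$;
\item $\Pr(I^\dagger=1\mid X)=0.8$ and $\Pr(I=I^\dagger=1\mid X)=0.75$.
\end{itemize}
Both disagree with the true boundary indicator with probability $0.2$ at every $X$, so the diagnostic is the same. Yet $\Cov(I,I^\dagger\mid X)$ equals $-0.01$ and $+0.03$ respectively, giving $b=-0.01\,\E w(X)$ versus $b=+0.03\,\E w(X)$.
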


\begin{proof}
The score difference is the boundary-indicator difference times $d(X)$. For positive-semidefinite $G_c$, $\tr\{G_c\Var(V)\}\le\E(V^\top G_cV)$.
\end{proof}

\begin{proof}[Proof of Corollary~\ref{cor:label_allocation}]
Minimize $C_{\rm learn}/n+C_{\rm final}/(L-n)$. The first-order condition and strict convexity yield Eq.~\eqref{eq:optimal_allocation}.
\end{proof}

\begin{proposition}[$X$-only information ceiling]\label{prop:xonly_ceiling}
In a pseudo-true regime with $S^\dagger\perp S\mid X$, let $\mu(X)=\E(\xi\mid X)$ and $a_X=\tr\{G_c\Var(\mu(X))\}$. Then
\begin{equation}\label{eq:xonly_ceiling}
a=a_X+\tr[G_c\E\{\Var(\xi\mid X)\}],
\qquad
\frac{b^2}{c}\le a_X,
\end{equation}
with the ratio zero when $c=0$. Thus a scalar $X$-only control can remove at most the between-$X$ part of the leading learning constant; a finite pool further scales the attainable reduction by $1/(1+\kappa)$.
\end{proposition}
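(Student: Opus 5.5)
The plan has two parts. The identity for $a$ comes from the law of total variance. The ceiling on $b^2/c$ comes from Cauchy--Schwarz in the $G_c$ semi-inner product, applied to the conditional means.

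\emph{Decomposition of $a$.} Apply the law of total variance to $\xi$:
\[
\Var(\xi)=\Var\{\mu(X)\}+\E\{\Var(\xi\mid X)\}.
\]
The map $M\mapsto\tr(G_cM)$ is linear, so applying it to both sides gives $a=a_X+\tr[G_c\E\{\Var(\xi\mid X)\}]$ directly. The second term is nonnegative because $G_c$ is positive semidefinite (as noted after Corollary~\ref{prop:functional_projection}) and each conditional variance is positive semidefinite.

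\emph{Reducing $b$ to the alignment term.} By Proposition~\ref{prop:mechanism_decomposition}, $b=b_{\rm res}+b_{\rm align}$. Under $S^\dagger\perp S\mid X$ the two scores $\xi$ and $\xi^\dagger$ are conditionally independent given $X$, since each is a function of $X$ and one score. Hence $\Cov(\xi,\xi^\dagger\mid X)=0$, and therefore $b_{\rm res}=0$ and
\[
b=\tr[G_c\Cov\{\mu(X),\mu^\dagger(X)\}].
\]

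\emph{Cauchy--Schwarz step.} Write $G_c=B^\top B$, with $B$ a symmetric square root. Then $b=\E\langle B\bar\mu,B\bar\mu^\dagger\rangle$, where bars denote centering. Cauchy--Schwarz in $L_2$ of random vectors gives $b^2\le a_X\,\tr[G_c\Var\{\mu^\dagger(X)\}]$. Applying the law of total variance to $\xi^\dagger$, exactly as for $\xi$, gives $\tr[G_c\Var\{\mu^\dagger(X)\}]\le c$. Combining the two yields $b^2\le a_X c$, so $b^2/c\le a_X$ whenever $c>0$.

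\emph{Degenerate case $c=0$.} The same Cauchy--Schwarz bound gives $b^2\le a_X\cdot 0$, so $b=0$, and the ratio convention sets it to zero. This is the only delicate point: the argument has to go through the semi-inner product induced by $G_c$, which may be singular, rather than through any inverse of $G_c$. Working with the square root $B$ handles this.

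\emph{Interpretation via the power quadratic.} From Corollary~\ref{cor:optimal_power}, the unclipped minimum of $a-2b\lambda+(1+\kappa)c\lambda^2$ equals
\[
a-\frac{b^2}{(1+\kappa)c}.
\]
So the attainable reduction is $b^2/\{(1+\kappa)c\}\le a_X/(1+\kappa)$. This shows that a scalar $X$-only control removes at most the between-$X$ part of the leading learning constant, and that a finite pool scales this reduction by $1/(1+\kappa)$, as stated.
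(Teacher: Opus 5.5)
Your proof is correct and follows the paper's route. Conditional independence reduces $\Cov(\xi,\xi^\dagger)$ to $\Cov\{\mu(X),\mu^\dagger(X)\}$; the law of total variance gives the decomposition of $a$ and the bound $\tr[G_c\Var\{\mu^\dagger(X)\}]\le c$; and Cauchy--Schwarz after mapping by $G_c^{1/2}$ yields $b^2\le a_X c$. Your handling of the singular-$G_c$ and $c=0$ cases, and the link to the reduction $b^2/\{(1+\kappa)c\}\le a_X/(1+\kappa)$ via Corollary~\ref{cor:optimal_power}, spell out steps the paper leaves implicit.
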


\begin{proof}
Conditional independence gives $\Cov(\xi,\xi^\dagger)=\Cov\{\mu(X),\E(\xi^\dagger\mid X)\}$. Apply the law of total variance and Cauchy--Schwarz after mapping both variables by $(G_c)^{1/2}$.
\end{proof}

\subsection{Synthetic-law mismatch}\label{app:synthetic_mismatch}
\begin{proposition}[Bias from paired--pool mismatch]\label{prop:synthetic_mismatch}
Let $A_c=\E[f_c(X)^2d_c(X)d_c(X)^\top]$, so $G_c=J^{-1}A_cJ^{-1}$. Define
\[
M_t(\theta)=\E_{{\rm pool},t}g_\theta(\widetilde X,\widetilde S)
-\E_{{\rm paired},t}g_\theta(X,S^\dagger),
\qquad \mu_{\dagger,t}=M_t(\theta_0)\to0.
\]
Assume $M_t$ is differentiable at $\theta_0$, its derivative is locally equicontinuous, and
$K_{\lambda,t}=J+\lambda\dot M_t(\theta_0)$ remains nonsingular. In the well-specified local model,
\begin{equation}\label{eq:synthetic_mismatch_parameter}
\theta_{\lambda,t}-\theta_0
=-\lambda K_{\lambda,t}^{-1}\mu_{\dagger,t}
+o(\|\mu_{\dagger,t}\|),
\end{equation}
and the post-conformal squared bias is
\begin{equation}\label{eq:synthetic_mismatch_bias}
\lambda^2\mu_{\dagger,t}^\top
K_{\lambda,t}^{-\top}A_cK_{\lambda,t}^{-1}
\mu_{\dagger,t}
+o(\|\mu_{\dagger,t}\|^2).
\end{equation}
If additionally $\|\dot M_t(\theta_0)\|\to0$, this reduces to
$\lambda^2\mu_{\dagger,t}^\top G_c\mu_{\dagger,t}+o(\|\mu_{\dagger,t}\|^2)$.
\end{proposition}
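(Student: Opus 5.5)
The plan is to treat the population limit $\theta_{\lambda,t}$ as the root of the population estimating equation attached to the prediction-powered objective \eqref{eq:ppi_obj_main}. Linearize that equation around $\theta_0$ with the implicit function theorem, then push the parameter displacement through the post-conformal influence map of Definition~\ref{def:post_conformal_map}.

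\emph{Step one: write the population score.} Under the mismatched laws indexed by $t$, the population gradient of the objective at power $\lambda$ is
\[
\Psi_{\lambda,t}(\theta)=\E g_\theta(X,S)+\lambda M_t(\theta).
\]
The trusted law does not move with $t$. Call its score $\Psi_0(\theta)=\E g_\theta(X,S)$. Then $\Psi_0(\theta_0)=0$ by stationarity, and $\nabla_\theta\Psi_0(\theta_0)=J$. The point $\theta_{\lambda,t}$ is defined as the local root of $\Psi_{\lambda,t}$ near $\theta_0$.

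\emph{Step two: expand the root.} Expand $\Psi_{\lambda,t}$ at $\theta_0$:
\[
0=\Psi_{\lambda,t}(\theta_{\lambda,t})=\lambda\mu_{\dagger,t}+\{J+\lambda\dot M_t(\theta_0)\}(\theta_{\lambda,t}-\theta_0)+\rho_t .
\]
The remainder $\rho_t$ is $o(\|\theta_{\lambda,t}-\theta_0\|)$ uniformly in $t$. This uses two facts: differentiability of $\Psi_0$ (smooth boundary densities) and the assumed local equicontinuity of $\dot M_t$, which lets the derivative be frozen at $\theta_0$ with a uniform $o(1)$ error.

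The main obstacle is to establish both that a root exists near $\theta_0$ and that $\|\theta_{\lambda,t}-\theta_0\|=O(\|\mu_{\dagger,t}\|)$. I would get both from a uniform-in-$t$ inverse-function argument. Because $K_{\lambda,t}$ stays nonsingular with, I will assume, uniformly bounded inverse, the map $\theta\mapsto\theta-K_{\lambda,t}^{-1}\Psi_{\lambda,t}(\theta)$ is a contraction on a small ball once equicontinuity keeps its derivative small there. Its fixed point lies within $C\|\lambda\mu_{\dagger,t}\|$ of $\theta_0$. Substituting this bound back into the expansion gives
\[
\theta_{\lambda,t}-\theta_0=-\lambda K_{\lambda,t}^{-1}\mu_{\dagger,t}+o(\|\mu_{\dagger,t}\|),
\]
which is \eqref{eq:synthetic_mismatch_parameter}.

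\emph{Step three: map to coverage.} In the well-specified model, $\pi_0\equiv\tau$ and $q_0=q^\star$. By Corollary~\ref{prop:functional_projection}, with $v=0$ and $e=d^\top(\theta_{\lambda,t}-\theta_0)+O(\|\theta_{\lambda,t}-\theta_0\|^2)$ under the twice-differentiable family, the population-calibrated coverage curve equals
\[
\tau+f_cd_c^\top(\theta_{\lambda,t}-\theta_0)+O(\|\theta_{\lambda,t}-\theta_0\|^2).
\]
Taking the squared $L_2(P_X)$ norm gives the bias
\[
(\theta_{\lambda,t}-\theta_0)^\top A_c(\theta_{\lambda,t}-\theta_0)+o(\|\mu_{\dagger,t}\|^2).
\]
Substituting Step two yields \eqref{eq:synthetic_mismatch_bias}; the cross terms with the $o(\cdot)$ error are negligible by Cauchy--Schwarz and boundedness of $A_c$ and $K_{\lambda,t}^{-1}$.

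\emph{Step four: the reduction.} If $\|\dot M_t(\theta_0)\|\to0$, then $K_{\lambda,t}^{-1}\to J^{-1}$ for $\lambda$ in a compact set. The difference in the quadratic form is $o(1)\cdot\|\mu_{\dagger,t}\|^2$, and $J^{-1}A_cJ^{-1}=G_c$ (noting $J$ is symmetric) gives $\lambda^2\mu_{\dagger,t}^\top G_c\mu_{\dagger,t}+o(\|\mu_{\dagger,t}\|^2)$.
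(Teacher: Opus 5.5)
Your proposal is correct and follows the paper's route. The paper also writes the population PPI gradient as $\nabla\Risk_P(\theta)+\lambda M_t(\theta)$, linearizes it at $\theta_0$ with derivative $K_{\lambda,t}$ and value $\lambda\mu_{\dagger,t}$, maps the displacement through the deployed quadratic form $v^\top A_cv$, and lets $K_{\lambda,t}\to J$; your contraction argument simply makes explicit the existence and $O(\|\mu_{\dagger,t}\|)$ size of the root that the paper's one-line ``local implicit expansion'' takes for granted. One technical point in Step three: Corollary~\ref{prop:functional_projection} bounds the remainder by $\|e\|_\infty$, so you need a bounded tangent $d$ and a uniform second-order bound on $q_\theta$, or else you should invoke the second-order $L_2(P_X)$ deployment expansion posited in Assumption~\ref{ass:deployment_qmd}, as the paper does.
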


\begin{proof}
The expected PPI gradient is $\nabla\Risk_P(\theta)+\lambda M_t(\theta)$. A local implicit expansion around $\theta_0$ has derivative $K_{\lambda,t}$ and value $\lambda\mu_{\dagger,t}$, yielding Eq.~\eqref{eq:synthetic_mismatch_parameter}. The parameter perturbation has deployed squared norm $v^\top A_cv$; substitution gives Eq.~\eqref{eq:synthetic_mismatch_bias}. The final reduction follows from $K_{\lambda,t}\to J$.
\end{proof}

For a fixed nonzero paired--pool mismatch, the population objective generally targets a shifted parameter and increasing $n$ or $N$ does not restore the trusted target. Proposition~\ref{prop:synthetic_mismatch} does not claim that its local remainder formula describes arbitrarily large mismatch; importance weighting or an explicit transport model is then required.

\subsection{Quadratic post-conformal discrepancy}
\begin{proof}[Proof of Proposition~\ref{prop:quadratic_discrepancy}]
Matched paired and pool synthetic laws imply
\[
\E(\bar A+\beta\bar B)
=\E[\varphi(X)\{Z-\beta Z^\dagger\}]
+\beta\E[\varphi(X)Z^\dagger]
=\mu_\varphi.
\]
For any i.i.d. vector sample $V_1,\ldots,V_r$ with mean $\mu_V$, the identity
\[
\left\|\bar V\right\|^2-\frac{\tr(\widehat\Sigma_V)}{r}
=\frac{1}{r(r-1)}\sum_{i\ne j}V_i^\top V_j
\]
shows that the corrected squared mean is unbiased for $\|\mu_V\|^2$. Applying this identity to the $A$ and $B$ samples and retaining the independent cross term $2\beta\bar A^\top\bar B$ proves Eq.~\eqref{eq:quadratic_discrepancy_estimator}.

For the variance, use the Hoeffding decomposition of each second-order inner-product U-statistic. Its linear projection has variance $4\mu_\varphi^\top\Sigma_A\mu_\varphi/n_v$ for the $A$ sample and $4\beta^2\mu_\varphi^\top\Sigma_B\mu_\varphi/N_v$ for the $B$ sample. The two degenerate components contribute
$2\tr(\Sigma_A^2)/\{n_v(n_v-1)\}$ and
$2\beta^4\tr(\Sigma_B^2)/\{N_v(N_v-1)\}$. Independence of the two samples makes the remaining cross component orthogonal to these terms, with variance
$4\beta^2\tr(\Sigma_A\Sigma_B)/(n_vN_v)$. Adding these mutually orthogonal contributions proves the exact variance formula in Eq.~\eqref{eq:quadratic_discrepancy_variance}.
\end{proof}

\section{Supplementary Learning-Theoretic Guarantees}\label{app:learning_theory}
The main paper gives a sharp local fixed-dimensional analysis. This section records two additional consistency results not used in Theorem~\ref{thm:realized_msce_expansion}: a global fixed-power bound and a localized nonparametric extension. They establish complementary learnability guarantees rather than the post-conformal efficiency comparisons developed above.

\subsection{A distribution-free global excess-risk bound}
Let $\rho_\tau$ be $L_\rho\le1$ Lipschitz in its prediction argument. Assume $|q(X)|\le B_q$, $|S|\le B_S$, and $|S^\dagger|,|\widetilde S|\le B_\dagger$. Set $B_0=B_q+B_S$, $B_1=B_q+B_\dagger$, and let $q_{\mathcal Q}^\star\in\argmin_{q\in\mathcal Q}\Risk(q)$.

\begin{theorem}[Uniform PPI-ERM bound]\label{thm:global_ppi_bound}
For fixed $\lambda$ and $\delta\in(0,1)$, with probability at least $1-\delta$,
\begin{align}\label{eq:global_ppi_bound}
\Risk(\widehat q_\lambda)-\Risk(q_{\mathcal Q}^\star)
\le{}&4(1+|\lambda|)L_\rho\Rad_n(\mathcal Q)
+4|\lambda|L_\rho\Rad_N(\mathcal Q)\\
&+2(B_0+|\lambda|B_1)\sqrt{\frac{\log(4/\delta)}{2n}}
+2|\lambda|B_1\sqrt{\frac{\log(4/\delta)}{2N}}.
\nonumber
\end{align}
\end{theorem}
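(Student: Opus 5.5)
The argument is the standard ERM decomposition applied to the unbiased PPI objective $\widehat\Risk_\lambda$ in \eqref{eq:ppi_obj_main}. Because $\E\widehat\Risk_\lambda(q)=\Risk(q)$ for every $q$, and $\widehat q_\lambda$ minimizes $\widehat\Risk_\lambda$ over $\mathcal Q$, I would write
\[
\Risk(\widehat q_\lambda)-\Risk(q_{\mathcal Q}^\star)
\le 2\sup_{q\in\mathcal Q}|\widehat\Risk_\lambda(q)-\Risk(q)|.
\]
The supremum splits into three centered empirical processes. By the triangle inequality,
\[
\sup_q|\widehat\Risk_\lambda-\Risk|\le \sup_q|(P_n-P)\ell_q(X,S)|+|\lambda|\sup_q|(\widetilde P_N-P^\dagger)\ell_q(\widetilde X,\widetilde S)|+|\lambda|\sup_q|(P_n-P^\dagger)\ell_q(X,S^\dagger)|.
\]
The first and third terms depend only on $\mathcal D_L$, and the second depends only on the pool.

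For the two trusted-sample terms I would treat them jointly as the single process $\sup_q|(P_n-P)\{\ell_q(X,S)-\lambda\ell_q(X,S^\dagger)\}|$, though bounding them separately gives the same constants. The steps are:
\begin{itemize}
\item Symmetrize the expectation, which gives at most $2\Rad_n$ of the class.
\item Apply Talagrand's contraction, since $q\mapsto\rho_\tau(s-q)$ is $L_\rho$-Lipschitz. This yields $2(1+|\lambda|)L_\rho\Rad_n(\mathcal Q)$.
\item Apply McDiarmid's inequality. Changing one paired observation moves the function by at most $(B_0+|\lambda|B_1)/n$, using $|\rho_\tau(u)|\le|u|$ and the boundedness assumptions. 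This gives a deviation term $(B_0+|\lambda|B_1)\sqrt{\log(4/\delta)/(2n)}$.
\end{itemize}
The pool term is handled the same way. Symmetrization and contraction give $2|\lambda|L_\rho\Rad_N(\mathcal Q)$. McDiarmid with bounded differences $|\lambda|B_1/N$ gives $|\lambda|B_1\sqrt{\log(4/\delta)/(2N)}$.

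Each one-sided event is allotted probability $\delta/4$, which accounts for $\log(4/\delta)$. A union bound over the events gives total failure probability at most $\delta$. The two-sided sup-absolute deviation is bounded by running McDiarmid on each sign, or equivalently by symmetrizing the absolute value, which symmetrization allows. Multiplying by the factor $2$ from the ERM decomposition reproduces exactly the four terms $4(1+|\lambda|)L_\rho\Rad_n$, $4|\lambda|L_\rho\Rad_N$, $2(B_0+|\lambda|B_1)\sqrt{\cdot/2n}$, and $2|\lambda|B_1\sqrt{\cdot/2N}$.

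The main obstacle is bookkeeping rather than any conceptual step. First, the negative-weight term $-\lambda P_n\ell_q(X,S^\dagger)$ means the objective is not a loss bounded below. The argument only ever uses uniform two-sided deviation, never positivity, so this is harmless; I would still note that $\widehat q_\lambda$ is assumed to be an exact minimizer over $\mathcal Q$, since Proposition~\ref{prop:signed_objective_stability} shows one may fail to exist. Second, the bounded-difference constant must be taken as $B_0+|\lambda|B_1$ for the joint paired process, not as a sum of separately split constants, so that the confidence terms match the displayed form. Third, the contraction step must be applied to the absolute-value supremum, which costs at most the stated factor.
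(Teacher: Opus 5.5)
Your proposal is correct and is essentially the paper's own (very terse) argument: the ERM reduction, a split into the joint paired process $(P_n-P)\{\ell_q(X,S)-\lambda\ell_q(X,S^\dagger)\}$ and the independent pool process, then symmetrization, contraction, McDiarmid, and a union bound over four one-sided events at level $\delta/4$, which reproduces every constant (treating the two trusted terms separately would raise the event count above four). One caution: the constants come out exactly only on your sign-by-sign route with the one-sided Ledoux--Talagrand contraction, whereas the standard absolute-value form of the contraction lemma carries an extra factor of $2$ and needs $\phi_i(0)=0$, which fails here since $\rho_\tau(S_i)\ne0$.
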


\begin{proof}
The deviation from the true risk is the sum of a paired empirical process and an independent pool process. The ERM reduction, symmetrization, contraction, bounded-difference concentration, and a union bound give Eq.~\eqref{eq:global_ppi_bound}.
\end{proof}
This bound establishes consistency rather than dominance; the sharp local improvement criterion is given by Corollaries~\ref{cor:no_gain} and~\ref{cor:optimal_power}.

\subsection{A radius-dependent nonparametric extension}\label{app:nonparametric}
Let $q^\star\in\mathcal Q$ and $\mathcal E(q)=\Risk(q)-\Risk(q^\star)$. For $\lambda=1$, define
\[
\Delta_q=(\ell_q-\ell_{q^\star})(X,S)-(\ell_q-\ell_{q^\star})(X,S^\dagger),
\qquad
\widetilde g_q=(\ell_q-\ell_{q^\star})(\widetilde X,\widetilde S).
\]
Knight's identity gives
\[
\Delta_q=\int_{q^\star(X)}^{q(X)}\{\1(S\le u)-\1(S^\dagger\le u)\}\,du.
\]
Let $D_q$ indicate disagreement somewhere on this segment, so
$|\Delta_q|\le|q-q^\star|D_q$. Assume $|q-q^\star|\le B_Q$ and local curvature
$\mathcal E(q)\ge c_0\|q-q^\star\|_2^2$. Define
\begin{equation}\label{eq:eta_radius}
\eta(r)=\sup_{\substack{q:\mathcal E(q)\le r\\q\ne q^\star}}
\frac{\E[(q-q^\star)^2D_q]}{\E[(q-q^\star)^2]}.
\end{equation}

\begin{assumption}[Localized VC-type deviations]\label{ass:localized_vc}
There are constants $A\ge e,D\ge1,C_0$ such that, simultaneously for relevant $r$ and with probability at least $1-\delta$,
\begin{align}\label{eq:localized_deviations}
\sup_{\mathcal E(q)\le r}|(P_n-P)\Delta_q|
&\le C_0\left[\sqrt{\frac{\eta(r)rV_n}{c_0n}}+\frac{B_QV_n}{n}\right],\\
\sup_{\mathcal E(q)\le r}|(\widetilde P_N-P^\dagger)\widetilde g_q|
&\le C_0\left[\sqrt{\frac{rV_N}{c_0N}}+\frac{B_QV_N}{N}\right],
\nonumber
\end{align}
where $V_n=D\log(An)+\log(4/\delta)$ and $V_N=D\log(AN)+\log(4/\delta)$.
\end{assumption}

\begin{theorem}[Radius-dependent fixed point]\label{thm:nonparam_fixed_point}
If $r_\star$ satisfies, for every $r\ge r_\star$,
\begin{equation}\label{eq:nonparam_fixed_condition}
C_0\left[\sqrt{\frac{\eta(r)rV_n}{c_0n}}+\sqrt{\frac{rV_N}{c_0N}}
+\frac{B_QV_n}{n}+\frac{B_QV_N}{N}\right]\le r/4,
\end{equation}
then with probability at least $1-\delta$,
$\mathcal E(\widehat q_1)\le r_\star$.
If $\eta(r)\le\bar\eta$ locally, one may take
\begin{equation}\label{eq:nonparam_rate_new}
r_\star\lesssim
\frac{\bar\eta V_n}{c_0n}+\frac{V_N}{c_0N}
+\frac{B_QV_n}{n}+\frac{B_QV_N}{N}.
\end{equation}
\end{theorem}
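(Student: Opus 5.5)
The plan is a localized ERM argument with a radius-dependent fixed point, using the unbiased PPI objective. At $\lambda=1$, write the empirical excess objective as
\[
\widehat\Risk_1(q)-\widehat\Risk_1(q^\star)=P_n\Delta_q+\widetilde P_N\widetilde g_q+P_n(\ell_q-\ell_{q^\star})(X,S^\dagger)-P_n(\ell_q-\ell_{q^\star})(X,S^\dagger).
\]
More usefully, use the representation $P_n[(\ell_q-\ell_{q^\star})(X,S)-(\ell_q-\ell_{q^\star})(X,S^\dagger)]+\widetilde P_N\widetilde g_q=P_n\Delta_q+\widetilde P_N\widetilde g_q$. Because $(X,S^\dagger)\overset d=(\widetilde X,\widetilde S)$, we have $P\Delta_q+P^\dagger\widetilde g_q=\mathcal E(q)$. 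Hence
\[
\widehat\Risk_1(q)-\widehat\Risk_1(q^\star)=\mathcal E(q)+(P_n-P)\Delta_q+(\widetilde P_N-P^\dagger)\widetilde g_q .
\]
Since $\widehat q_1$ minimizes $\widehat\Risk_1$ over $\mathcal Q\ni q^\star$, the left side is $\le0$ at $\widehat q_1$. This yields the basic inequality
\[
\mathcal E(\widehat q_1)\le|(P_n-P)\Delta_{\widehat q_1}|+|(\widetilde P_N-P^\dagger)\widetilde g_{\widehat q_1}|.
\]

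Next comes the peeling/localization step. On the event of Assumption~\ref{ass:localized_vc}, which holds simultaneously for all relevant $r$ with probability $1-\delta$, suppose $\mathcal E(\widehat q_1)>r_\star$. Set $r=\mathcal E(\widehat q_1)$. Then $\widehat q_1$ lies in the class $\{\mathcal E(q)\le r\}$, so the two suprema bound the right side of the basic inequality by the bracket in \eqref{eq:nonparam_fixed_condition}. By the fixed-point hypothesis with $r\ge r_\star$, this is at most $r/4$. That gives $r\le r/4$, a contradiction since $r>0$. Therefore $\mathcal E(\widehat q_1)\le r_\star$.

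The main obstacle is that $r=\mathcal E(\widehat q_1)$ is random, so the deviation bounds must hold uniformly in $r$. The assumption's ``simultaneously for relevant $r$'' is what I would invoke directly. If it instead held only for fixed $r$, I would apply it on a geometric grid $r_j=2^jr_\star$ with a union bound. Using monotonicity of the sup in $r$, the bracket at $r_{j+1}\le2r$ would be at most $r_{j+1}/4\le r/2<r$, which still contradicts the basic inequality. The $\log$ of the grid size would be absorbed into $V_n,V_N$. I would also check that $\eta(r)$ is nondecreasing in $r$ by definition as a sup over a growing set, so evaluating at the grid endpoint is legitimate.

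For the explicit rate \eqref{eq:nonparam_rate_new}, replace $\eta(r)$ by $\bar\eta$. It then suffices that each of the four terms is $\le r/16$. The two square-root terms give $\sqrt{\bar\eta rV_n/(c_0n)}\le r/16 \iff r\ge 256C_0^2\bar\eta V_n/(c_0n)$, and similarly $r\ge256C_0^2V_N/(c_0N)$. The linear terms give $r\ge16C_0B_QV_n/n$ and $r\ge16C_0B_QV_N/N$. Taking $r_\star$ to be the maximum (hence at most the sum) of these four thresholds makes the condition hold for every $r\ge r_\star$, since each square-root term grows only like $\sqrt r$. This proves the displayed order with constants depending on $C_0$.
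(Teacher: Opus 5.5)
Your proposal is correct and follows essentially the paper's route. The unbiased PPI decomposition $\widehat\Risk_1(q)-\widehat\Risk_1(q^\star)=\mathcal E(q)+(P_n-P)\Delta_q+(\widetilde P_N-P^\dagger)\widetilde g_q$ gives the ERM basic inequality; localization under Assumption~\ref{ass:localized_vc} plus the fixed-point condition rules out $\mathcal E(\widehat q_1)>r_\star$; and the rate follows from solving each of the four terms against $r/16$. The only difference is that you evaluate the simultaneous-in-$r$ event directly at the random radius $r=\mathcal E(\widehat q_1)$, whereas the paper peels over dyadic excess-risk shells, which is exactly the fallback argument you sketch.
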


\begin{proof}
The ERM basic inequality bounds $\mathcal E(\widehat q_1)$ by the two empirical deviations. Applying Assumption~\ref{ass:localized_vc} on dyadic excess-risk shells excludes every shell above $r_\star$ via Eq.~\eqref{eq:nonparam_fixed_condition}; solving the resulting inequalities gives Eq.~\eqref{eq:nonparam_rate_new}.
\end{proof}

\begin{corollary}[A local bridge from nonparametric risk to coverage]\label{cor:nonparam_postconformal}
Under the smoothness conditions of Corollary~\ref{prop:functional_projection} and local curvature, suppose additionally that
$\|\widehat q_1-q^\star\|_\infty\le\varepsilon_0$ on the event of interest, where $\varepsilon_0$ is inside the proposition's local neighborhood. Then
\begin{equation}\label{eq:nonparam_postconformal}
\|\mathcal P(\widehat q_1,\tau)-\tau\|_2^2
\le C\,\mathcal E(\widehat q_1).
\end{equation}
Consequently, if the required sup-norm localization also holds with high probability, the population-conformalized conditional error is $O_{\Pr}(r_\star)$. Independent finite calibration is governed separately by Proposition~\ref{prop:msce_exact} and Proposition~\ref{prop:finite_cal_transport}; it is not absorbed into the learning-risk rate. The result follows by applying Corollary~\ref{prop:functional_projection} to $e=q-q^\star$, using $\|e\|_4^2\le\varepsilon_0\|e\|_2$, and then invoking local curvature.
\end{corollary}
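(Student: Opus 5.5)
The plan is to reduce the claim to the local expansion of Corollary~\ref{prop:functional_projection} with $v=0$, and then to the curvature condition. Set $e=\widehat q_1-q^\star$ on the event of interest, so that $\|e\|_\infty\le\varepsilon_0$ lies inside the neighborhood where that corollary applies. Its remainder bound $C(\|e\|_\infty+|v|)^2$ is too crude: it does not produce an $L_2$-rate. I will therefore reopen its proof and keep the remainder in a finer, pointwise form. Write $\gamma=\Gamma(q^\star+e,\tau)$.

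First, the pointwise Taylor expansion of $F_x$ at $q^\star(x)$, using the uniform bound on the density derivative, gives
\[
\bigl|\mathcal P(q^\star+e,\tau)(x)-\tau-f_\star(x)\{e(x)+\gamma\}\bigr|\le C\{e(x)^2+\gamma^2\}.
\]
Taking the $L_2(P_X)$ norm of this remainder gives at most $C(\|e\|_4^2+\gamma^2)$.

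Second, I control the scalar correction. The marginal constraint $\E F_X\{q^\star(X)+e(X)+\gamma\}=\tau$, expanded in the same way, gives
\[
\bar f_\star\gamma=-\E(f_\star e)+O\bigl(\E e^2+\gamma^2\bigr).
\]
Local uniqueness gives $|\gamma|\lesssim\|e\|_\infty\le\varepsilon_0$. Shrinking $\varepsilon_0$ if necessary, the $\gamma^2$ term can be absorbed into the left side, which yields $|\gamma|\lesssim\|e\|_2$.

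Third, I combine these bounds. With bounded $f_\star$, the linear part satisfies $\|\mathcal Le\|_2\le 2\sup f_\star\,\|e\|_2$, and the term $f_\star\gamma$ is likewise $O(\|e\|_2)$. For the quadratic term, note that
\[
\|e\|_4^2=\sqrt{\E e^4}\le\varepsilon_0\|e\|_2 .
\]
Together these give $\|\mathcal P(\widehat q_1,\tau)-\tau\|_2\le C'\|e\|_2$. Squaring and invoking local curvature, $\mathcal E(\widehat q_1)\ge c_0\|e\|_2^2$, yields Eq.~\eqref{eq:nonparam_postconformal} with constant $C=C'^2/c_0$.

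The consequence then follows from Theorem~\ref{thm:nonparam_fixed_point}. Intersect its high-probability event with the sup-norm localization event; on that intersection $\mathcal E(\widehat q_1)\le r_\star$, so the population-conformalized error is $O_{\Pr}(r_\star)$. Finite calibration never enters, because $\mathcal P$ uses the population correction $\Gamma$; that term is handled separately by Propositions~\ref{prop:msce_exact} and~\ref{prop:finite_cal_transport}.

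The main obstacle is the first step. I need to confirm that the corollary's smoothness conditions really yield a pointwise quadratic remainder $e(x)^2+\gamma^2$ rather than only a sup-norm one. I also need the $\gamma$-expansion to close in $L_2$ without circularity.

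If this turns out to be delicate, there is a cruder fallback that avoids Taylor remainders altogether. By $L_F$-Lipschitzness, $|\mathcal P(x)-\tau|\le L_F(|e(x)|+|\gamma|)$. Monotonicity of the constraint in $\gamma$, together with $\bar f_\star>0$ locally, bounds $|\gamma|$ by a multiple of $\|e\|_1\le\|e\|_2$. This again gives $\|\mathcal P-\tau\|_2\lesssim\|e\|_2$, and the remaining steps are unchanged.
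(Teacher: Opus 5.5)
Your proposal is correct and follows the paper's own argument. You apply Corollary~\ref{prop:functional_projection} with $v=0$ to $e=\widehat q_1-q^\star$, bound the quadratic remainder via $\|e\|_4^2\le\varepsilon_0\|e\|_2$, and finish with local curvature; you also rightly make explicit what the paper's one-line proof leaves implicit, namely that this needs the pointwise remainder $C\{e(x)^2+\gamma^2\}$ and the bound $|\gamma|\lesssim\|e\|_2$ rather than the sup-norm remainder $C(\|e\|_\infty+|v|)^2$ as stated, and your Lipschitz-plus-monotonicity fallback is also valid and slightly more elementary since it bypasses the fourth-moment step.
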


\end{document}